\newtheorem{theorem}{Theorem}
\newtheorem{remark}{Remark}
\newtheorem{definition}{Definition}
\newtheorem{lemma}{Lemma}
\newtheorem{corollary}{Corollary}
\newtheorem{assumption}{Assumption}
\begin{document}
\graphicspath{{figures/}}
%
\title{A Novel Vector-Field-Based Motion Planning Algorithm for 3D Nonholonomic Robots}
%
%
%

\author{Xiaodong~He, Weijia Yao, Zhiyong Sun and
        Zhongkui~Li
\thanks{X. He and Z. Li are with the State Key Laboratory for Turbulence and Complex Systems, Department of Mechanics and Engineering Science, College of Engineering, Peking University, Beijing 100871, China (e-mail: hxdupc@pku.edu.cn; zhongkli@pku.edu.cn)

W. Yao is with the Institute of Engineering and Technology (ENTEG), University of Groningen, 9747 AG Groningen, The Netherlands (email: w.yao@rug.nl)

Z. Sun is with the Department of Electrical Engineering, Eindhoven University of Technology, 5612 AZ Eindhoven, The Netherlands (email: sun.zhiyong.cn@gmail.com)}
}

\maketitle

\begin{abstract}
This paper focuses on the motion planning for mobile robots in 3D, which are modelled by 6-DOF rigid body systems with nonholonomic kinematics constraints. We not only specify the target position, but also bring in the requirement of the heading direction at the terminal time, which gives rise to a new and more challenging 3D motion planning problem. The proposed planning algorithm involves a novel velocity vector field (VF) over the workspace, and by following the VF, the robot can be navigated to the destination with the specified heading direction. In order to circumvent potential collisions with obstacles and other robots, a composite VF is designed by composing the navigation VF and an additional VF tangential to the boundary of the dangerous area. Moreover, we propose a priority-based algorithm to deal with the motion coupling issue among multiple robots. Finally, numerical simulations are conducted to verify the theoretical results.
\end{abstract}

\begin{IEEEkeywords}
Motion planning, nonholonomic constraint, vector field, obstacle avoidance, collision avoidance.
\end{IEEEkeywords}

%
\IEEEpeerreviewmaketitle

\section{Introduction}

As a fundamental problem in robotics and control, motion planning refers to finding a path or trajectory which guides robots from an initial position to a goal position, without collisions with obstacles or other robots. Motion planning algorithms have been employed in a wide range of robotic systems, e.g., planar mobile robots \cite{Wang2020EB_RRT,Li2021Efficient,Zhao2021Pareto}, marine vessels \cite{Zhou2022A_guidance,Hassani2018Path,Pedersen2012Marine}, and quadrotors \cite{Zhou2021Raptor,Zhou2021Ego_planner,Tordesillas2022MADER}. Generally, to reduce weight and cost, robotic systems are often designed with fewer actuators than the degrees of freedom (DOF),
resulting in underactuated systems. From the kinematic level of mobile robots in 3D, a typical form of underactuation is that the robot modelled by a 6-DOF rigid body has no sway and heave velocity, while only one surge velocity and three angular velocities serve as control inputs. Such a system model is referred to as a 3D nonholonomic rigid body, and can be utilized to describe the kinematics of various robotic systems, such as fixed-wing UAVs \cite{Wang2019Coordinated,Levin2019Real_time,Fari2020Addressing,He_arXiv_roto} and autonomous underwater vehicles \cite{Egeland1996Feedback,Li2013Finite,Li2017Receding,He2022Exponential}.

The motion model of a 3D nonholonomic mobile robot can be described by a nonholonomic constrained rigid body with six DOFs. Resulting from the nonholonomic constraints, the heading direction of the robot is restricted and always points to the $x$-axis of the body-fixed frame \cite{Beard2012Small}. Thus, we are motivated to take into account not only the goal position but also the heading direction of the robot in motion planning. For instance, in real-world scenarios, the heading directions of multiple missiles are typically specified in the terminal guidance so as to realize a better performance of coordinated attack. Similarly, in surveillance tasks performed by multi-UAV systems, the final orientation of each UAV should point to a certain direction to obtain the largest overall surveillance area. Therefore, from the perspectives of theory and practice, it is of great importance to investigate the motion planning problem with a specified position and a heading direction.

However, it is not trivial to simultaneously plan the position and heading direction of 3D nonholonomic robots, and the main challenge arises from the underactuation caused by the nonholonomic constraints. On one hand, since the linear velocity is restricted to the heading direction, one should adjust the heading direction towards the destination such that the robot can move towards the goal position. On the other hand, the heading direction is not only employed for reaching the destination, but also has to satisfy the requirement of terminal direction, which demonstrates the underactuated characteristics of 3D nonholonomic robots. Besides, compared to the 2D case which has only one rotational DOF, the rotation control in 3D is more complicated due to the coupling among three different rotational DOFs, leading to a more challenging motion planning problem.

In the literature, there exist several commonly-used methodologies for motion planning, including the roadmap method \cite{Kavraki1996Probabilistic,Bhattacharya2008Roadmap,Lehner2018Repetition}, the cell decomposition approach \cite{Cai2009Information,Zhang2008Efficient,Cowlagi2012Multiresolution} and the sampling-based algorithm \cite{Karaman2011Sampling,Jaillet2010Sampling,Oh2021Chance-Constrained}. But these methods cannot be directly applied to the nonholonomic robotic systems. Although certain optimization-based algorithms can handle the nonholonomic constraints, such as \cite{Hussein2008Optimal,Hausler2016Energy,Cichella2021Optimal,Zhao2022Scalable,Li2021Optimal,Bloch2021Dynamic}, the feasibility of the optimization problem cannot be guaranteed or it suffers from heavy computational burden once the nonholonomic constraints are taken into consideration. Besides, the control inputs derived from these optimization-based algorithms are open-loop, relying only on time, thereby not robust to disturbances.

To overcome the above-mentioned limitations, we utilize the vector field (VF) method to solve the motion planning problem for the 3D nonholonomic robots. Specifically, a velocity VF is defined over the workspace and the integral curve of the VF converges to the goal point. Compared to the optimization-based method, the VF motion planning specifies a reactive feedback in the sense that it is only related to the current state while path replanning is not required. Furthermore, the VF directly specifies the heading direction at each point, which provides a reference to adjust the attitude of the robot so as to follow such a heading direction. In this way, the challenge caused by the nonholonomic constraints can be suitably handled via the VF method. Although several works have employed VF in the motion planning problem, such as \cite{Lindemann2009Simple,Panagou2017A_distributed,Marchidan2020Collision,Kapitanyuk2018A_Guiding,Yao2022Guiding,He_arXiv_simul}, most of them focus on particle agents in 2D instead of rigid bodies in 3D. Additionally, the heading direction of the nonholonomic robot is rarely considered in the existing VF-based motion planning results.

Therefore, in this paper, we investigate the motion planning problem of 3D nonholonomic robots via the VF method, where the terminal heading direction is taken into account. Moreover, the robot is modelled by a 6-DOF nonholonomic constrained rigid body rather than a particle agent. The contributions of this paper are given below.
\begin{enumerate}
  \item Concerning the nonholonomic robot moving in an obstacle-free environment, we design a navigation VF (N-VF) which converges to any desired target position. More importantly, the N-VF points to a specified direction at the destination. Thus, by moving along the N-VF, the nonholonomic robot can reach the target position with the specified heading direction.
  \item An obstacle avoidance VF (OA-VF) is proposed for motion planning in an obstacle-cluttered environment. The shape of the obstacle  is described by the level surface of an implicit function, and the obstacle can be either static or moving. The OA-VF is composed of the N-VF and an additional  VF tangential to the obstacle surface, and the composite OA-VF is free of any singularity.
  \item For motion planning of multiple nonholonomic robots, a collision avoidance VF (CA-VF) is proposed to evade the collisions among nonholonomic robots. In order to reduce the motion coupling in collision avoidance, we propose a priority-based algorithm where the movement of higher prioritized nonholonomic robots is independent of the lower ones, so as to achieve the motion decoupling.
  \item To apply the proposed VF to the 3D rigid body, we design another two auxiliary VFs which are always orthogonal to the N/OA/CA-VF. Moreover, such two auxiliary VFs are also orthogonal to each other. Therefore, these three VFs construct the basis of a body-fixed frame in $\mathbb{R}^3$, which can be further utilized as a reference for the attitude control of 3D nonholonomic robots.
\end{enumerate}

To the best of our knowledge, it is the first time that the motion planning problem of 6-DOF nonholonomic rigid-body robots in 3D is investigated taking into account both specified terminal positions and heading directions.

This paper is organized as follows. Section~\ref{sec_prelimi} provides the preliminaries and formulates the problems. The motion planning algorithms in obstacle-free and obstacle-cluttered environments are proposed in Section~\ref{sec_VF} and Section~\ref{sec_OA_VF}, respectively. Section~\ref{sec_CA_VF} provides the cooperative motion planning algorithm for multiple nonholonomic robots. Numerical simulation results are given in Section~\ref{sec_sim}, followed by Section~\ref{sec_conclu}, which concludes the paper.

\emph{Notations}: Let $\dot{f}$ denote the time derivative $\frac{{\rm d}f(t)}{{\rm d}t}$ of any differentiable function $f(t)$. The distance between a point $\bar{\bm{p}}\in\mathbb{R}^n$ and a nonempty set $\bm{S}\subseteq\mathbb{R}^n$ is defined by $\verb"dist"(\bm{\bar{\bm{p}}},\bm{S}):=\inf\{\|\bm{p}-\bar{\bm{p}}\| : \bm{p}\in\bm{S}\}$, where $\|\cdot\|$ denotes the Euclidean norm. The distance between two nonempty sets $\bm{A}\subseteq\mathbb{R}^n$ and $\bm{B}\subseteq\mathbb{R}^n$ is defined by $\verb"dist"(\bm{A},\bm{B}):=\inf\{\|\bm{a}-\bm{b}\| : \bm{a}\in\bm{A},\bm{b}\in\bm{B}\}$. The notation $\bm{I}$ represents an identity matrix of suitable dimensions. The notations $\verb"det"$ and $\verb"ker"$ represent the determinant and kernel of a matrix, respectively. The words ``with respect to" is abbreviated as ``w.r.t".

\section{Preliminaries and Problem Formulation} \label{sec_prelimi}

\subsection{Nonholonomic robots modelling}

Consider a swarm of nonholonomic robots labelled by $\mathcal{I}_{\mathcal{V}}=\{1,\cdots,N\}$, and each robot is modelled by a 6-DOF rigid body moving in the 3D Euclidean space $\mathbb{R}^3$. Let $\bm{\mathcal{F}}_{\rm e}$ denote the earth-fixed frame and let $\bm{\mathcal{F}}_{\rm b}$ denote the body-fixed frame, which is attached to the center of mass of the robot. The position of the $i$-th nonholonomic robot ($i\in\mathcal{I}_\mathcal{V}$) in $\bm{\mathcal{F}}_{\rm e}$ is described by a vector $\bm{p}_i=[x_i\ \ y_i\ \ z_i]^{\rm T}\in\mathbb{R}^3$, while the attitude is specified by a rotation matrix $\bm{R}_i\in{\rm SO(3)}:=\{\bm{R}\in\mathbb{R}^{3\times 3}:\bm{R}^{\rm T}\bm{R}=\bm{I},\verb"det"({\bm{R}})=1\}$, which depicts the rotation of $\bm{\mathcal{F}}_{\rm b}$ relative to $\bm{\mathcal{F}}_{\rm e}$. The $i$-th nonholonomic robot's angular velocity and linear velocity are denoted by $\bm{\Omega}_i=[\Omega_{xi} \ \Omega_{yi}\ \ \Omega_{zi}]^{\rm T}\in\mathbb{R}^3$ and $\bm{v}_i=[v_{xi}\ \ v_{yi}\ \ v_{zi}]^{\rm T}\in\mathbb{R}^3$, respectively, which are both provided in the body-fixed frame $\bm{\mathcal{F}}_{\rm b}$. For $\forall\bm{a}=[a_x\ \ a_y\ \ a_z]^{\rm T}\in\mathbb{R}^3$, we define the linear map ``$\wedge$" as follows
\begin{equation}\label{eq_hat_map}
  \bm{a}^{\wedge}=
  \begin{bmatrix}
     0   & -a_z & a_y \\
    a_z  &  0   & -a_x \\
    -a_y & a_x  & 0
  \end{bmatrix}\in\mathfrak{so}(3),
\end{equation}
where $\mathfrak{so}(3):=\{\bm{S}\in\mathbb{R}^{3\times 3}:\bm{S}^{\rm T}=-\bm{S}\}$ denotes the Lie algebra of ${\rm SO(3)}$. Then, for $\forall\bm{a},\bm{b}\in\mathbb{R}^3$, there holds $\bm{a}^{\wedge}\bm{b}=\bm{a}\times\bm{b}$, where ``$\times$" denotes the cross product. Thus, the kinematics of the $i$-th nonholonomic robot can be given by
\begin{subequations}\label{eq_kine_UVA}
  \begin{align}
    \dot{\bm{R}}_i&=\bm{R}_i\bm{\Omega}^{\wedge}_i, \\
    \dot{\bm{p}}_i&=\bm{R}_i\bm{v}_i, \label{eq_kine_trans}
  \end{align}
\end{subequations}
which describes the rotation and translation kinematics, respectively. Note that the motion of the nonholonomic robot is restricted by the nonholonomic constraints, so that the linear velocities along the $y$-axis and $z$-axis of the body-fixed frame $\bm{\mathcal{F}}_{\rm b}$ are both zero, that is, $v_{yi}=v_{zi}=0$. Hence, the nonholonomic robot with 6 DOFs is controlled by only 4 inputs, i.e., $\Omega_{xi},\Omega_{yi},\Omega_{zi},v_{xi}$, demonstrating that the nonholonomic robot is a typical underactuated system.

Assume that the dangerous area $\bm{\mathcal{C}}_{oi}$ of the $i$-th nonholonomic robot is a sphere given by
\begin{equation}\label{eq_C_ai}
  \bm{\mathcal{C}}_{oi}=\{\bm{p}\in\mathbb{R}^3:\bm{\Psi}_i(\bm{p};\bm{p}_i)=\|\bm{p}-\bm{p}_i\|<r_c\},
\end{equation}
where $\bm{p}_i$ is the position of the $i$-th nonholonomic robot and $r_c$ is the radius of the sphere. If the dangerous area $\bm{\mathcal{C}}_{oi}$ is entered by other nonholonomic robots, a potential collision is supposed to occur. Similarly, we can define the reactive area $\bm{\mathcal{C}}_{ri}$ of the $i$-th nonholonomic robot by
\begin{equation}\label{eq_C_ri}
  \bm{\mathcal{C}}_{ri}=\{\bm{p}\in\mathbb{R}^3:r_c\leq\bm{\Psi}_i(\bm{p};\bm{p}_i)\leq r_d\},
\end{equation}
where $r_d$ is the communication or detection range of the nonholonomic robot. Thus, the nonholonomic robot should take actions to avoid collisions once it enters the reactive area $\bm{\mathcal{C}}_{ri}$ of other nonholonomic robots.

\subsection{Obstacles Description}

Consider a finite set of obstacles in the environment, and each of them is described  by a continuous function $\bm{\Upsilon}_i(\bm{p};\bm{p}_{oi}):\mathbb{R}^3\to\mathbb{R}$, where $\bm{p}=[x\ y\ z]^{\rm T}$ is the position variable and $\bm{p}_{oi}=[x_{oi}\ y_{oi}\ z_{oi}]^{\rm T}$ is either a constant or a time-varying vector in $\mathbb{R}^3$, $i\in\mathcal{I}_\mathcal{O}=\{1,\cdots,M\}$, and $M$ is the number of the obstacles. Assume that $\bm{\Upsilon}_i(\bm{p};\bm{p}_{oi})$ has continuous first-order derivatives and increases monotonically w.r.t $\|\bm{p}-\bm{p}_{oi}\|$. Moreover, the level surfaces of $\bm{\Upsilon}_i(\bm{p};\bm{p}_{oi})$, that is, $\bm{\Upsilon}_i(\bm{p};\bm{p}_{oi})=\bar{c}_i$ (where $\bar{c}_i$ is a positive constant) can enclose a region. Then, we utilize the equation $\bm{\Upsilon}_i(\bm{p};\bm{p}_{oi})=1$ to describe the surface of the obstacle, whose center is located at $\bm{p}_{oi}$. For instance, the surface $\bm{\Upsilon}_i(\bm{p};\bm{p}_{oi})=\frac{(x-x_{oi})^2}{a^2}+\frac{(y-y_{oi})^2}{b^2}+\frac{(z-z_{oi})^2}{c^2}=1$ represents an ellipsoid centered at $(x_{oi},y_{oi},z_{oi})$ with principal semiaxes of lengths $a,b,c$. The obstacle can be static or moving, which depends on the motion of $\bm{p}_{oi}$, and it is assumed that the obstacle's velocity $\dot{\bm{p}}_{oi}$ is known.

With the help of $\bm{\Upsilon}_i(\bm{p};\bm{p}_{oi})$, we can define the obstacle area $\bm{\mathcal{A}}_{oi}$ and the reactive area $\bm{\mathcal{A}}_{ri}$ as given below.
\begin{align}
  \bm{\mathcal{A}}_{oi} & = \{\bm{p}\in\mathbb{R}^3: \bm{\Upsilon}_i(\bm{p};\bm{p}_{oi})<1 \}, \label{eq_A_o} \\
  \bm{\mathcal{A}}_{ri} & = \{\bm{p}\in\mathbb{R}^3: 1\le\bm{\Upsilon}_i(\bm{p};\bm{p}_{oi})\le\bar{c}_i \}. \label{eq_A_r}
\end{align}
The obstacle area $\bm{\mathcal{A}}_{oi}$ is the region where the obstacle occupies, and collision occurs if the nonholonomic robot enters $\bm{\mathcal{A}}_{oi}$. Once in the reactive area $\bm{\mathcal{A}}_{ri}$, the nonholonomic robot can sense the obstacle and needs to be reactive to the obstacle such that a potential collision can be avoided. Note that the size of the reactive area $\bm{\mathcal{A}}_{ri}$ can be controlled by the constant $\bar{c}_i$.
The following standing assumptions are imposed.

\begin{assumption}\label{assum_p_d}
  There holds that $\bm{p}_{dj}\notin\bigcup_{i\in\mathcal{I}_\mathcal{O}}(\bm{\mathcal{A}}_{oi}\cup\bm{\mathcal{A}}_{ri})$ for all time instant $t>0$ and all $j\in\mathcal{I}_\mathcal{V}$, where $\bm{p}_{dj}\in\mathbb{R}^3$ denotes the target position of the $j$-th nonholonomic robot.
\end{assumption}

\begin{assumption}\label{assum_dist_ob}
  There holds $\verb"dist"(\bm{\mathcal{A}}_{ri},\bm{\mathcal{A}}_{rj})>0$ for all $i\ne j\in\mathcal{I}_{\mathcal{O}}$ and $t>0$.
\end{assumption}

\begin{assumption} \label{assum_compact_ob}
    The reactive areas $\bm{\mathcal{C}}_{ri}$ and $\bm{\mathcal{A}}_{rj}$ are compact for all $i \in \mathcal{I}_\mathcal{V}$ and $j \in \mathcal{I}_\mathcal{O}$.
\end{assumption}

Assumption~\ref{assum_p_d} means that the target position of each robot cannot be covered by obstacles. Assumption~\ref{assum_dist_ob} implies that any two obstacles are sufficiently far away such that their reactive areas do not overlap\footnote{If two obstacles are too close such that this assumption is violated, then these two obstacles can be regarded as one big obstacle such that this assumption holds.}. Assumption~\ref{assum_compact_ob} stipulates that robots and obstacles are of finite sizes (i.e., bounded).

\subsection{Problem Formulation}\label{subsec_pf}

\textbf{Motion Planning Problem}: Consider a swarm of nonholonomic robots labelled by $i\in\mathcal{I}_\mathcal{V}$, and the kinematics of each robot is described by (\ref{eq_kine_UVA}). Let $\bm{p}_{di}\in\mathbb{R}^3$ denote a target position and let the unit vector $\bm{e}_{di}\in\mathbb{R}^3$ denote a desired heading direction of the $i$-th nonholonomic robot. Then, we design the angular velocity $\bm{\Omega}^{\wedge}_i$ and the linear velocity $\bm{v}_{i}$ in (\ref{eq_kine_UVA}), such that
\begin{enumerate}
  \item $\lim_{t\to\infty}\|\bm{p}_i-\bm{p}_{di}\|=0$ and $\lim_{t\to\infty} \left\|\bm{R}_i\frac{\bm{v}_i}{\|\bm{v}_i\|}-\bm{e}_{di} \right\|=0$ for all $i\in\mathcal{I}_\mathcal{V}$;
  \item $\verb"dist"(\bm{\mathcal{C}}_{oi},\bm{\mathcal{A}}_{oj})>0$ for all $i\in\mathcal{I}_\mathcal{V}$, $j\in\mathcal{I}_{\mathcal{O}}$ and $t>0$;
  \item $\verb"dist"(\bm{\mathcal{C}}_{oi},\bm{\mathcal{C}}_{oj})>0$ for all $i\ne j\in\mathcal{I}_{\mathcal{V}}$ and $t>0$.
\end{enumerate}

Objective~1 implies that the nonholonomic robot is driven to the target position, and in the meantime, its velocity direction points along the desired direction. Due to the nonholonomic constraints, the robot's velocity direction is its heading direction, and the second limit in Objective~1 can be rewritten as $\lim_{t\to\infty}\|\bm{R}_i\bm{e}_x-\bm{e}_{di}\|=0$, where $\bm{e}_x=[1\ \ 0\ \ 0]^{\rm T}$. Objective~2 guarantees that the nonholonomic robot does not collide with the obstacles in the environment. Objective~3 ensures that there does not exist any collisions among the nonholonomic robots in the swarm.

\section{Motion Planning in Obstacle-Free Environments}\label{sec_VF}

This section considers the motion planning of a single nonholonomic robot in an obstacle-free environment. We firstly design the VF which guides the motion of the nonholonomic robot to the target position and desired heading direction, and then derive the controller based on the proposed VF.

\subsection{Vector Field Design}

Before designing the VF, we provide the notation of coordinate frames. The 3D earth-fixed frame $\bm{\mathcal{F}}_{\rm e}$ is commonly described by the orthogonal coordinates $(x,y,z)$. Actually, $\bm{\mathcal{F}}_{\rm e}$ can also be formulated under the cylindrical coordinates or spherical coordinates. In the following, we let $\bm{\mathcal{F}}_{\alpha\beta\gamma}$ represent the earth-fixed frame given by the coordinates $(\alpha,\beta,\gamma)$. For example, $\bm{\mathcal{F}}_{xyz}$ denotes the 3D orthogonal coordinate frame. In advance of presenting the theorem, we give the following definition about a VF lying in a subspace.

\begin{definition} \label{def1}
    A vector field $\bm{F} : \mathbb{R}^3 \to \mathbb{R}^3$ lies in a nonempty subspace $\bm{\Sigma} \subseteq \mathbb{R}^3$ if there holds $\bm{F}(p) \in \bm{\Sigma}$ for each point $p \in \bm{\Sigma}$. Namely, $F(\bm{\Sigma}) \subseteq \bm{\Sigma}$.
\end{definition}

It is obvious from Definition~\ref{def1} that if a VF $\bm{F}$ lies in a nonempty subspace $\bm{\Sigma}$, then every complete trajectory of $\dot{\bm{p}}(t) = \bm{F}(\bm{p}(t))$ with the initial condition in $\bm{\Sigma}$ will stay in $\bm{\Sigma}$ for $t \ge 0$. In other words, the integral curve of $\bm{F}$ which has nonempty intersection with $\bm{\Sigma}$ is contained in $\bm{\Sigma}$.

The following theorem presents a VF which guides the nonholonomic robot to the target position with the specified heading direction.

\begin{theorem}\label{theo_VF_org}
  Let $\bm{e}_x,\bm{e}_y,\bm{e}_z$ denote the basis of the coordinate frame $\bm{\mathcal{F}}_{xyz}$. Define the navigation vector field (N-VF) $\bm{F}: \mathbb{R}^3 \to \mathbb{R}^3$ by
  \begin{equation}\label{eq_VF_xyz}
    \bm{F}=F_x\bm{e}_x+F_y\bm{e}_y+F_z\bm{e}_z,
  \end{equation}
  where the components are
  \begin{subequations}\label{eq_VF_xyz_compo}
    \begin{align}
      &F_x=x^2-y^2-z^2, \\
      &F_y=2xy, \\
      &F_z=2xz.
    \end{align}
  \end{subequations}
  Then, the following properties hold.
  \begin{enumerate}
    \item Given $a,b\in\mathbb{R}$, where $a^2+b^2\ne0$, $\bm{F}$ lies in the plane $\bm{\Sigma_{ab}}$ defined by $\bm{\Sigma_{ab}}:=\{(x,y,z) \in \mathbb{R}^3 : ay+bz=0\}$. Therefore, $\bm{\Sigma_{ab}}$ is positively invariant w.r.t the dynamics $\dot{\bm{p}}=\bm{F}(\bm{p})$. Furthermore, the $x$-axis is positively invariant.
    \item Every integral curve of $\bm{F}$ starting from the positive $x$-semiaxis will escape to infinity in a finite time. Every integral curve of $\bm{F}$ starting from all other initial conditions in $\mathbb{R}^3$ (except for the trivial case of the origin)  will pass through the origin of $\bm{\mathcal{F}}_{xyz}$, and its tangent vector at the origin points along the positive direction of the $x$-axis.
    \item There is only one singular point of $\bm{F}$, that is, the origin, which is almost globally attractive w.r.t the dynamics $\dot{\bm{p}}=\bm{F}(\bm{p})$.
  \end{enumerate}
\end{theorem}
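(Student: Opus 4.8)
The plan is to collapse the three-dimensional flow $\dot{\bm p}=\bm F(\bm p)$ onto a two-dimensional one that is integrable in closed form. The components of $\bm F$ satisfy $aF_y+bF_z=2x(ay+bz)$, which vanishes on $\bm{\Sigma}_{ab}$, so $\bm F$ lies in $\bm{\Sigma}_{ab}$ in the sense of Definition~\ref{def1}; by the observation following that definition this yields the positive invariance in Property~1, while $F_y=F_z=0$ on the $x$-axis makes $\bm F$ parallel to $\bm e_x$ there, so the $x$-axis is positively invariant too. Next, fix an initial point with $(y_0,z_0)\neq(0,0)$ and put $\bm n=(0,y_0,z_0)/\|(y_0,z_0)\|$. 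On the half-plane $H=\{x\bm e_x+\rho\bm n:x\in\mathbb{R},\ \rho\ge0\}$ one finds $\bm F=(x^2-\rho^2)\bm e_x+2x\rho\,\bm n$, which is tangent to $H$, so $H$ is invariant, and in the coordinates $(x,\rho)$ the flow becomes $\dot x=x^2-\rho^2$, $\dot\rho=2x\rho$, i.e. $\dot u=u^2$ with $u:=x+i\rho$ and $\rho\ge0$; on the $x$-axis itself the reduced equation is just $\dot x=x^2$.

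The second step uses the explicit solution $u(t)=u_0/(1-u_0t)$. Its imaginary part is $\rho_0/|1-u_0t|^2\ge0$, so no trajectory leaves the closed upper half-plane. If $u_0$ is a positive real number — equivalently $y_0=z_0=0,\ x_0>0$, i.e. the positive $x$-semiaxis — then $1-u_0t$ vanishes at $t=1/u_0$ and $|u(t)|\to\infty$, which is the finite-time escape. For every other nontrivial initial condition $u_0\notin[0,\infty)$, the denominator never vanishes for $t\ge0$, the solution is global, and $|1-u_0t|\to\infty$ forces $|u(t)|\to0$, hence $\bm p(t)\to\bm 0$; this proves the dichotomy of Property~2. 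For the terminal heading I would expand $u(t)=-t^{-1}+O(t^{-2})$, so $u(t)/|u(t)|\to-1$ and the reduced velocity satisfies $\dot u/|\dot u|=(u/|u|)^2\to1$; translating back, $\bm F(\bm p(t))$ has norm $|u(t)|^2$ and normalized direction $\mathrm{Re}\big((u/|u|)^2\big)\bm e_x+\mathrm{Im}\big((u/|u|)^2\big)\bm n\to\bm e_x$. Geometrically the integral curves are circular arcs through the origin tangent to the $x$-axis, and the curve enters the origin from the region $x<0$ while its velocity points toward $+x$.

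Finally, $\bm F(\bm p)=\bm 0$ forces $2xy=2xz=0$ and $x^2=y^2+z^2$, whose only solution is the origin, so $\bm F$ has a unique singular point; and by Property~2 the set of initial conditions from which the trajectory does not converge to the origin is exactly the positive $x$-semiaxis, a Lebesgue-null subset of $\mathbb{R}^3$, which makes the origin almost globally attractive. (It is \emph{not} Lyapunov stable — trajectories starting arbitrarily close to it on that semiaxis blow up — which is precisely why the statement says ``almost globally attractive'' rather than ``asymptotically stable''.) The decisive idea, the reduction to $\dot u=u^2$, is short once spotted; the part I expect to be genuinely delicate is the asymptotics behind Property~2's terminal-direction claim, where one must keep the direction in which the curve approaches the origin (toward $-x$) separate from the direction of the velocity (toward $+x$), and check that the $\bm n$-component of the velocity is of strictly lower order so that the normalized field tends to $\bm e_x$ and not merely to the $x$-axis as an unoriented line.
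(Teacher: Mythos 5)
Your proof is correct, and it takes a genuinely different route from the paper's. The paper proves invariance by passing to cylindrical coordinates $\bm{\mathcal{F}}_{xr\theta}$ and checking $F_\theta=0$, then goes to polar coordinates $(\rho,\varphi)$ in the invariant plane, integrates $\mathrm{d}\rho/(\rho\,\mathrm{d}\varphi)=\cot\varphi$ to obtain the family of circles $\rho=C\sin\varphi$ (treating the two semiaxes separately via $\dot x=x^2$), and finally gets the terminal direction by computing the tangent of the intersection curve \eqref{eq_inte_curve} as a cross product of surface normals, with the sign of the direction settled by evaluating $\bm F$ near the origin on the $x$-axis. You instead verify invariance by the one-line identity $aF_y+bF_z=2x(ay+bz)$ and then collapse the in-plane dynamics to the complex Riccati equation $\dot u=u^2$, whose explicit Möbius solution $u(t)=u_0/(1-u_0t)$ delivers everything at once: preservation of the sign of $\mathrm{Im}\,u$ (so the half-plane $\rho\ge0$ is respected), finite escape exactly when $u_0\in(0,\infty)$, global convergence $u(t)\to0$ otherwise, and the terminal heading via $\dot u/|\dot u|=(u/|u|)^2\to1$ even though $u/|u|\to-1$. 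What your approach buys is an explicit time parameterization of every trajectory, which makes the finite-escape and asymptotic statements immediate and makes the tangent-at-the-origin argument cleaner and better justified than the paper's final step (your careful separation of the approach direction, toward $-x$, from the velocity direction, toward $+x$, is exactly the point the paper handles somewhat loosely with its $\bm{\Sigma_\epsilon}$ argument); what the paper's route buys is the closed-form geometric description of the integral curves in \eqref{eq_inte_curve}, which is reused later in the proof of Theorem~\ref{theo_VF_org_control}, so if you adopted your argument wholesale you would want to note (as you do in passing) that the orbits are the circles $x^2+(r-\tfrac{C}{2})^2=(\tfrac{C}{2})^2$ through the origin tangent to the $x$-axis.
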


\begin{figure}[t]
  \centering
  \includegraphics[width=0.25\textwidth]{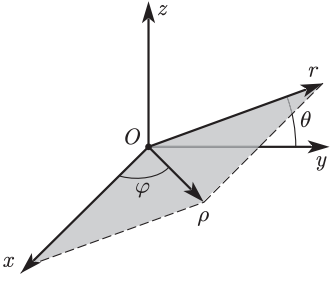}
  \caption{Illustrations of coordinate frames $\bm{\mathcal{F}}_{xyz},\bm{\mathcal{F}}_{xr\theta},\bm{\mathcal{F}}_{\rho\varphi\theta}$. }
  \label{fig_coordinates}
\end{figure}

\begin{proof}
  1) Consider a new coordinate frame $\bm{\mathcal{F}}_{xr\theta}$, and the coordinate transformation between $\bm{\mathcal{F}}_{xyz}$ and $\bm{\mathcal{F}}_{xr\theta}$ is defined by
  \begin{equation}\label{eq_transform_xyz_to_xrtheta}
    x=x,\ y=r\cos\theta,\ z=r\sin\theta.
  \end{equation}
  The illustration of $\bm{\mathcal{F}}_{xr\theta}$ is given in Figure~\ref{fig_coordinates}. Thus, the N-VF components in (\ref{eq_VF_xyz_compo}) can be written as
  \begin{equation}\label{eq_Fxyz_in_xrtheta}
    F_x=x^2-r^2,\ F_y=2xr\cos\theta,\ F_z=2xz\sin\theta.
  \end{equation}
  Let $\bm{e}_x,\bm{e}_r,\bm{e}_{\theta}$ denote the bases of the coordinate frame $\bm{\mathcal{F}}_{xr\theta}$, and then the transformation between $\{\bm{e}_x,\bm{e}_y,\bm{e}_z\}$ and $\{\bm{e}_x,\bm{e}_r,\bm{e}_{\theta}\}$ can be given by
  \begin{equation}\label{eq_basis_transform_xyz_to_xrtheta}
    \begin{bmatrix}
      \bm{e}_x \\
      \bm{e}_y \\
      \bm{e}_z
    \end{bmatrix}=
    \begin{bmatrix}
      1 & 0 & 0 \\
      0 & \cos\theta & -\sin\theta \\
      0 & \sin\theta & \cos\theta
    \end{bmatrix}
    \begin{bmatrix}
      \bm{e}_x \\
      \bm{e}_r \\
      \bm{e}_{\theta}
    \end{bmatrix}.
  \end{equation}
  Please refer to \cite{Surana2014Appendix} for more information about the transformations of coordinate frames.  Substituting (\ref{eq_Fxyz_in_xrtheta}) and (\ref{eq_basis_transform_xyz_to_xrtheta}) into (\ref{eq_VF_xyz}), and after computation, the N-VF $\bm{F}$ can be expressed in the coordinate frame $\bm{\mathcal{F}}_{xr\theta}$ as
  \begin{equation}\label{eq_VF_xrtheta}
    \bm{F}=F_x\bm{e}_x+F_r\bm{e}_r+F_{\theta}\bm{e}_{\theta},
  \end{equation}
  where the components are
  \begin{equation}\label{eq_VF_xrtheta_compo}
    F_x=x^2-r^2,\ F_r=2xr,\ F_{\theta}=0.
  \end{equation}
  Regarding the plane $\bm{\Sigma_{ab}}:ay+bz=0$, it can be reformulated in the coordinate frame $\bm{\mathcal{F}}_{xr\theta}$ as $\bm{\Sigma_{ab}}:r(a\cos\theta+b\sin\theta)=0$.
  Due to $r\ne 0$, the plane $\bm{\Sigma_{ab}}$ can be further simplified to be
  \begin{equation}\label{eq_plane_theta}
    \bm{\Sigma_{ab}}:\theta-\theta_0=0,
  \end{equation}
  where $\theta_0$ is a constant defined by $\theta_0={\rm atan2}(-a,b)$. Thus, the normal vector of the plane $\bm{\Sigma_{ab}}$ in the coordinate frame $\bm{\mathcal{F}}_{xr\theta}$ can be given by $\bm{n}_{\bm{\Sigma_{ab}}}=\bm{e}_{\theta}$. We compute the dot product of the normal vector $\bm{n}_{\bm{\Sigma_{ab}}}$ and the N-VF $\bm{F}$ provided in (\ref{eq_VF_xrtheta}), and it is obtained that
  \begin{equation}
    \bm{n}_{\bm{\Sigma_{ab}}}\cdot\bm{F}=0,
  \end{equation}
  which indicates that $\bm{F}$ is perpendicular to the normal vector of the plane $\bm{\Sigma_{ab}}$. That is to say, $\bm{F}$ always lies in the plane $\bm{\Sigma_{ab}}$ by Definition \ref{def1}. Therefore, any trajectory of $\bm{\dot{p}}=\bm{F}(\bm{p})$ with the initial condition in the plane $\bm{\Sigma_{ab}}$ will stay in $\bm{\Sigma_{ab}}$ for $t \ge 0$. Namely, the plane $\bm{\Sigma_{ab}}$ is positively invariant w.r.t. the dynamics $\bm{\dot{p}}=\bm{F}(\bm{p})$. Since the $x$-axis is the intersection of every such positively invariant plane $\bm{\Sigma_{ab}}$, the $x$-axis is itself positively invariant.

  2) Having proved that the N-VF $\bm{F}$ lies in the plane $\bm{\Sigma_{ab}}$, we will then calculate the integral curve of $\bm{F}$ in $\bm{\Sigma_{ab}}$. For simplicity, another coordinate transformation is made for $\bm{F}$ on the basis of $\bm{\mathcal{F}}_{xr\theta}$. Define a new coordinate frame $\bm{\mathcal{F}}_{\rho\varphi\theta}$, and the transformation between $\bm{\mathcal{F}}_{xr\theta}$ and $\bm{\mathcal{F}}_{\rho\varphi\theta}$ is
  \begin{equation}\label{eq_transform_xrtheta_to_rhophitheta}
    x=\rho\cos\varphi,\ r=\rho\sin\varphi,\ \theta=\theta.
  \end{equation}
  The illustration of $\bm{\mathcal{F}}_{\rho\varphi\theta}$ is also given in Figure~\ref{fig_coordinates}. Then, the N-VF components in (\ref{eq_VF_xrtheta_compo}) can be expressed as
  \begin{equation}\label{eq_Fxrtheta_in_rhophitheta}
    F_x=\rho^2(\cos^2\varphi-\sin^2\varphi),\ F_r=2\rho^2\cos\varphi\sin\varphi,\ F_{\theta}=0.
  \end{equation}
  The basis of the coordinate frame $\bm{\mathcal{F}}_{\rho\varphi\theta}$ is denoted by $\bm{e}_{\rho},\bm{e}_{\varphi},\bm{e}_{\theta}$, and the transformation between $\{\bm{e}_x,\bm{e}_r,\bm{e}_{\theta}\}$ and $\{\bm{e}_{\rho},\bm{e}_{\varphi},\bm{e}_{\theta}\}$ is
  \begin{equation}\label{eq_basis_transform_xrtheta_to_rhophitheta}
    \begin{bmatrix}
      \bm{e}_x \\
      \bm{e}_r \\
      \bm{e}_{\theta}
    \end{bmatrix}=
    \begin{bmatrix}
      \cos\varphi & -\sin\varphi & 0 \\
      \sin\varphi & \cos\varphi & 0 \\
      0 & 0 & 1
    \end{bmatrix}
    \begin{bmatrix}
      \bm{e}_{\rho} \\
      \bm{e}_{\varphi} \\
      \bm{e}_{\theta}
    \end{bmatrix}.
  \end{equation}
  By substituting (\ref{eq_Fxrtheta_in_rhophitheta}) and (\ref{eq_basis_transform_xrtheta_to_rhophitheta}) into (\ref{eq_VF_xrtheta}), the N-VF $\bm{F}$ is expressed in $\bm{\mathcal{F}}_{xr\theta}$ as
  \begin{equation}\label{eq_VF_rhophitheta}
    \bm{F}=F_{\rho}\bm{e}_{\rho}+F_{\varphi}\bm{e}_{\varphi}+F_{\theta}\bm{e}_{\theta},
  \end{equation}
  where the components are
  \begin{equation}\label{eq_VF_rhophitheta_compo}
    F_{\rho}=\rho^2\cos\varphi,\ F_{\varphi}=\rho^2\sin\varphi,\ F_{\theta}=0.
  \end{equation}
  Owing to $F_{\theta}=0$, we can compute the integral curve of $\bm{F}$ in the plane $\bm{\Sigma_{ab}}:\theta-\theta_0=0$ based on the VF components $F_{\rho}$ and $F_{\varphi}$. In $\bm{\Sigma_{ab}}$, the integral curve of $\bm{F}$ is the solution to the following ordinary differential equations
  \begin{equation} \label{eq_ode}
    \frac{{\rm d}\rho}{{\rm d}t}=F_{\rho},\ \rho\frac{{\rm d}\varphi}{{\rm d}t}=F_{\varphi}.
  \end{equation}
  If $\rho=0$ (corresponding to the origin), then $F_\rho=F_\varphi=F_\theta=0$ and hence the origin is an equilibrium point. Now suppose $\rho \ne 0$ for the following discussion.

  If $\varphi = 0$ (corresponding to any point on the \emph{positive} $x$-semiaxis), then $F_\rho=\rho^2$, $F_\varphi=F_\theta=0$. In this case, $\rho=x$. Therefore, by \eqref{eq_ode}, one can directly obtain the analytic expression of any trajectory with the initial condition $(x_0, 0,0)$, where $x_0 := x(0)>0$, on the \emph{positive} $x$-semiaxis as below
  \begin{equation} \label{eq_sol_fet}
    x(t) = \frac{-1}{t - \frac{1}{x_0}}
  \end{equation}
  and $y(t)=z(t)=0$. Note that there is a finite escape time at $t=1/x_0$. Therefore, any trajectory starting from the \emph{positive} $x$-semiaxis will escape to infinity in finite time.

  If $\varphi = \pi$ (corresponding to any point on the \emph{negative} $x$-semiaxis), then $F_\rho=-\rho^2$, $F_\varphi=F_\theta=0$, and $\rho=-x$. It turns out that the the analytic expression of any trajectory with the initial condition $(x_0, 0,0)$, where $x_0 := x(0)<0$, on the \emph{negative} $x$-semiaxis, is still \eqref{eq_sol_fet} and $y(t)=z(t)=0$. However, in this case, there is no finite escape time since $x_0<0$, and instead, the trajectory will converge asymptotically to the origin as $t \to \infty$.

  If $\varphi \ne 0$ and $\varphi \ne \pi$, then it can be derived that
  \begin{equation}
    \frac{{\rm d}\rho}{\rho{\rm d}\varphi}=\frac{F_{\rho}}{F_{\varphi}}=\frac{\cos\varphi}{\sin\varphi},
  \end{equation}
  where (\ref{eq_VF_rhophitheta_compo}) is utilized, and it follows that
  \begin{equation}\label{eq_ODE_curve}
    \frac{1}{\rho}{\rm d}\rho=\cot\varphi{\rm d}\varphi.
  \end{equation}
  By integrating (\ref{eq_ODE_curve}), we can obtain as below
  \begin{equation}\label{eq_inte_curve_rhophitheta}
    \rho=C\sin\varphi,
  \end{equation}
  where $C\in\mathbb{R}\backslash\{0\}$ is a constant. It should be mentioned that (\ref{eq_inte_curve_rhophitheta}) represents a surface rather than a curve in the coordinate frame $\bm{\mathcal{F}}_{\rho\varphi\theta}$. Let $\bm{\Lambda}$ denote the surface given in (\ref{eq_inte_curve_rhophitheta}), and by coordinate transformations (\ref{eq_transform_xyz_to_xrtheta}) and (\ref{eq_transform_xrtheta_to_rhophitheta}), it can be expressed in the coordinate frame $\bm{\mathcal{F}}_{xyz}$ as
  \begin{equation}
    \bm{\Lambda}:x^2+y^2+z^2-C\sqrt{y^2+z^2}=0.
  \end{equation}
  Since it has been proved that the N-VF $\bm{F}$ lies in the plane $\bm{\Sigma_{ab}}$, then the integral curve of $\bm{F}$, denoted by $\bm{\xi}$, can be given by
  \begin{equation}\label{eq_inte_curve}
    \bm{\xi}:
    \left\{
    \begin{aligned}
      & x^2+y^2+z^2-C\sqrt{y^2+z^2}=0\\
      & -\sin\theta_0y+\cos\theta_0z=0
    \end{aligned},
    \right.
  \end{equation}
  where $\theta_0$ is the constant defined in (\ref{eq_plane_theta}). By substituting $(x,y,z)=(0,0,0)$ into (\ref{eq_inte_curve}), it is easily verified that the integral curve $\xi$ passes through the origin.

  Furthermore, we can compute the tangent vector of $\bm{\xi}$ based on the formulation in (\ref{eq_inte_curve}). Since $\bm{\xi}$ is defined as the intersection of two surfaces, the tangent vector can be given by the cross product of the normal vectors of the two surfaces. Define $f_1=x^2+y^2+z^2-C\sqrt{y^2+z^2}$ and $f_2=-\sin\theta_0y+\cos\theta_0z$, and then the normal vectors of $\bm{\Lambda}$ and $\bm{\Sigma_{ab}}$ are given by
  \begin{align*}
    \bm{n}_{\bm{\Lambda}} & =\frac{\partial f_1}{\partial x}\bm{e}_x + \frac{\partial f_1}{\partial y}\bm{e}_y + \frac{\partial f_1}{\partial z}\bm{e}_z \nonumber \\
    & =2x\bm{e}_x+\left(2y-\frac{Cy}{\sqrt{y^2+z^2}}\right)\bm{e}_y+ \left(2z-\frac{Cz}{\sqrt{y^2+z^2}}\right)\bm{e}_z, \\
    \bm{n}_{\bm{\Sigma_{ab}}} & =\frac{\partial f_2}{\partial x}\bm{e}_x + \frac{\partial f_2}{\partial y}\bm{e}_y + \frac{\partial f_2}{\partial z}\bm{e}_z \nonumber \\
    & =-\sin\theta_0\bm{e}_y+\cos\theta_0\bm{e}_z \nonumber \\
    & =-\frac{z}{\sqrt{y^2+z^2}}\bm{e}_y+\frac{y}{\sqrt{y^2+z^2}}\bm{e}_z.
  \end{align*}
  Having obtained $\bm{n}_{\bm{\Lambda}}$ and $\bm{n}_{\bm{\Sigma_{ab}}}$, we can calculate the tangent vector of the integral curve $\bm{\xi}$ as
  \begin{align*}
    \bm{\tau}_{\xi} &= \bm{n}_{\bm{\Sigma}}\times\bm{n}_{\bm{\Lambda}} \nonumber \\
    &= \left(C-2\sqrt{y^2+z^2}\right)\bm{e}_x+\frac{2xy}{\sqrt{y^2+z^2}}\bm{e}_y+\frac{2xz}{\sqrt{y^2+z^2}}\bm{e}_z
  \end{align*}
  One can calculate that
  \begin{equation*}
    \lim_{x,y,z\to 0}\frac{2xy}{\sqrt{y^2+z^2}}=0,\ \lim_{x,y,z\to 0}\frac{2xz}{\sqrt{y^2+z^2}}=0.
  \end{equation*}
  Therefore, the tangent vector $\bm{\tau}_{\xi}$ of the integral curve $\bm{\xi}$ at the origin of the coordinate frame $\bm{\mathcal{F}}_{xyz}$ is
  \begin{equation}
    \bm{\tau}_{\xi}|_{(x,y,z)=(0,0,0)}=C\bm{e}_x,
  \end{equation}
  which indicates that the tangent vector $\bm{\tau}_{\xi}$ at the origin of $\bm{\mathcal{F}}_{xyz}$ points along the $x$-axis. To further prove $\bm{\tau}_{\xi}$ points along the positive direction of the $x$-axis, we define $\bm{\Sigma_\epsilon}=\{(x,y,z)\in\mathbb{R}^3:|x|<\epsilon,y=z=0\}$, where $\epsilon>0$ is sufficiently small. Then, $\bm{\Sigma_\epsilon}$ denotes a small neighborhood of the origin. Thus, it is obtained that
  \begin{equation*}
      \bm{F}(\bm{\Sigma_\epsilon})=\epsilon^2\bm{e}_x,
  \end{equation*}
  indicating the tangent vector points along the positive direction of the $x$-axis.

  3) It is evidently observed from (\ref{eq_VF_xyz_compo}) that $\bm{F}$ vanishes only at the origin. Hence, the origin is the only singular point of $\bm{F}$, i.e., the only equilibrium of the dynamics $\dot{\bm{p}}=\bm{F}(\bm{p})$. For convenience, the \emph{positive} $x$-semiaxis is denoted by $\mathcal{X}_+ := \{(x,0,0) \in \mathbb{R}^3 : x >0\}$ and similarly, the \emph{negative} $x$-semiaxis is denoted by $\mathcal{X}_-$. For any trajectory of $\dot{\bm{p}}=\bm{F}(\bm{p})$ starting from the $x$-axis (excluding the trivial case of the origin), the analytic expression of the trajectory has been shown in \eqref{eq_sol_fet}. It has been shown above that any trajectory starting from $\mathcal{X}_-$ will asymptotically converge to the origin, while the one starting from $\mathcal{X}_+$ will escape to infinity in finite time. Now consider any other initial condition denoted by $\bm{\eta}_0 \in \mathbb{R}^3 \setminus (\mathcal{X}_+ \cup \mathcal{X}_- \cup \{0\})$. It is obvious that there exist $a,b$ such that the plane $\bm{\Sigma_{ab}}$ contains $\bm{\eta}_0$ and $\bm{\Sigma_{ab}}$ is unique. Based on the proof in 2), any trajectory starting from $\bm{\eta}_0$ will be contained in $\bm{\Sigma_{ab}}$ for $t \ge 0$, and the corresponding integral curve is described by \eqref{eq_inte_curve}, where $\theta_0$ is determined by $\bm{\Sigma_{ab}}$. Hence, the trajectory will asymptotically converge to the origin as $t \to \infty$. To sum up, any trajectory starting from $\mathbb{R}^3 \setminus (\mathcal{X}_+ \cup \{0\})$ asymptotically converges to the origin. Namely, the origin is almost globally attractive as $\mathcal{X}_+$ is of measure zero.
\end{proof}

\begin{remark}
  The coordinate frames $\bm{\mathcal{F}}_{xyz},\bm{\mathcal{F}}_{xr\theta},\bm{\mathcal{F}}_{\rho\varphi\theta}$ are all earth-fixed frames, and the relations between these three frames are intuitively illustrated in Figure~\ref{fig_coordinates}. The coordinate frame $\bm{\mathcal{F}}_{xr\theta}$ actually defines a plane which passes through the $x$-axis and forms an angle of $\theta$ w.r.t the $y$-axis. Based on $\bm{\mathcal{F}}_{xr\theta}$, the coordinate frame $\bm{\mathcal{F}}_{\rho\varphi\theta}$ further provides the polar coordinates $(\rho,\varphi)$ in the $xOr$ plane.
\end{remark}

\begin{remark}
  It is indicated by Theorem~\ref{theo_VF_org} that the integral curve of the N-VF $\bm{F}$ always lies in the $xOr$ plane, and the integral curve is expressed as (\ref{eq_inte_curve_rhophitheta}) in the polar coordinates $(\rho,\varphi)$. By substituting (\ref{eq_transform_xrtheta_to_rhophitheta}) into (\ref{eq_inte_curve_rhophitheta}), we can obtain the expression of the integral curve in orthogonal coordinates $(x,r)$, that is, $x^2+(r-\frac{C}{2})^2=(\frac{C}{2})^2$. This demonstrates that the integral curves in the $xOr$ plane are circles located at $(x,r)=(0,\frac{C}{2})$ with radius $|\frac{C}{2}|$. Figure~\ref{fig_VF_xor} illustrates the N-VF $\bm{F}$ and its integral curve with $C>0$ in the $xOr$ plane. Once rotating the $xOr$ plane around the $x$-axis, which is equivalent to choosing different values of $\theta$, we can obtain the integral curves of $\bm{F}$ in the overall 3D space, as shown in Figure~\ref{fig_VF_stream}.
\end{remark}

\begin{figure}[htbp]
  \centering
  \subfigure[Vector field in $xOr$ plane.]{
    \label{fig_VF_xor}
    \includegraphics[width=0.21\textwidth,trim=69 0 70 0,clip]{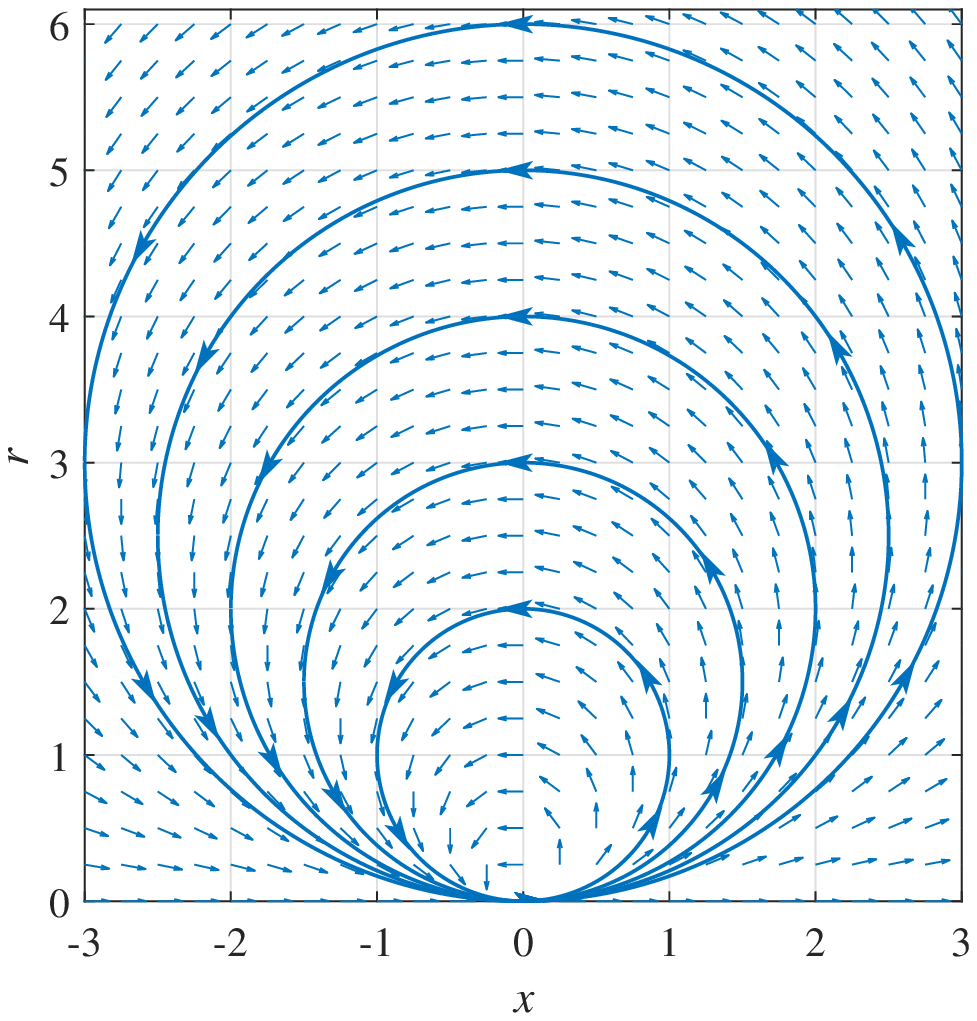}}
  \subfigure[Integral curves of vector field.]{
    \label{fig_VF_stream}
    \includegraphics[width=0.25\textwidth,trim=65 13 60 20,clip]{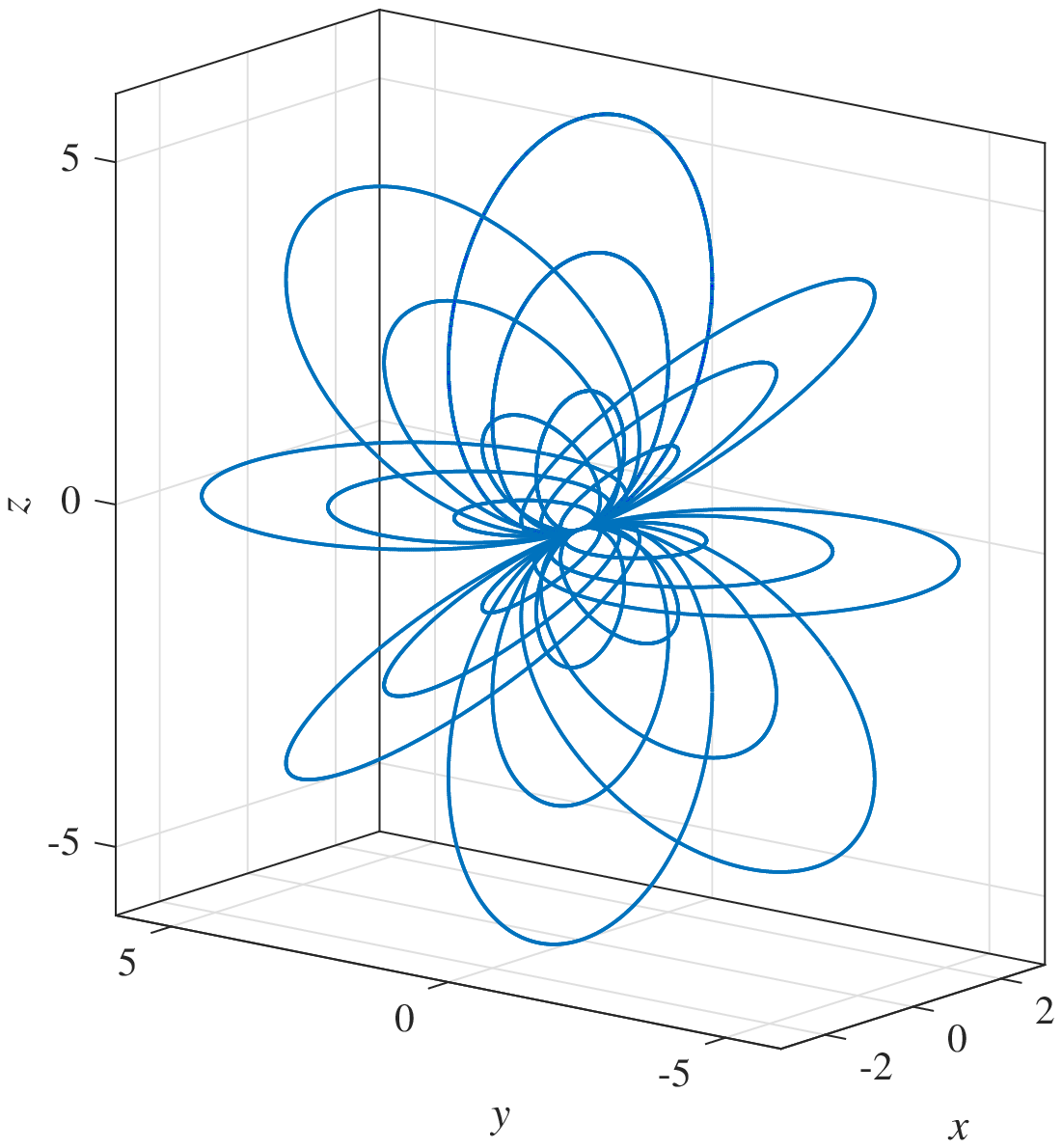}}
  \caption{Navigation vector field $\bm{F}$ and its integral curves.}
  \label{fig_VF_and_stream}
\end{figure}

Theorem~\ref{theo_VF_org} can be extended to an arbitrary goal position and a goal orientation, which is given in the following corollary.

\begin{corollary}\label{cor_VF_arbitra}
  Let $\bm{p}_d=[x_d\ y_d\ z_d]^{\rm T}\in\mathbb{R}^3$ denote a desired position and $\bm{e}_d=\bm{R}_d\bm{e}_x$ a desired direction, where $\bm{R}_d\in{\rm SO(3)}$ represents the rotation relative to the coordinate frame $\bm{\mathcal{F}}_{xyz}$. Define the following N-VF
  \begin{equation}\label{eq_VF_xyz_tilde}
    \tilde{\bm{F}}=\tilde{F}_x\bm{R}_d\bm{e}_x+\tilde{F}_y\bm{R}_d\bm{e}_y+\tilde{F}_z\bm{R}_d\bm{e}_z,
  \end{equation}
  where the components are
  \begin{subequations}\label{eq_VF_xyz_compo_tilde}
    \begin{align}
      &\tilde{F}_x=(x-x_d)^2-(y-y_d)^2-(z-z_d)^2, \\
      &\tilde{F}_y=2(x-x_d)(y-y_d), \\
      &\tilde{F}_z=2(x-x_d)(z-z_d).
    \end{align}
  \end{subequations}
  Then, every integral curve of the N-VF $\tilde{\bm{F}}$ passes through the position $\bm{p}_d$ and points along the direction of $\bm{e}_d$ simultaneously.
\end{corollary}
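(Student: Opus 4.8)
The plan is to deduce Corollary~\ref{cor_VF_arbitra} from Theorem~\ref{theo_VF_org} by a rigid-motion change of coordinates that maps the canonical target pair $(\bm{0},\bm{e}_x)$ of the N-VF \eqref{eq_VF_xyz} onto the prescribed pair $(\bm{p}_d,\bm{e}_d)$. Since $\bm{p}_d$ and $\bm{R}_d\in{\rm SO(3)}$ are (time-)constant, introduce the new position variable $\bm{q}:=\bm{R}_d^{\rm T}(\bm{p}-\bm{p}_d)$, equivalently $\bm{p}=\bm{p}_d+\bm{R}_d\bm{q}$, so that $\dot{\bm{p}}=\bm{R}_d\dot{\bm{q}}$. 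This affine diffeomorphism of $\mathbb{R}^3$ sends $\bm{q}=\bm{0}$ to $\bm{p}=\bm{p}_d$ and the positive $x$-semiaxis $\mathcal{X}_+$ to the ray $\{\bm{p}_d+s\bm{e}_d:s>0\}$, because $\bm{e}_d=\bm{R}_d\bm{e}_x$.

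The key step is the pull-back identity $\tilde{\bm{F}}(\bm{p})=\bm{R}_d\,\bm{F}(\bm{q})$, where $\bm{F}$ is the N-VF of Theorem~\ref{theo_VF_org}. Writing $\bm{q}=[q_x\ q_y\ q_z]^{\rm T}$ (the components of $\bm{p}-\bm{p}_d$ in the rotated frame of axes $\bm{R}_d\bm{e}_x,\bm{R}_d\bm{e}_y,\bm{R}_d\bm{e}_z$), the components in \eqref{eq_VF_xyz_compo_tilde} are precisely $\tilde{F}_x=q_x^2-q_y^2-q_z^2$, $\tilde{F}_y=2q_xq_y$, $\tilde{F}_z=2q_xq_z$, which are the components of \eqref{eq_VF_xyz_compo} evaluated at $\bm{q}$; hence
\begin{equation*}
  \tilde{\bm{F}}(\bm{p})=\sum_{i\in\{x,y,z\}}F_i(\bm{q})\,\bm{R}_d\bm{e}_i=\bm{R}_d\sum_{i\in\{x,y,z\}}F_i(\bm{q})\,\bm{e}_i=\bm{R}_d\,\bm{F}(\bm{q}).
\end{equation*}
The only thing requiring care here is that the rotation applied to the basis (the factors $\bm{R}_d\bm{e}_i$ in \eqref{eq_VF_xyz_tilde}) and the rotation carried by the argument $\bm{q}=\bm{R}_d^{\rm T}(\bm{p}-\bm{p}_d)$ must be the same $\bm{R}_d$, which is exactly what makes the two cancel below.

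Combining these, the flow $\dot{\bm{p}}=\tilde{\bm{F}}(\bm{p})$ becomes $\bm{R}_d\dot{\bm{q}}=\bm{R}_d\bm{F}(\bm{q})$, that is, $\dot{\bm{q}}=\bm{F}(\bm{q})$, which is exactly the system studied in Theorem~\ref{theo_VF_org}. Consequently the integral curves of $\tilde{\bm{F}}$ are the images, under $\bm{q}\mapsto\bm{p}_d+\bm{R}_d\bm{q}$, of those of $\bm{F}$. By part~3 of Theorem~\ref{theo_VF_org} every integral curve of $\bm{F}$ other than those issuing from $\mathcal{X}_+$ converges to the origin, and by part~2 it reaches the origin with tangent along the positive $x$-axis; pushing this forward through the diffeomorphism and using $\dot{\bm{p}}=\bm{R}_d\dot{\bm{q}}$ to transport tangent vectors, every integral curve of $\tilde{\bm{F}}$ except those on the measure-zero ray $\{\bm{p}_d+s\bm{e}_d:s>0\}$ (inherited from the exceptional semiaxis $\mathcal{X}_+$) passes through $\bm{p}_d$ with tangent along $\bm{R}_d\bm{e}_x=\bm{e}_d$; moreover $\bm{p}_d$ is the unique equilibrium of $\tilde{\bm{F}}$, being the image of the origin. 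I do not anticipate a substantial obstacle: once the identity $\tilde{\bm{F}}(\cdot)=\bm{R}_d\,\bm{F}(\bm{R}_d^{\rm T}(\cdot-\bm{p}_d))$ is written down, the corollary is a one-line consequence of Theorem~\ref{theo_VF_org}; the mild points to get right are the basis/argument rotation matching in that identity and bookkeeping the image of $\mathcal{X}_+$.
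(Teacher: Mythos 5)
Your proposal is correct and takes essentially the same route as the paper, which offers no separate proof beyond the remark that $\tilde{\bm{F}}$ is ``obtained by a coordinate transformation'' from $\bm{F}$: your conjugacy identity $\tilde{\bm{F}}(\bm{p})=\bm{R}_d\,\bm{F}\bigl(\bm{R}_d^{\rm T}(\bm{p}-\bm{p}_d)\bigr)$, with the displacement components read in the rotated axes $\bm{R}_d\bm{e}_x,\bm{R}_d\bm{e}_y,\bm{R}_d\bm{e}_z$, is exactly the formalization of that remark, and Theorem~\ref{theo_VF_org} then transfers through the rigid motion $\bm{q}\mapsto\bm{p}_d+\bm{R}_d\bm{q}$. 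Your explicit tracking of the exceptional ray $\{\bm{p}_d+s\bm{e}_d:s>0\}$ (the image of $\mathcal{X}_+$) is in fact slightly more careful than the corollary's blanket ``every integral curve.''
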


The N-VF $\tilde{\bm{F}}$ in (\ref{eq_VF_xyz_tilde}) is obviously obtained by a coordinate transformation from the N-VF $\bm{F}$ in (\ref{eq_VF_xyz}). Therefore, without loss of generality, we assume $\bm{p}_d=[0\ \ 0\ \ 0]^{\rm T}$ and $\bm{e}_d = \bm{e}_x = [1\ \ 0\ \ 0]^{\rm T}$ henceforth.

\subsection{Controller Design}

Theorem~\ref{theo_VF_org} provides a VF which passes through a point with a desired direction. This naturally inspires us that we can make the nonholonomic robot to follow the direction of the VF, so that it will be able to arrive at the goal point with a specified heading direction. Note that the nonholonomic robot is a 3D rigid body, and its heading direction is actually the $x$-axis direction of the body-fixed frame $\bm{\mathcal{F}}_{\rm b}$. Although the VF in Theorem~\ref{theo_VF_org} provides the $x$-axis direction of $\bm{\mathcal{F}}_{\rm b}$, the directions of $y$-axis and $z$-axis are undetermined yet, leading to the result that the angular velocity $\bm{\Omega}^{\wedge}$ of the nonholonomic robot cannot be derived either.

Hence, in the following, we design two extra VFs which represents the $y$-axis and $z$-axis directions of the body-fixed frame $\bm{\mathcal{F}}_{\rm b}$, and then plus the N-VF $\bm{F}$ in Theorem~\ref{theo_VF_org}, an auxiliary attitude matrix $\bm{R}_a$ can be constructed based on these three VFs. By tracking the attitude $\bm{R}_a$ with an angular velocity controller, the nonholonomic robot can align its heading direction with the N-VF $\bm{F}$, so as to accomplish the task of motion planning.

\begin{lemma}\label{lem_VF_G}
  Define a VF $\bm{G}$ as follows
  \begin{equation}\label{eq_VF_G_xyz}
    \bm{G}=G_x\bm{e}_x+G_y\bm{e}_y+G_z\bm{e}_z,
  \end{equation}
  where the components are
  \begin{subequations}\label{eq_VF_G_xyz_comp}
    \begin{align}
      &G_x=2x(y^2+z^2), \\
      &G_y=y\left(y^2+z^2-x^2 \right), \\
      &G_z=z\left(y^2+z^2-x^2\right).
    \end{align}
  \end{subequations}
  Then, $\bm{G}$ is in the $xOr$ plane and orthogonal to the N-VF $\bm{F}$.
\end{lemma}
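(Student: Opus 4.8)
The plan is to reuse the cylindrical change of coordinates $\bm{\mathcal{F}}_{xr\theta}$ from the proof of Theorem~\ref{theo_VF_org}, exactly as in its first part. For the claim that $\bm{G}$ lies in the $xOr$ plane, I would substitute $y^2+z^2=r^2$, $y=r\cos\theta$, $z=r\sin\theta$ (from \eqref{eq_transform_xyz_to_xrtheta}) into \eqref{eq_VF_G_xyz_comp}, giving $G_x=2xr^2$, $G_y=r\cos\theta\,(r^2-x^2)$, $G_z=r\sin\theta\,(r^2-x^2)$, and then apply the basis relation \eqref{eq_basis_transform_xyz_to_xrtheta}. Collecting the $\bm{e}_\theta$ coefficient yields $-G_y\sin\theta+G_z\cos\theta\equiv0$, while the $\bm{e}_r$ coefficient collapses to $G_y\cos\theta+G_z\sin\theta=r(r^2-x^2)$, so that
\begin{equation*}
  \bm{G}=2xr^2\,\bm{e}_x+r(r^2-x^2)\,\bm{e}_r,\qquad G_\theta=0.
\end{equation*}
Since $G_\theta=0$ is precisely the condition (used already for $\bm{F}$ in Theorem~\ref{theo_VF_org}) that $\bm{G}$ be orthogonal to the normal $\bm{e}_\theta$ of every plane $\bm{\Sigma_{ab}}$, Definition~\ref{def1} gives that $\bm{G}$ lies in each such plane, i.e., in the $xOr$ plane through any chosen point. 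On the $x$-axis ($r=0$) this is vacuous because $G_x=G_y=G_z=0$ there.

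For orthogonality, the quickest route is to read off from the displayed expression that $\bm{G}=r\bigl[2xr\,\bm{e}_x-(x^2-r^2)\bm{e}_r\bigr]$, i.e., $\bm{G}$ is $r$ times the in-plane rotation by $90^\circ$ of $\bm{F}=(x^2-r^2)\bm{e}_x+2xr\,\bm{e}_r$ from \eqref{eq_VF_xrtheta_compo}; hence $\bm{F}\cdot\bm{G}=0$. If one prefers to avoid the coordinate change altogether, the same conclusion follows from a one-line Cartesian computation using \eqref{eq_VF_xyz_compo} and \eqref{eq_VF_G_xyz_comp}:
\begin{align*}
  \bm{F}\cdot\bm{G}&=(x^2-y^2-z^2)\,2x(y^2+z^2)+2xy\cdot y(y^2+z^2-x^2)+2xz\cdot z(y^2+z^2-x^2)\\
  &=2x(y^2+z^2)\bigl[(x^2-y^2-z^2)+(y^2+z^2-x^2)\bigr]=0.
\end{align*}

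There is no genuine obstacle here — both assertions reduce to elementary algebra, and the only point needing a word of care is the $x$-axis, where the frame $\{\bm{e}_r,\bm{e}_\theta\}$ degenerates; there $\bm{G}$ vanishes, so both claims hold trivially. I would present the cylindrical-coordinate computation as the main argument, since it produces $G_\theta=0$ and the orthogonality $\bm{F}\cdot\bm{G}=0$ simultaneously, and it makes visible that $\{\bm{F},\bm{G}\}$ forms an orthogonal pair spanning the $xOr$ plane off the $x$-axis — which is exactly what makes $\bm{G}$ (together with a third field) suitable for building the auxiliary attitude matrix $\bm{R}_a$ in the controller design.
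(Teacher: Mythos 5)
Your proposal is correct and follows essentially the same route as the paper: transform to the cylindrical frame $\bm{\mathcal{F}}_{xr\theta}$, observe $G_\theta=0$ to conclude $\bm{G}$ lies in the $xOr$ plane, and verify $\bm{F}\cdot\bm{G}=0$ (the paper does this dot product directly in the $(x,r,\theta)$ components, which is equivalent to your rotation-by-$90^\circ$ observation and your Cartesian check). The extra remark about the degenerate $x$-axis case is a harmless refinement not present in the paper.
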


\begin{proof}
  By substituting the coordinate transformation (\ref{eq_transform_xyz_to_xrtheta}) and basis transformation (\ref{eq_basis_transform_xyz_to_xrtheta}) into (\ref{eq_VF_G_xyz}) and (\ref{eq_VF_G_xyz_comp}), $\bm{G}$ can be rewritten in the frame $\bm{\mathcal{F}}_{xr\theta}$ as
  \begin{equation}\label{eq_VF_G_xrtheta}
    \bm{G}=G_x\bm{e}_x+G_r\bm{e}_r+G_{\theta}\bm{e}_{\theta},
  \end{equation}
  where the components are
  \begin{equation}\label{eq_VF_G_xrtheta_compo}
    G_x=2xr^2,\ G_r=r(r^2-x^2),\ G_{\theta}=0.
  \end{equation}
  As illustrated in the proof of Theorem~\ref{theo_VF_org}, $G_{\theta}=0$ demonstrates that $\bm{G}$ lies in the $xOr$ plane. To prove the orthogonality, we compute the dot product of $\bm{F}$ and $\bm{G}$, which is given by
  \begin{equation}\label{eq_dot_product_F_G}
    \bm{F}\cdot\bm{G}=F_xG_x+F_rG_r+F_{\theta}G_{\theta}
  \end{equation}
  Substituting (\ref{eq_VF_xrtheta_compo}) and (\ref{eq_VF_G_xrtheta_compo}) into (\ref{eq_dot_product_F_G}),   we have
  \begin{equation}\label{eq_F_cdot_G}
    \bm{F}\cdot\bm{G}=0,
  \end{equation}
  which implies the VFs $\bm{F}$ and $\bm{G}$ are orthogonal.
\end{proof}

Based on $\bm{F}$ and $\bm{G}$, we define another VF $\bm{H}$ by the cross product of $\bm{F}$ and $\bm{G}$, that is
\begin{equation}\label{eq_VF_H_xyz}
  \bm{H} =\bm{G}\times\bm{F} =H_x\bm{e}_x+H_y\bm{e}_y+H_z\bm{e}_z,
\end{equation}
where
\begin{subequations}\label{eq_VF_H_xyz_comp}
    \begin{align}
      &H_x=0, \\
      &H_y=-z\left(x^2+y^2+z^2 \right), \\
      &H_z=y\left(x^2+y^2+z^2\right).
    \end{align}
\end{subequations}
Therefore, it follows from (\ref{eq_F_cdot_G}) and (\ref{eq_VF_H_xyz}) that each two VFs of $\bm{F},\bm{G},\bm{H}$ are orthogonal, which indeed defines a 3D Cartesian coordinate frame at every point $(x,y,z)$ in $\mathbb{R}^3$. Then, an attitude matrix $\bm{R}_a$ can be constructed to describe the attitude of such a coordinate frame. For simplicity, let vectors $\bm{\zeta}_X,\bm{\zeta}_Y,\bm{\zeta}_Z$ denote the components of $\bm{F},\bm{H},\bm{G}$ at each point, and then it follows that
\begin{equation}\label{eq_zeta_XYZ}
  \bm{\zeta}_X=
  \begin{bmatrix}
    F_x \\ F_y \\ F_z
  \end{bmatrix},\quad
  \bm{\zeta}_Y=
  \begin{bmatrix}
    H_x \\ H_y \\ H_z
  \end{bmatrix},\quad
  \bm{\zeta}_Z=
  \begin{bmatrix}
    G_x \\ G_y \\ G_z
  \end{bmatrix},
\end{equation}
where the components are given in (\ref{eq_VF_xyz_compo})(\ref{eq_VF_H_xyz_comp})(\ref{eq_VF_G_xyz_comp}), respectively. Note that the vectors $\bm{\zeta}_X,\bm{\zeta}_Y,\bm{\zeta}_Z$ are orthogonal to each other, based on which we can define the following auxiliary attitude matrix
\begin{equation}\label{eq_auxi_atti_matrix}
  \bm{R}_a=
  \begin{bmatrix}
    \frac{\bm{\zeta}_X}{\|\bm{\zeta}_X\|} & \frac{\bm{\zeta}_Y}{\|\bm{\zeta}_Y\|} & \frac{\bm{\zeta}_Z}{\|\bm{\zeta}_Z\|}
  \end{bmatrix}\in{\rm SO(3)}.
\end{equation}
It can be observed from (\ref{eq_VF_xyz_compo}) that the N-VF $\bm{F}$ is singular, that is, $F_x=F_y=F_z=0$, only at the origin (the goal point). Thus, the auxiliary attitude matrix $\bm{R}_a$ is well defined almost everywhere except for the destination. The time derivative of $\bm{R}_a$ is
\begin{equation}\label{eq_dot_auxi_atti_matrix}
  \dot{\bm{R}}_a=
  \begin{bmatrix}
    \frac{\rm d}{{\rm d}t}\left(\frac{\bm{\zeta}_X}{\|\bm{\zeta}_X\|}\right) & \frac{\rm d}{{\rm d}t}\left(\frac{\bm{\zeta}_Y}{\|\bm{\zeta}_Y\|}\right) & \frac{\rm d}{{\rm d}t}\left(\frac{\bm{\zeta}_Z}{\|\bm{\zeta}_Z\|}\right)
  \end{bmatrix},
\end{equation}
where
\begin{equation}\label{eq_dot_auxi_unit_vec}
  \frac{\rm d}{{\rm d}t}\left(\frac{\bm{\zeta}_*}{\|\bm{\zeta}_*\|}\right)=\frac{\dot{\bm{\zeta}}_*}{\|\bm{\zeta}_*\|}-(\bm{\zeta}_*^{\rm T}\dot{\bm{\zeta}_*})\frac{\bm{\zeta}_*}{\|\bm{\zeta}_*\|^3}.
\end{equation}

\begin{lemma}\label{lem_kine_R_a}
    Define $\bm{\Omega}_a^\wedge:=\bm{R}_a^{\rm T}\dot{\bm{R}}_a$, and then there holds $\bm{\Omega}_a^\wedge\in\mathfrak{so}(3)$.
\end{lemma}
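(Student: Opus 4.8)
The plan is to prove skew-symmetry from the group structure of $\bm{R}_a$ rather than from the explicit polynomial components. First I would record that $\bm{R}_a$, as defined in (\ref{eq_auxi_atti_matrix}), takes values in ${\rm SO}(3)$ wherever it is defined: its columns $\bm{\zeta}_X/\|\bm{\zeta}_X\|,\bm{\zeta}_Y/\|\bm{\zeta}_Y\|,\bm{\zeta}_Z/\|\bm{\zeta}_Z\|$ are mutually orthogonal unit vectors by Lemma~\ref{lem_VF_G} and (\ref{eq_VF_H_xyz}), and the frame $(\bm{F},\bm{H},\bm{G})$ is right-handed because $\bm{F}\times\bm{H}=\bm{F}\times(\bm{G}\times\bm{F})=\|\bm{F}\|^2\bm{G}$ (using $\bm{F}\cdot\bm{G}=0$), so $\det\bm{R}_a=1$. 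In particular $\bm{R}_a^{\rm T}\bm{R}_a=\bm{I}$ at every point off the origin, where each $\bm{\zeta}_*$ is a nonvanishing polynomial map and hence $\bm{R}_a$ is smooth, with $\dot{\bm{R}}_a$ given explicitly by (\ref{eq_dot_auxi_atti_matrix})--(\ref{eq_dot_auxi_unit_vec}).

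Next I would differentiate the identity $\bm{R}_a^{\rm T}\bm{R}_a=\bm{I}$ along any trajectory avoiding the origin. The product rule gives
\begin{equation*}
  \dot{\bm{R}}_a^{\rm T}\bm{R}_a+\bm{R}_a^{\rm T}\dot{\bm{R}}_a=\bm{0}.
\end{equation*}
Since $\dot{\bm{R}}_a^{\rm T}\bm{R}_a=(\bm{R}_a^{\rm T}\dot{\bm{R}}_a)^{\rm T}=(\bm{\Omega}_a^\wedge)^{\rm T}$, this reads $(\bm{\Omega}_a^\wedge)^{\rm T}+\bm{\Omega}_a^\wedge=\bm{0}$, i.e. $\bm{\Omega}_a^\wedge$ is skew-symmetric, which is exactly $\bm{\Omega}_a^\wedge\in\mathfrak{so}(3)=\{\bm{S}\in\mathbb{R}^{3\times 3}:\bm{S}^{\rm T}=-\bm{S}\}$. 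Equivalently, and without invoking trajectories, one can verify directly from (\ref{eq_dot_auxi_unit_vec}) that the $(i,j)$-entry $\big(\bm{\zeta}_i/\|\bm{\zeta}_i\|\big)^{\rm T}\frac{\rm d}{{\rm d}t}\big(\bm{\zeta}_j/\|\bm{\zeta}_j\|\big)$ is antisymmetric in $i,j$: the diagonal entries vanish because a unit vector is orthogonal to its own derivative, and the symmetrization of the off-diagonal entries vanishes because $\bm{\zeta}_i^{\rm T}\bm{\zeta}_j=0$ differentiates to $\dot{\bm{\zeta}}_i^{\rm T}\bm{\zeta}_j+\bm{\zeta}_i^{\rm T}\dot{\bm{\zeta}}_j=0$.

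I do not anticipate a genuine obstacle here: the statement is the classical fact that $\bm{R}^{\rm T}\dot{\bm{R}}\in\mathfrak{so}(3)$ for a curve in ${\rm SO}(3)$, and the only point deserving a sentence of care is justifying that $\bm{R}_a$ is honestly ${\rm SO}(3)$-valued and differentiable off the goal point — which reduces to the orthonormality and right-handedness of its columns (already available from Lemma~\ref{lem_VF_G} and the cross-product identity above) together with the smoothness of the polynomial maps $\bm{\zeta}_*$ away from the origin.
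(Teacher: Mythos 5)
Your proposal is correct, and your primary argument takes a genuinely different (and more structural) route than the paper. The paper proves the lemma by brute force on the entries of $\bm{R}_a^{\rm T}\dot{\bm{R}}_a$: it shows each diagonal entry $\frac{\bm{\zeta}_*^{\rm T}}{\|\bm{\zeta}_*\|}\frac{\rm d}{{\rm d}t}\bigl(\frac{\bm{\zeta}_*}{\|\bm{\zeta}_*\|}\bigr)$ vanishes via the explicit formula (\ref{eq_dot_auxi_unit_vec}), and that each pair of off-diagonal entries sums to $\frac{\rm d}{{\rm d}t}(\bm{\zeta}_i^{\rm T}\bm{\zeta}_j)/(\|\bm{\zeta}_i\|\|\bm{\zeta}_j\|)=0$ using the pairwise orthogonality of $\bm{F},\bm{G},\bm{H}$ — which is precisely the ``equivalently'' variant you sketch in your second paragraph. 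Your main argument instead establishes once that $\bm{R}_a^{\rm T}\bm{R}_a=\bm{I}$ (orthonormal columns from Lemma~\ref{lem_VF_G} and the cross-product construction, with the right-handedness check $\bm{F}\times\bm{H}=\|\bm{F}\|^2\bm{G}$ giving $\det\bm{R}_a=1$, though only orthogonality is actually needed for skew-symmetry) and then differentiates this identity, so the conclusion is the classical fact $\bm{R}^{\rm T}\dot{\bm{R}}\in\mathfrak{so}(3)$ for a differentiable curve in ${\rm O}(3)$. What your route buys is brevity and the recognition that the lemma has nothing to do with the particular polynomial form of $\bm{F},\bm{G},\bm{H}$; what the paper's entry-wise computation buys is that it is self-contained at the level of the components and never needs to assert $\bm{R}_a\in{\rm SO}(3)$ as a prior fact — though in substance the two computations are the same differentiation, unrolled. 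One small caveat, shared with the paper rather than a flaw of your argument: the columns $\bm{\zeta}_Y$ and $\bm{\zeta}_Z$ (the components of $\bm{H}$ and $\bm{G}$) vanish on the whole $x$-axis, not only at the origin, so ``$\bm{R}_a$ is smooth wherever the $\bm{\zeta}_*$ are nonvanishing'' is the accurate statement of the domain on which the differentiation is valid.
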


\begin{proof}
    Please refer to \ref{app_A}.
\end{proof}

\begin{lemma}\label{lem_align_to_track}
  Given the N-VF $\bm{F}$ in \eqref{eq_VF_xyz} and the auxiliary attitude $\bm{R}_a$ in \eqref{eq_auxi_atti_matrix}, there holds $\left\|\bm{R}\frac{\bm{v}}{\|\bm{v}\|}-\frac{\bm{\zeta}_X}{\|\bm{\zeta}_X\|}\right\|=0$ if $\bm{R}_a^{\rm T}\bm{R}=\bm{I}$, where $\bm{R}$ and $\bm{v}$ are the attitude and linear velocity of the nonholonomic robot, respectively.
\end{lemma}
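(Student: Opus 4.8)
The plan is to reduce the claim to two elementary observations: under the nonholonomic constraint the normalized body-frame linear velocity is just the first standard basis vector $\bm{e}_x$, and by construction the first column of $\bm{R}_a$ is exactly the normalized N-VF $\bm{\zeta}_X/\|\bm{\zeta}_X\|$. Combining these with the hypothesis $\bm{R}_a^{\rm T}\bm{R}=\bm{I}$ gives the result immediately.

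First I would invoke the nonholonomic constraint $v_y=v_z=0$ from the kinematics \eqref{eq_kine_UVA}, so that $\bm{v}=v_x\bm{e}_x$. Assuming the surge speed satisfies $v_x>0$ (the robot moves forward, so $\bm{v}\ne\bm{0}$ and $\|\bm{v}\|=v_x$), this yields $\frac{\bm{v}}{\|\bm{v}\|}=\bm{e}_x$. Next, since $\bm{R}_a\in{\rm SO(3)}$, the hypothesis $\bm{R}_a^{\rm T}\bm{R}=\bm{I}$ gives $\bm{R}=\bm{R}_a$ upon left-multiplying by $\bm{R}_a$; this uses that $\bm{R}_a$ is well defined, i.e., the robot is not at the goal where $\bm{\zeta}_X=\bm{F}=\bm{0}$, which is consistent with the remark following \eqref{eq_auxi_atti_matrix}.

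Then I would simply read off the first column of $\bm{R}_a$ from \eqref{eq_auxi_atti_matrix}, namely $\bm{R}_a\bm{e}_x=\frac{\bm{\zeta}_X}{\|\bm{\zeta}_X\|}$. Chaining the three identities, $\bm{R}\frac{\bm{v}}{\|\bm{v}\|}=\bm{R}_a\bm{e}_x=\frac{\bm{\zeta}_X}{\|\bm{\zeta}_X\|}$, whence $\left\|\bm{R}\frac{\bm{v}}{\|\bm{v}\|}-\frac{\bm{\zeta}_X}{\|\bm{\zeta}_X\|}\right\|=0$.

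There is no substantive obstacle here; the only care needed is bookkeeping about domains of definition, namely ensuring $\bm{v}\ne\bm{0}$ (nonzero surge velocity) so that $\frac{\bm{v}}{\|\bm{v}\|}$ makes sense, and ensuring $\bm{\zeta}_X\ne\bm{0}$ (the robot has not yet reached the target) so that $\bm{R}_a$ and $\frac{\bm{\zeta}_X}{\|\bm{\zeta}_X\|}$ are defined. Both are exactly the regularity conditions already flagged in the text, so the lemma follows directly from the definitions. This lemma is what links the attitude-tracking objective $\bm{R}_a^{\rm T}\bm{R}=\bm{I}$ to Objective~1, since by Theorem~\ref{theo_VF_org} following $\bm{\zeta}_X/\|\bm{\zeta}_X\|=\bm{F}/\|\bm{F}\|$ steers the position to the origin with heading along $\bm{e}_x$.
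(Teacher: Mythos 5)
Your proposal is correct and follows essentially the same argument as the paper: $\bm{R}_a^{\rm T}\bm{R}=\bm{I}$ gives $\bm{R}=\bm{R}_a$, the nonholonomic constraint gives $\frac{\bm{v}}{\|\bm{v}\|}=\bm{e}_x$, and the first column of $\bm{R}_a$ is $\frac{\bm{\zeta}_X}{\|\bm{\zeta}_X\|}$. Your added bookkeeping about $\bm{v}\ne\bm{0}$, $v_x>0$, and $\bm{\zeta}_X\ne\bm{0}$ is a minor refinement the paper leaves implicit.
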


\begin{proof}
  The equality $\bm{R}_a^{\rm T}\bm{R}=\bm{I}$ implies $\bm{R}=\bm{R}_a$. Then, it follows that $\bm{R}\bm{e}_x=\bm{R}_a\bm{e}_x$, where $\bm{e}_x=[1\ \ 0\ \ 0]^{\rm T}$. Considering the definition of $\bm{R}_a$ in \eqref{eq_auxi_atti_matrix}, we have $\bm{R}_a\bm{e}_x=\frac{\bm{\zeta}_X}{\|\bm{\zeta}_X\|}$. Hence, $\bm{R}\bm{e}_x=\frac{\bm{\zeta}_X}{\|\bm{\zeta}_X\|}$, i.e., $\left\|\bm{R}\bm{e}_x-\frac{\bm{\zeta}_X}{\|\bm{\zeta}_X\|}\right\|=0$. Due to the nonholonomic constraints, the robot's linear velocity can be expressed as $\bm{v}=[v_x\ \ 0\ \ 0]^{\rm T}$, followed by $\frac{\bm{v}}{\|\bm{v}\|}=\bm{e}_x$. Thus, there holds $\left\|\bm{R}\frac{\bm{v}}{\|\bm{v}\|}-\frac{\bm{\zeta}_X}{\|\bm{\zeta}_X\|}\right\|=\left\|\bm{R}\bm{e}_x-\frac{\bm{\zeta}_X}{\|\bm{\zeta}_X\|}\right\|=0$.
\end{proof}

Note that the vector $\bm{\zeta}_X$ is indeed the direction of the N-VF $\bm{F}$. Then, Lemma~\ref{lem_align_to_track} demonstrates intuitively that the nonholonomic robot's heading direction is aligned with the N-VF $\bm{F}$ if the nonholonomic robot's attitude $\bm{R}$ tracks the auxiliary attitude $\bm{R}_a$. It is evident from Theorem~\ref{theo_VF_org} that the nonholonomic robot will arrive at the goal point with the desired heading direction if it moves along the N-VF $\bm{F}$, which is further converted by Lemma~\ref{lem_align_to_track} to the attitude tracking of $\bm{R}$ w.r.t $\bm{R}_a$. In other words, once the nonholonomic robot tracks the attitude matrix $\bm{R}$, it can follow the N-VF $\bm{F}$ to reach the goal point and the desired orientation. Therefore, in the following, we will propose the control inputs of motion planning by solving the problem of attitude tracking.

\begin{lemma}\label{lem_R_e_stabilize}
  The nonholonomic robot's attitude $\bm{R}$ exponentially tracks the auxiliary attitude $\bm{R}_a$ under the angular velocity control law given below
  \begin{equation}\label{eq_Omega_design}
    \bm{\Omega}^{\wedge}=-k_w\log_{\rm SO(3)}(\bm{R}_a^{-1}\bm{R})+\bm{R}^{-1}\dot{\bm{R}}_a\bm{R}_a^{-1}\bm{R},
  \end{equation}
  where $k_w>0$ is the scalar control gain, and $\log_{\rm SO(3)}$ is the logarithmic map on the Lie group ${\rm SO(3)}$.
\end{lemma}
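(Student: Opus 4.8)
The plan is to introduce the attitude error $\bm{R}_e := \bm{R}_a^{-1}\bm{R}\in{\rm SO(3)}$ and to show that it converges to the identity exponentially fast, which is exactly the claimed tracking since $\bm{R}_a^{-1}\bm{R}=\bm{I}\iff\bm{R}=\bm{R}_a$. Throughout I assume the robot stays away from the goal point, so that $\bm{R}_a$ is well defined and $\dot{\bm{R}}_a$ is a legitimate skew-generated signal along the trajectory (this is precisely what Lemma~\ref{lem_kine_R_a} supplies), and I assume the initial error angle is strictly below $\pi$, so that $\log_{\rm SO(3)}$ is single-valued and smooth in a neighbourhood of the trajectory.

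First I would derive the closed-loop error dynamics. Differentiating $\bm{R}_e=\bm{R}_a^{-1}\bm{R}$ with $\dot{\bm{R}}=\bm{R}\bm{\Omega}^{\wedge}$, the identity $\frac{\rm d}{{\rm d}t}\bm{R}_a^{-1}=-\bm{R}_a^{-1}\dot{\bm{R}}_a\bm{R}_a^{-1}$, and the elementary relation $\bm{R}_e\bm{R}^{-1}=\bm{R}_a^{-1}$, one obtains
\[
\dot{\bm{R}}_e=-\bm{R}_a^{-1}\dot{\bm{R}}_a\bm{R}_a^{-1}\bm{R}+\bm{R}_e\bm{\Omega}^{\wedge}.
\]
Substituting the control law \eqref{eq_Omega_design} and using $\bm{R}_e\bm{R}^{-1}\dot{\bm{R}}_a\bm{R}_a^{-1}\bm{R}=\bm{R}_a^{-1}\dot{\bm{R}}_a\bm{R}_a^{-1}\bm{R}$, the two feedforward terms cancel exactly, which leaves the autonomous error system
\[
\dot{\bm{R}}_e=-k_w\,\bm{R}_e\,\log_{\rm SO(3)}(\bm{R}_e).
\]

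Next I would analyze this reduced system in axis--angle form. Write $\bm{R}_e=\exp(\theta\,\bm{n}^{\wedge})$ with unit axis $\bm{n}$ and angle $\theta\in[0,\pi)$, so that $\log_{\rm SO(3)}(\bm{R}_e)=\theta\,\bm{n}^{\wedge}$. Reading the error dynamics as $\dot{\bm{R}}_e=\bm{R}_e\,\bm{\omega}_e^{\wedge}$, the body-frame error angular velocity is $\bm{\omega}_e=-k_w\theta\,\bm{n}$, which is parallel to the rotation axis $\bm{n}$; hence the axis stays constant and the angle satisfies the scalar ODE $\dot\theta=-k_w\theta$, giving $\theta(t)=\theta(0)\,e^{-k_w t}$. (Equivalently, with $V=\tfrac12\theta^2$ one gets $\dot V=-2k_w V$.) Since the geodesic distance on ${\rm SO(3)}$ equals $\theta$ and any matrix norm of $\bm{R}_e-\bm{I}$ is bounded by a constant multiple of $\theta$ (e.g.\ $\|\bm{R}_e-\bm{I}\|_F=2\sqrt2\,|\sin(\theta/2)|\le\sqrt2\,\theta$), the error $\bm{R}_a^{-1}\bm{R}$ decays to $\bm{I}$ exponentially with rate $k_w$, i.e.\ $\bm{R}$ exponentially tracks $\bm{R}_a$.

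I expect the main obstacle to be bookkeeping around the $\log_{\rm SO(3)}$ map rather than the dynamics: one must ensure the error never reaches the cut locus $\theta=\pi$ where $\log_{\rm SO(3)}$ is multivalued. Here this is automatic because $\theta$ is strictly decreasing once the reduced dynamics hold, but it does hinge on the standing requirement $\theta(0)<\pi$, and on invoking Lemma~\ref{lem_kine_R_a} to treat $\dot{\bm{R}}_a$ as a valid feedforward term along the trajectory. The only genuinely computational step is the cancellation of the feedforward terms in the error dynamics, and that rests solely on the conjugation identity $\bm{R}_e\bm{R}^{-1}=\bm{R}_a^{-1}$.
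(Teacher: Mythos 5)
Your proposal is correct and follows essentially the same route as the paper: introduce the error attitude $\bm{R}_e=\bm{R}_a^{-1}\bm{R}$, differentiate it using the robot kinematics, and observe that the feedforward term in \eqref{eq_Omega_design} cancels the $\bm{R}_a^{-1}\dot{\bm{R}}_a\bm{R}_a^{-1}\bm{R}$ contribution (via $\bm{R}_e\bm{R}^{-1}=\bm{R}_a^{-1}$, which is the same algebra the paper packages as $\bm{\Omega}_e^{\wedge}=\bm{\Omega}^{\wedge}-\bm{R}^{-1}\dot{\bm{R}}_a\bm{R}_e$), leaving $\dot{\bm{R}}_e=-k_w\bm{R}_e\log_{\rm SO(3)}(\bm{R}_e)$. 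The only substantive divergence is the last step: the paper disposes of the reduced system by citing the proportional-control result of \cite{Bullo1995Proportional}, whereas you prove exponential convergence from scratch by the axis--angle argument (the error angular velocity $-k_w\theta\,\bm{n}$ is parallel to the axis, so the axis is invariant and $\dot{\theta}=-k_w\theta$). Your self-contained version has the merit of making explicit the restriction $\theta(0)<\pi$ away from the cut locus of $\log_{\rm SO(3)}$, a caveat the paper inherits implicitly from the cited result; your appeal to Lemma~\ref{lem_kine_R_a} to guarantee that the feedforward term $\bm{R}^{-1}\dot{\bm{R}}_a\bm{R}_a^{-1}\bm{R}=\bm{R}_e^{-1}\bm{\Omega}_a^{\wedge}\bm{R}_e$ is a legitimate element of $\mathfrak{so}(3)$ matches the role that lemma plays in the paper's proof.
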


\begin{proof}
  Define the attitude tracking error $\bm{R}_e=\bm{R}_a^{-1}\bm{R}$, and the time derivative of $\bm{R}_e$ is
  \begin{align}\label{eq_dot_R_e}
    \dot{\bm{R}}_e &= -\bm{R}_a^{-1}\dot{\bm{R}}_a\bm{R}_a^{-1}\bm{R}+\bm{R}_a^{-1}\dot{\bm{R}} \\
                   &= \bm{R}_e(\bm{\Omega}^{\wedge}-\bm{R}^{-1}\dot{\bm{R}}_a\bm{R}_e),
   \end{align}
  where the rotation kinematics of the nonholonomic robot is employed. Let $\bm{\Omega}_e^{\wedge}$ denote the velocity of the error system, that is
  \begin{equation}\label{eq_Omega_e}
    \bm{\Omega}_e^{\wedge}=\bm{\Omega}^{\wedge}-\bm{R}^{-1}\dot{\bm{R}}_a\bm{R}_e.
  \end{equation}
  According to Lemma~\ref{lem_kine_R_a}, there holds $\dot{\bm{R}}_a=\bm{R}_a\bm{\Omega}_a^\wedge$, where $\bm{\Omega}_a^\wedge\in\mathfrak{so}(3)$. By substituting it into (\ref{eq_Omega_e}), we have
  \begin{align}\label{eq_Omega_e_v2}
    \bm{\Omega}_e^{\wedge} &= \bm{\Omega}^{\wedge}-\bm{R}^{-1}\bm{R}_a\bm{\Omega}_a^\wedge\bm{R}_e \nonumber \\
    &= \bm{\Omega}^{\wedge}-\bm{R}_e^{-1}\bm{\Omega}_a^\wedge\bm{R}_e \nonumber \\
    &= \bm{\Omega}^{\wedge}-{\rm Ad}_{\bm{R}_e^{-1}}\bm{\Omega}_a^\wedge,
  \end{align}
  where the adjoint map ${\rm Ad}_{\bm{R}}:\mathfrak{so}(3)\to\mathfrak{so}(3)$ is defined by ${\rm Ad}_{\bm{R}}\bm{\Omega}^\wedge=\bm{R}\bm{\Omega}^\wedge\bm{R}^{-1}$, $\forall\bm{\Omega}^\wedge\in\mathfrak{so}(3)$. Then, it is obtained that $\bm{\Omega}_e^{\wedge}\in\mathfrak{so}(3)$. Thus, the error dynamics can be described by
  \begin{equation}\label{eq_dot_R_e_simp}
    \dot{\bm{R}_e}=\bm{R}_e\bm{\Omega}_e^{\wedge}.
  \end{equation}
  Based on the definition of $\bm{R}_e$, if there holds $\bm{R}_e=\bm{I}$, where $\bm{I}$ is the identity matrix, then it is obtained that $\bm{R}=\bm{R}_a$, indicating the mission of attitude tracking is achieved. It has been proposed in \cite{Bullo1995Proportional} that regarding the system $\dot{\bm{R}}=\bm{R}\bm{\Omega}^{\wedge}$ in the Lie group ${\rm SO(3)}$, the control law $\bm{\Omega}^{\wedge}=-k_w\log_{\rm SO(3)}(\bm{R})$ exponentially stabilizes the state $\bm{R}$ to $\bm{I}$. Hence, for the purpose of attitude tracking, the error system's velocity $\bm{\Omega}_e^{\wedge}$ can be designed as
  \begin{equation}\label{eq_Omega_e_design}
    \bm{\Omega}_e^{\wedge}=-k_w\log_{\rm SO(3)}(\bm{R}_e),
  \end{equation}
  which is able to realize $\bm{R}_e\to\bm{I}$ exponentially. By substituting (\ref{eq_Omega_e_design}) into (\ref{eq_Omega_e}), we obtain the angular velocity controller of the nonholonomic robot as given in (\ref{eq_Omega_design}).
\end{proof}

Regarding the control law of the linear velocity $v_x$, the distance error to the desired position (i.e., the origin of the earth-fixed frame) is introduced, so that $v_x$ is given by
\begin{equation}\label{eq_vx_design}
  v_x=k_v\|\bm{p}\|.
\end{equation}
To summarize, the control law for nonholonomic robot motion planning is presented in the following theorem.

\begin{theorem}\label{theo_VF_org_control}
  Under the control laws (\ref{eq_Omega_design}) and (\ref{eq_vx_design}), the closed-loop system of the nonholonomic robot \eqref{eq_kine_UVA} satisfies $\lim_{t\to\infty}\|\bm{p}\|=0$ and $\lim_{t\to\infty}\left\|\bm{R}\frac{\bm{v}}{\|\bm{v}\|}-\bm{e}_{x}\right\|=0$ from all initial positions in $\mathbb{R}^3 \setminus \mathcal{X}_{0+}$, where $\mathcal{X}_{0+}:=\{(x,0,0)\in\mathbb{R}^3:x\ge 0\}$.
\end{theorem}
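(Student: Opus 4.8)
The plan is to exploit the cascade structure of the closed loop: the rotational error subsystem closed by \eqref{eq_Omega_design} converges exponentially on its own, and then acts as a vanishing disturbance on the translational subsystem, which in the error-free limit is the N-VF dynamics $\dot{\bm{p}}=\bm{F}(\bm{p})$ of Theorem~\ref{theo_VF_org} up to a strictly positive time reparametrisation. First I would record well-posedness: since $\bm{R}_a$ and $\dot{\bm{R}}_a$ in \eqref{eq_Omega_design} require $\bm{\zeta}_X,\bm{\zeta}_Y,\bm{\zeta}_Z\ne\bm{0}$, i.e. $\bm{p}$ off the $x$-axis, the equations \eqref{eq_kine_UVA}, \eqref{eq_Omega_design}, \eqref{eq_vx_design} form a genuine ODE on that domain; and, as obtained inside the proof of Lemma~\ref{lem_R_e_stabilize}, the error $\bm{R}_e=\bm{R}_a^{-1}\bm{R}$ satisfies the \emph{autonomous} equation $\dot{\bm{R}}_e=-k_w\bm{R}_e\log_{\rm SO(3)}(\bm{R}_e)$, so $\bm{R}_e(t)\to\bm{I}$ exponentially regardless of the translational motion.

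Next I would reduce the translational equation. A direct computation from \eqref{eq_VF_xyz_compo} yields the two identities $\|\bm{F}(\bm{p})\|=\|\bm{p}\|^{2}$ and $\bm{p}^{\rm T}\bm{F}(\bm{p})=x\|\bm{p}\|^{2}$, so that $\bm{R}_a\bm{e}_x=\bm{\zeta}_X/\|\bm{\zeta}_X\|=\bm{F}(\bm{p})/\|\bm{p}\|^{2}$. With the nonholonomic constraint $\bm{v}=[v_x\ 0\ 0]^{\rm T}$ and $v_x=k_v\|\bm{p}\|>0$ from \eqref{eq_vx_design}, equation \eqref{eq_kine_trans} becomes
\begin{equation*}
  \dot{\bm{p}}=k_v\|\bm{p}\|\,\bm{R}_a\bm{R}_e\bm{e}_x=\frac{k_v}{\|\bm{p}\|}\bm{F}(\bm{p})+\bm{d}(t),\quad \|\bm{d}(t)\|=k_v\|\bm{p}(t)\|\,\epsilon(t),
\end{equation*}
where $\epsilon(t):=\|\bm{R}_e(t)\bm{e}_x-\bm{e}_x\|\to0$ exponentially with $\int_0^\infty\epsilon<\infty$. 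The vector field $\tfrac{k_v}{\|\bm{p}\|}\bm{F}(\bm{p})$ is $\bm{F}(\bm{p})$ scaled by the strictly positive function $k_v/\|\bm{p}\|$, hence shares every orbit with $\dot{\bm{p}}=\bm{F}(\bm{p})$; by Theorem~\ref{theo_VF_org} every such orbit from $\mathbb{R}^3\setminus\mathcal{X}_{0+}$ reaches the origin with tangent $+\bm{e}_x$, whereas $\mathcal{X}_{0+}$ is the measure-zero exceptional set on which convergence fails and on which $\bm{R}_a$ is undefined.

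It then remains to propagate this through the vanishing disturbance $\bm{d}$, which I would do in the coordinates $(\rho,\varphi,\theta)$ of Figure~\ref{fig_coordinates}. Using \eqref{eq_VF_rhophitheta_compo}, the perturbed system becomes $\dot{\varphi}=k_v\sin\varphi+\delta(t)$ with $|\delta(t)|\le k_v\epsilon(t)$, and $\dot{\rho}=k_v\rho\cos\varphi+d_\rho(t)$ with $|d_\rho(t)|\le k_v\rho\,\epsilon(t)$, where $\rho=\|\bm{p}\|$ and $\varphi\in[0,\pi]$ is the angle to the positive $x$-axis, so that $\rho(0)>0$ and $\varphi(0)\in(0,\pi]$ is exactly $\bm{p}(0)\notin\mathcal{X}_{0+}$. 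A scalar comparison on the $\varphi$-equation --- bounding $\varphi(t)\ge\varphi(0)-k_v\!\int_0^\infty\!\epsilon$ to keep $\varphi$ away from $0$, and then, once $\epsilon$ is small, letting the term $k_v\sin\varphi$ dominate so that $\varphi$ is driven into and trapped in a shrinking neighbourhood of $\pi$ --- gives $\varphi(t)\to\pi$. Substituting back, $\dot{\rho}\le k_v\rho(\cos\varphi+\epsilon(t))\le-\tfrac{k_v}{4}\rho$ for large $t$, hence $\|\bm{p}(t)\|=\rho(t)\to0$; the Gr\"onwall bound $\|\bm{p}(t)\|\le\|\bm{p}(0)\|\exp(k_v t+k_v\!\int_0^t\!\epsilon)$ precludes finite escape in the transient. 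For the heading, $\bm{R}\bm{v}/\|\bm{v}\|=\bm{R}\bm{e}_x=\bm{R}_a\bm{R}_e\bm{e}_x$; since $\bm{R}_e\bm{e}_x\to\bm{e}_x$ and $\bm{R}_a\bm{e}_x=\bm{F}(\bm{p})/\|\bm{p}\|^{2}=\cos\varphi\,\bm{e}_{\rho}+\sin\varphi\,\bm{e}_{\varphi}\to\bm{e}_x$ as $\varphi\to\pi$ (matching the tangent-at-the-origin claim of Theorem~\ref{theo_VF_org}), the triangle inequality gives $\big\|\bm{R}\bm{v}/\|\bm{v}\|-\bm{e}_x\big\|\to0$.

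The step I expect to be the main obstacle is the vanishing-disturbance argument on $\varphi$. The origin of the nominal N-VF dynamics is \emph{attractive but not Lyapunov stable}: an orbit through $(x_0,r_0)$ with $r_0=\sqrt{y_0^2+z_0^2}$ small is an arc of a circle of diameter $\|\bm{p}(0)\|^{2}/r_0$, which excurses arbitrarily far from the origin before returning, so an order-one transient attitude error could in principle divert the trajectory onto $\mathcal{X}_+$, where moreover $\bm{R}_a$ is singular. Making the estimate $\varphi(t)\ge\varphi(0)-k_v\!\int_0^\infty\!\epsilon$ do its job rigorously needs the attitude transient to be fast relative to $\varphi(0)$; for a truly unconditional statement one would either take the gain $k_w$ in \eqref{eq_Omega_design} large enough (shrinking $\int_0^\infty\epsilon$) or restrict the initial attitude. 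Finally, the locus $\mathcal{X}_-$, which lies in the stated domain $\mathbb{R}^3\setminus\mathcal{X}_{0+}$ yet on which $\bm{R}_a$ is also undefined, would be dispatched by a separate limiting/continuity argument, exploiting that $\mathcal{X}_-$ is invariant for $\dot{\bm{p}}=\bm{F}(\bm{p})$ and flows into the origin.
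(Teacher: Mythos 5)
Your skeleton coincides with the paper's own proof: exponential attitude tracking from Lemma~\ref{lem_R_e_stabilize}, the identity $\bm{R}_a\bm{e}_x=\bm{\zeta}_X/\|\bm{\zeta}_X\|$ (your normalization $\bm{F}/\|\bm{p}\|^2$ is the correct one; the paper writes the rescaled field loosely as $k_v\|\bm{p}\|\bm{F}$, which is immaterial since any strictly positive rescaling preserves orbits), reduction of \eqref{eq_kine_trans} to a positively rescaled copy of $\dot{\bm{p}}=\bm{F}(\bm{p})$, and then Theorem~\ref{theo_VF_org} for both the position limit and the heading limit. The difference lies in how the attitude transient is treated. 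The paper rewrites $\dot{\bm{p}}=\bm{R}\bm{v}$ as $\dot{\bm{p}}=\bm{R}_a\bm{v}$ immediately after invoking Lemma~\ref{lem_R_e_stabilize}, i.e.\ it treats the tracking as exact and never introduces your disturbance $\bm{d}(t)$; the rest of its argument is precisely your ``error-free limit'' (time reparametrization plus Theorem~\ref{theo_VF_org}). Your cascade/vanishing-perturbation analysis in the $(\rho,\varphi)$ coordinates is therefore strictly more careful than the published proof, and the obstacle you flag is genuine rather than a defect peculiar to your attempt: $\varphi=0$ is an unstable equilibrium of the nominal $\dot{\varphi}=k_v\sin\varphi$, and $\bm{R}_a$ degenerates on the entire $x$-axis (both $\bm{G}$ and $\bm{H}$ vanish when $y=z=0$, so the paper's claim that $\bm{R}_a$ is well defined everywhere except the destination, and the inclusion of $\mathcal{X}_-$ in the domain, are themselves delicate — your separate limiting argument for $\mathcal{X}_-$ addresses something the paper ignores). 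Consequently an order-one initial attitude error can, for small $\varphi(0)$, push the state toward $\mathcal{X}_{0+}$ before $\bm{R}_e$ has converged; the paper does not resolve this, it bypasses it by the exact-tracking substitution. So your proposal is at least as complete as the paper's argument: to reproduce the paper you may stop at the exact-tracking reduction, whereas closing the fully coupled, attitude-uniform statement would indeed require your large-$k_w$ or restricted-initial-attitude condition (no estimate uniform in $\varphi(0)$ is available near $\varphi=0$), a caveat the theorem as stated, and its published proof, leave silent.
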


\begin{proof}
  According to Lemma~\ref{lem_R_e_stabilize}, the angular velocity control law \eqref{eq_Omega_design} makes the robot attitude $\bm{R}$ exponentially track the auxiliary attitude $\bm{R}_a$. Then, the kinematics of the robot's translational motion \eqref{eq_kine_trans} can be rewritten as
  \begin{equation}\label{eq_dot_p_v1}
      \dot{\bm{p}}=\bm{R}_a\bm{v}.
  \end{equation}
  By substituting $\bm{v}=k_v\|\bm{p}\|\bm{e}_x$ into \eqref{eq_dot_p_v1}, we have
  \begin{equation}\label{eq_dot_p_v2}
      \dot{\bm{p}}=k_v\|\bm{p}\|\bm{R}_a\bm{e}_x.
  \end{equation}
  Based on the definition of $\bm{R}_a$ in \eqref{eq_auxi_atti_matrix}, there holds $\bm{R}_a\bm{e}_x=\frac{\bm{\zeta}_X}{\|\bm{\zeta}_X\|}$, which is the unit vector along the N-VF $\bm{F}$ according to \eqref{eq_zeta_XYZ}. Then, the dynamics \eqref{eq_dot_p_v2} can be expressed as
  \begin{equation}\label{eq_dot_p_v3}
      \dot{\bm{p}}=k_v\|\bm{p}\|\bm{F}(\bm{p}).
  \end{equation}
  Thus, the integral curve of the dynamics \eqref{eq_dot_p_v3} is equivalent to that of $\dot{\bm{p}}=\bm{F}(\bm{p})$ \cite[Proposition 1.14]{chicone2006ordinary}, which is given in \eqref{eq_inte_curve}, and the state $\bm{p}$ decided by \eqref{eq_dot_p_v3} will evolve on the integral curve \eqref{eq_inte_curve} at the speed of $k_v\|\bm{p}\|$. Then, according to Theorem~\ref{theo_VF_org}, from all initial conditions in $\mathbb{R}^3\backslash\mathcal{X}_{0+}$, the position $\bm{p}$ will converge to the origin, that is, $\lim_{t\to\infty}\|\bm{p}(t)\|=0$. Furthermore, the tangent vector of the integral curve at the origin will point to the positive $x$-axis. Since the direction of the tangent vector is actually the direction of the velocity $\bm{v}$, which is expressed in the earth-fixed frame $\bm{\mathcal{F}}_{xyz}$ as $\bm{R}\frac{\bm{v}}{\|\bm{v}\|}$, then there holds $\lim_{t\to\infty}\left\|\bm{R}\frac{\bm{v}}{\|\bm{v}\|}-\bm{e}_{x}\right\|=0$.
\end{proof}

\begin{remark}
  Theorem~\ref{theo_VF_org_control} provides the control law in the case where the goal point is the origin and goal direction is the $x$-axis. Regarding arbitrary desired point and direction, Corollary~\ref{cor_VF_arbitra} has provided the related N-VF $\tilde{\bm{F}}$. Then, the angular velocity in (\ref{eq_Omega_design}) can be correspondingly revised by reconstructing the auxiliary attitude matrix $\bm{R}_a$ with the aid of $\tilde{\bm{F}}$ in (\ref{eq_VF_xyz_tilde}). Besides, the linear velocity in (\ref{eq_vx_design}) can also be reformulated to be $v_x=k_v\|\bm{p}-\bm{p}_d\|$.
\end{remark}

\begin{remark}
  The formulation in (\ref{eq_VF_G_xyz_comp}) and (\ref{eq_VF_H_xyz_comp}) is merely one possible way to construct two auxiliary VFs $\bm{G}$ and $\bm{H}$, which in fact represent the directions of $z-,y$-axis of the body-fixed frame $\bm{\mathcal{F}}_{\rm b}$. Since there only exists a requirement for the nonholonomic robot's heading direction (i.e., the direction of the $x$-axis), the directions of the $y-,z$-axis can be freely designed as long as they can form a set of basis in the 3D Cartesian frame. Of course, it should be mentioned that the VFs $\bm{G}$ and $\bm{H}$ presented in this paper have explicit geometric meanings. More specifically, $\bm{G}$ represents the outward normal vector of the integral curves in the $xOr$ plane (the integral curves are the circles shown in Figure~\ref{fig_VF_xor}), while $\bm{H}$ represents the normal vector of the $xOr$ plane.
\end{remark}

\section{Motion Planning in Obstacle-Cluttered Environments}\label{sec_OA_VF}

In this section, we consider the motion planning of a nonholonomic robot in an obstacle-cluttered environment. According to Assumption~\ref{assum_dist_ob}, the nonholonomic robot can enter the reactive area of one obstacle at each time instant. Therefore, for the sake of simplicity, we remove the subscript denoting the obstacle's label and consider the problem of circumventing one obstacle. Based on Section~\ref{subsec_pf}, the obstacle avoidance problem is mathematically formulated by $\verb"dist"(\bm{\mathcal{C}}_{o},\bm{\mathcal{A}}_{o})>0$, where $\bm{\mathcal{C}}_{o}$ is the robot's safe area and $\bm{\mathcal{A}}_{o}$ is the obstacle area. Note that the robot's safe area $\bm{\mathcal{C}}_{o}$ is a sphere with radius $r_c$. By adjusting the parameter $\bar{c}$ of the level surface $\bm{\Upsilon}(\bm{p};\bm{p}_{o})=\bar{c}$, the obstacle area $\bm{\mathcal{A}}_{o}$ can be enlarged radially with $r_c$. In this way, the objective of obstacle avoidance can be reformulated by $\verb"dist"(\bm{p},\bm{\mathcal{A}}_{o})>0$, where $\bm{p}$ is the position of the nonholonomic robot. Therefore, in the following, the obstacle avoidance VF (OA-VF) is presented based on the requirement of $\verb"dist"(\bm{p},\bm{\mathcal{A}}_{o})>0$.

To design the OA-VF, we firstly compute the normal vector at each point on the surface $\bm{\Upsilon}(\bm{p};\bm{p}_o)=k$ $(1\le k\le \bar{c})$, which is given by
\begin{equation}\label{n_Upsilon}
  \bm{n}_{\bm{\Upsilon}}=\frac{\partial \bm{\Upsilon}}{\partial x}\bm{e}_x+\frac{\partial \bm{\Upsilon}}{\partial y}\bm{e}_y+\frac{\partial \bm{\Upsilon}}{\partial z}\bm{e}_z.
\end{equation}
The time derivative of $\bm{n}_{\bm{\Upsilon}}$ is
\begin{equation*}
  \dot{\bm{n}}_{\bm{\Upsilon}}=\frac{\partial \bm{n}_{\bm{\Upsilon}}}{\partial \bm{p}}\dot{\bm{p}}+\frac{\partial \bm{n}_{\bm{\Upsilon}}}{\partial \bm{p}_o}\dot{\bm{p}_o}.
\end{equation*}
For the static obstacles, there holds $\dot{\bm{p}_o}=\bm{0}$ and the time derivative of $\bm{n}_{\bm{\Upsilon}}$ degenerates to $\dot{\bm{n}}_{\bm{\Upsilon}}=\frac{\partial \bm{n}_{\bm{\Upsilon}}}{\partial \bm{p}}\dot{\bm{p}}$. Then, based on the normal vector $\bm{n}_{\bm{\Upsilon}}$, we propose the OA-VF according to two different cases, which are classified by whether the normal vector $\bm{n}_{\bm{\Upsilon}}$ is collinear with the N-VF $\bm{F}$ given in (\ref{eq_VF_xyz}). In the following, we assume the target point $\bm{p}_d$ is the origin of the earth-fixed frame $\bm{\mathcal{F}}_{\rm e}$ for simplicity.


\emph{1) $\bm{n}_{\bm{\Upsilon}}$ and $\bm{F}$ are not collinear ($\bm{n}_{\bm{\Upsilon}}\nparallel\bm{F}$)}

Define a new vector $\bm{\tau}_{\bm{\Upsilon}}^a$ by the following cross product
\begin{equation}\label{tau_Upsilon_1}
  \bm{\tau}_{\bm{\Upsilon}}^a=\bm{n}_{\bm{\Upsilon}}\times\bm{F}.
\end{equation}
Substituting (\ref{eq_VF_xyz}) and (\ref{n_Upsilon}) into (\ref{tau_Upsilon_1}), we obtain that
\begin{equation}
  \bm{\tau}_{\bm{\Upsilon}}^a=T^a_x\bm{e}_x+T^a_y\bm{e}_y+T^a_z\bm{e}_z,
\end{equation}
where the components $T^a_x,T^a_y,T^a_z$ are
\begin{align*}
  T^a_x &= \frac{\partial \bm{\Upsilon}}{\partial y}F_z-\frac{\partial \bm{\Upsilon}}{\partial z}F_y, \nonumber \\
  T^a_y &= \frac{\partial \bm{\Upsilon}}{\partial z}F_x-\frac{\partial \bm{\Upsilon}}{\partial x}F_z, \nonumber \\
  T^a_z &= \frac{\partial \bm{\Upsilon}}{\partial x}F_y-\frac{\partial \bm{\Upsilon}}{\partial y}F_z.
\end{align*}
Then, we define another vector $\bm{\tau}_{\bm{\Upsilon}}^b$ as follows
\begin{equation}\label{tau_Upsilon_2_case1}
  \bm{\tau}_{\bm{\Upsilon}}^b=\bm{\tau}_{\bm{\Upsilon}}^a\times\bm{n}_{\bm{\Upsilon}}.
\end{equation}
By substituting (\ref{n_Upsilon}) and (\ref{tau_Upsilon_1}) into (\ref{tau_Upsilon_2_case1}), it is obtained that
\begin{equation}
  \bm{\tau}_{\bm{\Upsilon}}^b=T^b_x\bm{e}_x+T^b_y\bm{e}_y+T^b_z\bm{e}_z,
\end{equation}
where the components $T^b_x,T^b_y,T^b_z$ are
\begin{align*}
  T^b_x &= \left((\frac{\partial \bm{\Upsilon}}{\partial z})^2+(\frac{\partial \bm{\Upsilon}}{\partial y})^2\right)F_x
  -\frac{\partial \bm{\Upsilon}}{\partial x}\left(\frac{\partial \bm{\Upsilon}}{\partial z}F_z+\frac{\partial \bm{\Upsilon}}{\partial y}F_y\right), \nonumber \\
  T^b_y &= \left((\frac{\partial \bm{\Upsilon}}{\partial x})^2+(\frac{\partial \bm{\Upsilon}}{\partial z})^2\right)F_y
  -\frac{\partial \bm{\Upsilon}}{\partial y}\left(\frac{\partial \bm{\Upsilon}}{\partial x}F_x+\frac{\partial \bm{\Upsilon}}{\partial z}F_z\right), \nonumber \\
  T^b_z &= \left((\frac{\partial \bm{\Upsilon}}{\partial y})^2+(\frac{\partial \bm{\Upsilon}}{\partial x})^2\right)F_z
  -\frac{\partial \bm{\Upsilon}}{\partial z}\left(\frac{\partial \bm{\Upsilon}}{\partial y}F_y+\frac{\partial \bm{\Upsilon}}{\partial x}F_x\right).
\end{align*}


\emph{2) $\bm{n}_{\bm{\Upsilon}}$ and $\bm{F}$ are collinear ($\bm{n}_{\bm{\Upsilon}}\parallel\bm{F}$)}

When $\bm{n}_{\bm{\Upsilon}}$ and $\bm{F}$ are lying in the same straight line, the VF $\bm{\tau}_{\bm{\Upsilon}}^a$ is undefined since the cross product in (\ref{tau_Upsilon_1}) is $\bm{0}$. In this case, we directly define
\begin{align}
  \bm{\tau}_{\bm{\Upsilon}}^a&=\bm{H}, \\
  \bm{\tau}_{\bm{\Upsilon}}^b&=\bm{G}, \label{tau_Upsilon_2_case2}
\end{align}
where $\bm{G}$ and $\bm{H}$ are VFs given in (\ref{eq_VF_G_xyz}) and (\ref{eq_VF_H_xyz}).

\begin{figure}[t]
  \centering
  \includegraphics[width=0.28\textwidth,trim=0 0 0 0,clip]{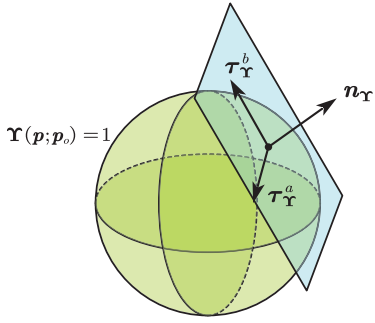}
  \caption{Vector fields on the obstacle surface.}
  \label{fig_obstacle}
\end{figure}

Figure~\ref{fig_obstacle} intuitively depicts the VFs $\bm{n}_{\bm{\Upsilon}},\bm{\tau}_{\bm{\Upsilon}}^a,\bm{\tau}_{\bm{\Upsilon}}^b$ on the obstacle surface $\bm{\Upsilon}(\bm{p};\bm{p}_o)=1$. From the geometrical point of view, $\bm{n}_{\bm{\Upsilon}}$ is the normal vector of the obstacle surface, while $\bm{\tau}_{\bm{\Upsilon}}^a$ and $\bm{\tau}_{\bm{\Upsilon}}^b$ are the tangential vectors lying in the plane which is orthogonal to $\bm{n}_{\bm{\Upsilon}}$. Moreover, $\bm{\tau}_{\bm{\Upsilon}}^a$ and $\bm{\tau}_{\bm{\Upsilon}}^b$ are also orthogonal to each other. Thus, based on the N-VF $\bm{F}$ and the tangential VF $\bm{\tau}_{\bm{\Upsilon}}^b$, we propose the OA-VF in the reactive area $\bm{\mathcal{A}}_r$ as follows
\begin{equation}\label{eq_F_ob}
  \bm{F}_{OA}=\chi\bm{F}+(1-\chi)\bm{\tau}_{\bm{\Upsilon}}^b,
\end{equation}
where $\chi$ is a smooth function defined by
\begin{equation}\label{eq_chi}
  \chi=\left\{
  \begin{aligned}
    0,\quad     \ &\bm{p}\in\{\bm{\Upsilon}(\bm{p};\bm{p}_o)\le 1\} \\
    S(\bm{p}),\ \ &\bm{p}\in\{1<\bm{\Upsilon}(\bm{p};\bm{p}_o)<\bar{c}\} \\
    1,\quad     \ &\bm{p}\in\{\bm{\Upsilon}(\bm{p};\bm{p}_o)\ge \bar{c}\}
  \end{aligned}
  \right.
\end{equation}
and $S(\bm{p})$ is a smooth function valued in $(0,1)$, which can be constructed by the typical bump functions \cite{fry2002smooth}.

The primary requirement for $\bm{F}_{OA}$ is to guarantee the obstacle avoidance. Besides, since $\bm{F}_{OA}$ is a composite VF of $\bm{F}$ and $\bm{\tau}_{\Upsilon}^b$, we should first investigate whether $\bm{F}_{OA}$ has singular points where the nonholonomic robot would get stuck possibly.
\begin{theorem}\label{theo_F_OA}
  The OA-VF $\bm{F}_{OA}$ proposed in (\ref{eq_F_ob}) has the following properties:
  \begin{enumerate}
    \item $\bm{F}_{OA}$ never penetrates the obstacle area $\bm{\mathcal{A}}_o$;
    \item $\bm{F}_{OA}$ does not vanish in the reactive area $\bm{\mathcal{A}}_r$.
  \end{enumerate}
\end{theorem}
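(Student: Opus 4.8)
The plan is to establish the two properties separately, in each case leaning on the orthogonality relations that the cross-product construction of $\bm{\tau}_{\bm{\Upsilon}}^b$ builds in, together with Lemma~\ref{lem_VF_G} and Theorem~\ref{theo_VF_org}.

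For Property~1, the key remark is that on the obstacle surface $\{\bm{\Upsilon}(\bm{p};\bm{p}_o)=1\}$ we have $\chi=0$ by (\ref{eq_chi}), hence $\bm{F}_{OA}=\bm{\tau}_{\bm{\Upsilon}}^b$ there. I would then check that $\bm{\tau}_{\bm{\Upsilon}}^b$ is always orthogonal to $\bm{n}_{\bm{\Upsilon}}$: in the non-collinear case this is immediate from (\ref{tau_Upsilon_2_case1}), since $\bm{\tau}_{\bm{\Upsilon}}^b=\bm{\tau}_{\bm{\Upsilon}}^a\times\bm{n}_{\bm{\Upsilon}}$; in the collinear case $\bm{\tau}_{\bm{\Upsilon}}^b=\bm{G}$, which is orthogonal to $\bm{F}$ by Lemma~\ref{lem_VF_G}, hence to $\bm{n}_{\bm{\Upsilon}}$ because there $\bm{n}_{\bm{\Upsilon}}\parallel\bm{F}$. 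Consequently, along any trajectory of $\dot{\bm{p}}=\bm{F}_{OA}(\bm{p})$ that touches the surface, $\dot{\bm{\Upsilon}}=\bm{n}_{\bm{\Upsilon}}\cdot\bm{F}_{OA}=0$ (for a moving obstacle $\dot{\bm{\Upsilon}}$ picks up the known term $\frac{\partial\bm{\Upsilon}}{\partial\bm{p}_{o}}\dot{\bm{p}}_{o}$, which I would offset by a matching feedforward). Since $\bm{n}_{\bm{\Upsilon}}\ne\bm{0}$ on the regular level set $\{\bm{\Upsilon}=1\}$, this is exactly the subtangentiality (Nagumo) condition for the closed set $\{\bm{\Upsilon}\ge1\}$, which is therefore positively invariant for $\dot{\bm{p}}=\bm{F}_{OA}(\bm{p})$; a robot started outside the obstacle thus never enters $\bm{\mathcal{A}}_o$. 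This conclusion survives even at points where $\bm{F}_{OA}$ happens to vanish, since the zero vector is also subtangent.

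For Property~2 I would keep the same case split. In the non-collinear case, the vector triple-product identity turns (\ref{tau_Upsilon_1})--(\ref{tau_Upsilon_2_case1}) into $\bm{\tau}_{\bm{\Upsilon}}^b=(\bm{n}_{\bm{\Upsilon}}\times\bm{F})\times\bm{n}_{\bm{\Upsilon}}=\|\bm{n}_{\bm{\Upsilon}}\|^2\bm{F}-(\bm{F}\cdot\bm{n}_{\bm{\Upsilon}})\bm{n}_{\bm{\Upsilon}}$, so by (\ref{eq_F_ob})
\begin{equation*}
  \bm{F}_{OA}=\big[\chi+(1-\chi)\|\bm{n}_{\bm{\Upsilon}}\|^2\big]\bm{F}-(1-\chi)(\bm{F}\cdot\bm{n}_{\bm{\Upsilon}})\bm{n}_{\bm{\Upsilon}}.
\end{equation*}
Since $\bm{F}$ and $\bm{n}_{\bm{\Upsilon}}$ are linearly independent in this case, $\bm{F}_{OA}=\bm{0}$ would force the coefficient of $\bm{F}$ to vanish, which is impossible: that coefficient is strictly positive throughout $\bm{\mathcal{A}}_r$ because $\chi\in[0,1]$ and $\bm{n}_{\bm{\Upsilon}}\ne\bm{0}$ wherever $\chi=0$, while $\bm{F}\ne\bm{0}$ on $\bm{\mathcal{A}}_r$ by Assumption~\ref{assum_p_d} and Theorem~\ref{theo_VF_org}. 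In the collinear case $\bm{F}_{OA}=\chi\bm{F}+(1-\chi)\bm{G}$ with $\bm{F}\perp\bm{G}$ (Lemma~\ref{lem_VF_G}), so a zero of $\bm{F}_{OA}$ forces $\chi\bm{F}=\bm{0}$ and $(1-\chi)\bm{G}=\bm{0}$; the first gives $\chi=0$, i.e. we sit on $\{\bm{\Upsilon}=1\}$, and the second then requires $\bm{G}=\bm{0}$.

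The step I expect to be the main obstacle is precisely this last sub-case, because by (\ref{eq_VF_G_xrtheta_compo}) $\bm{G}$ (likewise $\bm{H}=\bm{G}\times\bm{F}$) vanishes on the entire $x$-axis, so I must rule out a point lying simultaneously on the obstacle surface and on the $x$-axis with $\bm{n}_{\bm{\Upsilon}}$ collinear with the $x$-axis. I would dispose of it either by imposing a mild non-alignment condition on the obstacle relative to the target-defining $x$-axis, or by observing that such configurations form a measure-zero set which almost every trajectory avoids, so that the operative statement --- a robot following $\bm{F}_{OA}$ is never trapped inside $\bm{\mathcal{A}}_r$ --- still holds. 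Everything else (the triple-product expansion, the sign check on the convex weight $\chi$, and the orthogonality of $\bm{G}$ and $\bm{H}$ to $\bm{F}$) is routine once Lemma~\ref{lem_VF_G} and Theorem~\ref{theo_VF_org} are available.
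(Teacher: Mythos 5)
Your proposal is essentially sound and follows the same skeleton as the paper's proof: the case split on whether $\bm{n}_{\bm{\Upsilon}}\parallel\bm{F}$, the observation that $\chi=0$ on $\{\bm{\Upsilon}=1\}$ reduces $\bm{F}_{OA}$ to $\bm{\tau}_{\bm{\Upsilon}}^b$, and the orthogonality $\bm{\tau}_{\bm{\Upsilon}}^b\cdot\bm{n}_{\bm{\Upsilon}}=0$ (mixed product in the non-collinear case, $\bm{G}\cdot\bm{F}=0$ in the collinear case) for impenetrability. Your Nagumo/subtangentiality framing of Property~1 is a slightly stronger trajectory-level statement than the paper, which stops at ``$\bm{F}_{OA}$ lies in the tangent plane''; the feedforward remark for moving obstacles is extra and not needed for the theorem as stated. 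For Property~2 in the non-collinear case you and the paper prove the same fact by different bookkeeping: the paper writes $\bm{\tau}_{\bm{\Upsilon}}^b=-(\bm{n}_{\bm{\Upsilon}}^{\wedge})^2\bm{F}$, packages $\bm{F}_{OA}=\bm{\Xi}_{OA}\bm{F}$ with $\bm{\Xi}_{OA}=\chi\bm{I}+(\chi-1)(\bm{n}_{\bm{\Upsilon}}^{\wedge})^2$, computes $\det\bm{\Xi}_{OA}$ for $\chi>0$ and examines $\verb"ker"(\bm{\Xi}_{OA})$ at $\chi=0$; your triple-product expansion $\bm{\tau}_{\bm{\Upsilon}}^b=\|\bm{n}_{\bm{\Upsilon}}\|^2\bm{F}-(\bm{F}\cdot\bm{n}_{\bm{\Upsilon}})\bm{n}_{\bm{\Upsilon}}$ plus the positivity of the coefficient $\chi+(1-\chi)\|\bm{n}_{\bm{\Upsilon}}\|^2$ reaches the same conclusion more elementarily, trading the determinant computation for a linear-independence argument.

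The sub-case you single out as the main obstacle deserves emphasis, because it is a real soft spot and the paper does not actually resolve it. In the collinear case the paper argues that $\bm{F}_{OA}=\chi\bm{F}+(1-\chi)\bm{G}$ cannot vanish because ``$\bm{F}$ and $\bm{G}$ are linearly independent'' by Lemma~\ref{lem_VF_G}; but that lemma only gives orthogonality, and $\bm{G}$ (see (\ref{eq_VF_G_xyz_comp})) vanishes identically on the $x$-axis, so linear independence fails exactly at the configuration you describe: a point where the obstacle surface meets the $x$-axis with $\bm{n}_{\bm{\Upsilon}}$ parallel to $\bm{F}$ (e.g.\ a sphere centered on the negative $x$-axis), where $\chi=0$ and hence $\bm{F}_{OA}=\bm{G}=\bm{0}$. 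Your proposed repairs --- a mild non-alignment assumption on the obstacle relative to the axis through $\bm{p}_d$ along $\bm{e}_d$, or a measure-zero/almost-every-trajectory argument --- are reasonable, but note they are additions the paper does not make; Assumption~\ref{assum_p_d} alone does not exclude this configuration. So your proof is not weaker than the paper's here; if anything it is more honest about where the argument needs an extra hypothesis, and off the $x$-axis your orthogonal-decomposition argument closes the collinear case correctly.
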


\begin{proof}
  1) Impenetrability of the obstacle area $\bm{\mathcal{A}}_o$ can be guaranteed if the OA-VF $\bm{F}_{OA}$ projects zero onto the normal vector $\bm{n}_{\bm{\Upsilon}}$ of the obstacle surface $\bm{\Upsilon}(\bm{p};\bm{p}_o)=1$. In light of this fact, we should compute the dot product of $\bm{F}_{OA}$ and $\bm{n}_{\bm{\Upsilon}}$. According to (\ref{eq_F_ob}), the OA-VF $\bm{F}_{OA}$ on the obstacle surface $\bm{\Upsilon}(\bm{p};\bm{p}_o)=1$ is equivalent to $\bm{F}_{OA}|_{\bm{\Upsilon}=1}=\bm{\tau}_{\bm{\Upsilon}}^b$. Then, it follows that
  \begin{equation}\label{eq_n_dot_tau_b}
    (\bm{F}_{OA}\cdot\bm{n}_{\bm{\Upsilon}})|_{\bm{\Upsilon}=1}=\bm{\tau}_{\bm{\Upsilon}}^b\cdot\bm{n}_{\bm{\Upsilon}}.
  \end{equation}
  For the case of $\bm{n}_{\bm{\Upsilon}}\nparallel\bm{F}$, we substitute (\ref{tau_Upsilon_2_case1}) into (\ref{eq_n_dot_tau_b}), and it is obtained that
  \begin{equation}\label{eq_n_dot_tau_b_zero_case1}
    (\bm{F}_{OA}\cdot\bm{n}_{\bm{\Upsilon}})|_{\bm{\Upsilon}=1}=(\bm{\tau}_{\bm{\Upsilon}}^a\times\bm{n}_{\bm{\Upsilon}})\cdot\bm{n}_{\bm{\Upsilon}}
    =-(\bm{n}_{\bm{\Upsilon}}\times\bm{n}_{\bm{\Upsilon}})\cdot\bm{\tau}_{\bm{\Upsilon}}^a=0,
  \end{equation}
  where the property of mixed product is utilized. For the case of $\bm{n}_{\bm{\Upsilon}}\parallel\bm{F}$, we have
  \begin{equation}\label{eq_n_dot_tau_b_zero_case2}
    (\bm{F}_{OA}\cdot\bm{n}_{\bm{\Upsilon}})|_{\bm{\Upsilon}=1}=\bm{\tau}_{\bm{\Upsilon}}^b\cdot\bm{n}_{\bm{\Upsilon}}=\bm{G}\cdot\bm{n}_{\bm{\Upsilon}},
  \end{equation}
  where (\ref{tau_Upsilon_2_case2}) is employed. Due to $\bm{n}_{\bm{\Upsilon}}\parallel\bm{F}$, the normal vector $\bm{n}_{\bm{\Upsilon}}$ can be expressed as $\bm{n}_{\bm{\Upsilon}}=k\bm{F}$, where $k$ is a nonzero scalar. By substitute it into (\ref{eq_n_dot_tau_b_zero_case2}), we have
  \begin{equation}\label{eq_n_dot_tau_b_zero_case2_end}
    (\bm{F}_{OA}\cdot\bm{n}_{\bm{\Upsilon}})|_{\bm{\Upsilon}=1}=k\bm{G}\cdot\bm{F}=0,
  \end{equation}
  which is guaranteed by (\ref{eq_F_cdot_G}). Equations (\ref{eq_n_dot_tau_b_zero_case1}) and (\ref{eq_n_dot_tau_b_zero_case2_end}) indicate that the OA-VF $\bm{F}_{OA}$ is orthogonal to the normal vector $\bm{n}_{\bm{\Upsilon}}$ on the obstacle surface. In other words, the $\bm{F}_{OA}$ lies in the tangential plane of the obstacle surface, so that it will never penetrate the obstacle area $\bm{\mathcal{A}}_o$.

  2) We first consider the case of $\bm{n}_{\bm{\Upsilon}}\nparallel\bm{F}$. For simplicity, we use the components to represent the VFs $\bm{n}_{\bm{\Upsilon}}$ and $\bm{F}$ in the following. To be more specific, they are denoted by $\bm{n}_{\bm{\Upsilon}}=[\frac{\partial \bm{\Upsilon}}{\partial x}\ \frac{\partial \bm{\Upsilon}}{\partial y}\ \frac{\partial \bm{\Upsilon}}{\partial z}]^{\rm T}$ and $\bm{F}=[F_x\ F_y\ F_z]^{\rm T}$, respectively. Under such a formulation, the cross product ``$\times$" can be expressed as the hat map ``$\wedge$" given in (\ref{eq_hat_map}). Then, $\bm{\tau}_{\bm{\Upsilon}}^a$ in (\ref{tau_Upsilon_1}) can be rewritten as $\bm{\tau}_{\bm{\Upsilon}}^a=\bm{n}_{\bm{\Upsilon}}^{\wedge}\bm{F}$, and $\bm{\tau}_{\bm{\Upsilon}}^b$ in (\ref{tau_Upsilon_2_case1}) can be rewritten as
  \begin{equation}\label{eq_tau_b_with_hat}
    \bm{\tau}_{\bm{\Upsilon}}^b=(\bm{n}_{\bm{\Upsilon}}^{\wedge}\bm{F})\times\bm{n}_{\bm{\Upsilon}} = -\bm{n}_{\bm{\Upsilon}}^{\wedge}(\bm{n}_{\bm{\Upsilon}}^{\wedge}\bm{F}) = -(\bm{n}_{\bm{\Upsilon}}^{\wedge})^2\bm{F}.
  \end{equation}
  By substituting (\ref{eq_tau_b_with_hat}) into (\ref{eq_F_ob}), $\bm{F}_{OA}$ can be reorganized as
  \begin{equation}\label{eq_F_OA_with_hat}
    \bm{F}_{OA} = \chi\bm{F}+(\chi-1)(\bm{n}_{\bm{\Upsilon}}^{\wedge})^2\bm{F} = \bm{\Xi}_{OA}\bm{F},
  \end{equation}
  where $\bm{\Xi}_{OA}$ is a matrix defined by
  \begin{equation}\label{eq_Xi_OA}
    \bm{\Xi}_{OA}=\chi\bm{I}+(\chi-1)(\bm{n}_{\bm{\Upsilon}}^{\wedge})^2,
  \end{equation}
  and $\bm{I}$ is the identity matrix. Based on the formulations given in (\ref{eq_VF_xyz_compo}), $\bm{F}$ only vanishes as the desired point $\bm{p}_d$. According to Assumption~\ref{assum_p_d}, there does not exist $\bm{p}_d$ in the reactive area $\bm{\mathcal{A}}_r$, implying that $\bm{F}\ne\bm{0}$ in $\bm{\mathcal{A}}_r$. Therefore, referring to (\ref{eq_F_OA_with_hat}), $\bm{F}_{OA}$ does not vanish in $\bm{\mathcal{A}}_r$ if and only if the matrix $\bm{\Xi}_{OA}$ is invertible. By calculation, the determinant of $\bm{\Xi}_{OA}$ is
  \begin{equation}
    \verb"det"(\bm{\Xi}_{OA})= \chi+2\chi^2(1-\chi)\|\bm{n}_{\bm{\Upsilon}}\|^2+\chi(1-\chi)^2\|\bm{n}_{\bm{\Upsilon}}\|^4.
  \end{equation}
  Due to $\|\bm{n}_{\bm{\Upsilon}}\|\ne 0$, it is obvious that $\verb"det"(\bm{\Xi}_{OA})\ne 0$ for $0<\chi\le 1$, indicating that the matrix $\bm{\Xi}_{OA}$ is invertible in $\bm{\mathcal{A}}_r/\{\bm{\Upsilon}(\bm{p};\bm{p}_o)=1\}$. Once $\bm{p}\in\{\bm{\Upsilon}(\bm{p};\bm{p}_o)=1\}$, i.e., $\chi=0$, there hold $\verb"det"(\bm{\Xi}_{OA})=0$, implying the matrix $\bm{\Xi}_{OA}$ will lose rank. In this case, $\bm{F}_{OA}$ does not vanish if and only if $\bm{F}$ does not lie in $\verb"ker"(\bm{\Xi}_{OA})$. When $\chi=0$, $\bm{\Xi}_{OA}$ degenerates to
  \begin{equation}\label{eq_Xi_OA_de}
    \bm{\Xi}_{OA}=-(\bm{n}_{\bm{\Upsilon}}^{\wedge})^2.
  \end{equation}
  Thus, it can be derived that
  \begin{equation}
    \verb"ker"(\bm{\Xi}_{OA})={\rm ker}(\bm{n}_{\bm{\Upsilon}}^{\wedge})=\{\bm{\alpha}\in\mathbb{R}^3:\bm{\alpha}\parallel\bm{n}_{\bm{\Upsilon}}\}.
  \end{equation}
  Note that the premise of this case is $\bm{n}_{\bm{\Upsilon}}\nparallel\bm{F}$, so that $\bm{F}\notin\verb"ker"(\bm{\Xi}_{OA})$ indicating that $\bm{F}_{OA}\ne\bm{0}$ on the obstacle surface $\bm{\Upsilon}(\bm{p};\bm{p}_o)=1$. Therefore, $\bm{F}_{OA}$ does not vanish in the reactive area $\bm{\mathcal{A}}_r$ for the case of $\bm{n}_{\bm{\Upsilon}}\nparallel\bm{F}$.

  Next, we deal with the case of $\bm{n}_{\bm{\Upsilon}}\parallel\bm{F}$. In this case, owing to the formulation of $\bm{\tau}_{\bm{\Upsilon}}^b$ in (\ref{tau_Upsilon_2_case2}), $\bm{F}_{OA}$ in (\ref{eq_F_ob}) degenerates to
  \begin{equation}\label{eq_F_ob_de}
    \bm{F}_{OA}=\chi\bm{F}+(1-\chi)\bm{G}.
  \end{equation}
  According to Lemma~\ref{lem_VF_G}, the VFs $\bm{F}$ and $\bm{G}$ are linearly independent. Hence, as the linear combination of $\bm{F}$ and $\bm{G}$, the OA-VF $\bm{F}_{OA}$ does not vanish everywhere in the reactive area $\bm{\mathcal{A}}_r$.
\end{proof}

\begin{remark}
    One can observe the following two properties.
    \begin{enumerate}
        \item $\bm{F}$, $\bm{n_\Upsilon}$ and $\bm{\tau_\Upsilon}^b$ are always in the same plane since they are all orthogonal to $\bm{\tau_\Upsilon}^a$.
        \item The angle between $\bm{F}$ and $\bm{\tau_\Upsilon}^b$ is always less or equal to $\pi/2$, due to the fact that $\bm{F} ^{T}\bm{\tau_\Upsilon}^b \overset{\eqref{eq_tau_b_with_hat}}{=} -\bm{F}^T ({\bm{n_\Upsilon}^\wedge})^2 \bm{F} = \bm{F}^T {{\bm{n_\Upsilon}^\wedge}}^T {\bm{n_\Upsilon}^\wedge} \bm{F} = || {\bm{n_\Upsilon}^\wedge} \bm{F} ||^2 \ge 0$, where the equality holds only in the case of $\bm{n}_{\bm{\Upsilon}}\parallel\bm{F}$.
    \end{enumerate}
\end{remark}

Based on Theroem~\ref{theo_F_OA}, in order to avoid obstacles, the OA-VF $\bm{F}_{OA}$ can be employed as the nonholonomic robot's heading direction (i.e., the $x$-axis direction of the body-fixed frame) in the reactive area $\bm{\mathcal{A}}_r$. However, as shown in the free-space motion planning, it is necessary to design another two vector fields to determine the $y$-axis and $z$-axis directions of the body-fixed frame $\bm{\mathcal{F}}_{\rm e}$, and the requirement for these three VFs is that each two of them should be orthogonal. The following lemma gives the formulation of the other two VFs.
\begin{lemma}\label{lem_G_OA_H_OA}
  Define two VFs $\bm{G}_{OA}$ and $\bm{H}_{OA}$ by
  \begin{align}\label{eq_G_OA}
    \bm{G}_{OA} &= \mu\bm{G}+(1-\mu)\bm{n}_{\bm{\Upsilon}}, \\
    \bm{H}_{OA} &= \chi\mu\bm{H}+(1-\chi)(1-\mu)\bm{\tau}_{\Upsilon}^a + \mu(1-\chi)\bm{G}\times\bm{\tau}_{\bm{\Upsilon}}^b \nonumber \\
    &\quad + \chi(1-\mu)\bm{n}_{\bm{\Upsilon}}\times\bm{F}, \label{eq_H_OA}
  \end{align}
  where $\mu$ is given by
  \begin{equation}\label{eq_mu}
    \mu=\frac{\chi\bm{F}\cdot\bm{n}_{\bm{\Upsilon}}}{\chi\bm{F}\cdot\bm{n}_{\bm{\Upsilon}}+(\chi-1)\bm{\tau}_{\bm{\Upsilon}}^b\cdot\bm{G}},
  \end{equation}
  $\bm{G},\bm{H},\bm{n}_{\bm{\Upsilon}},\bm{\tau}_{\bm{\Upsilon}}^a,\chi$ are given in (\ref{eq_VF_G_xyz}), (\ref{eq_VF_H_xyz}), (\ref{n_Upsilon}), (\ref{tau_Upsilon_1}), (\ref{eq_chi}), respectively. Then, each two VFs of $\bm{F}_{OA},\bm{G}_{OA},\bm{H}_{OA}$ are orthogonal.
\end{lemma}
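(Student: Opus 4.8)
The claim is that the triad $\bm{F}_{OA},\bm{G}_{OA},\bm{H}_{OA}$ is pairwise orthogonal. I would split into the two regimes that appear in the construction: the generic case $\bm{n}_{\bm{\Upsilon}}\nparallel\bm{F}$, where $\bm{\tau}_{\bm{\Upsilon}}^a=\bm{n}_{\bm{\Upsilon}}\times\bm{F}$ and $\bm{\tau}_{\bm{\Upsilon}}^b=\bm{\tau}_{\bm{\Upsilon}}^a\times\bm{n}_{\bm{\Upsilon}}$, and the collinear case $\bm{n}_{\bm{\Upsilon}}\parallel\bm{F}$, where $\bm{\tau}_{\bm{\Upsilon}}^a=\bm{H}$ and $\bm{\tau}_{\bm{\Upsilon}}^b=\bm{G}$. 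The tools are the orthogonality relations already established ($\bm{F}\perp\bm{G}$ by Lemma~\ref{lem_VF_G}, and $\bm{H}=\bm{G}\times\bm{F}\perp\bm{F},\bm{G}$); the definitional orthogonalities $\bm{\tau}_{\bm{\Upsilon}}^a\perp\bm{n}_{\bm{\Upsilon}}$, $\bm{\tau}_{\bm{\Upsilon}}^a\perp\bm{F}$, $\bm{\tau}_{\bm{\Upsilon}}^b\perp\bm{n}_{\bm{\Upsilon}}$, $\bm{\tau}_{\bm{\Upsilon}}^b\perp\bm{\tau}_{\bm{\Upsilon}}^a$; the vanishing of a mixed product $\bm{a}\cdot(\bm{b}\times\bm{c})$ when two of its arguments coincide; and the identity $\bm{a}\times(\bm{b}\times\bm{c})=\bm{b}(\bm{a}\cdot\bm{c})-\bm{c}(\bm{a}\cdot\bm{b})$.

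\emph{Step 1} ($\bm{F}_{OA}\cdot\bm{G}_{OA}=0$). In the generic case, expanding $(\chi\bm{F}+(1-\chi)\bm{\tau}_{\bm{\Upsilon}}^b)\cdot(\mu\bm{G}+(1-\mu)\bm{n}_{\bm{\Upsilon}})$ bilinearly, the $\bm{F}\cdot\bm{G}$ term vanishes by Lemma~\ref{lem_VF_G} and the $\bm{\tau}_{\bm{\Upsilon}}^b\cdot\bm{n}_{\bm{\Upsilon}}$ term vanishes because $\bm{\tau}_{\bm{\Upsilon}}^b=\bm{\tau}_{\bm{\Upsilon}}^a\times\bm{n}_{\bm{\Upsilon}}$; what remains is $\chi(1-\mu)\,\bm{F}\cdot\bm{n}_{\bm{\Upsilon}}+(1-\chi)\mu\,\bm{\tau}_{\bm{\Upsilon}}^b\cdot\bm{G}$, a linear equation in $\mu$ whose unique root is exactly the $\mu$ in (\ref{eq_mu}) --- so this orthogonality is precisely what (\ref{eq_mu}) is engineered to give. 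Here I would also check that $\mu$ is well defined on $\bm{\mathcal{A}}_r$, i.e., the denominator of (\ref{eq_mu}) is nonzero there, using $\bm{F}\cdot\bm{\tau}_{\bm{\Upsilon}}^b=\|\bm{n}_{\bm{\Upsilon}}^{\wedge}\bm{F}\|^2\ge 0$ (the remark following Theorem~\ref{theo_F_OA}) and Assumption~\ref{assum_p_d} (so $\bm{F}\ne\bm{0}$ on $\bm{\mathcal{A}}_r$). In the collinear case $\bm{\tau}_{\bm{\Upsilon}}^b=\bm{G}$ and $\bm{n}_{\bm{\Upsilon}}\parallel\bm{F}$, so the same expansion collapses by $\bm{F}\perp\bm{G}$ to a linear equation in $\mu$ with the same root.

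\emph{Step 2} ($\bm{F}_{OA}\cdot\bm{H}_{OA}=0$ and $\bm{G}_{OA}\cdot\bm{H}_{OA}=0$). Rather than pairing $\bm{H}_{OA}$ term by term with $\bm{F}_{OA}$ and $\bm{G}_{OA}$, I would show $\bm{H}_{OA}=\bm{G}_{OA}\times\bm{F}_{OA}$. Expanding $\bm{G}_{OA}\times\bm{F}_{OA}=(\mu\bm{G}+(1-\mu)\bm{n}_{\bm{\Upsilon}})\times(\chi\bm{F}+(1-\chi)\bm{\tau}_{\bm{\Upsilon}}^b)$ gives four terms; simplify using $\bm{G}\times\bm{F}=\bm{H}$, $\bm{n}_{\bm{\Upsilon}}\times\bm{F}=\bm{\tau}_{\bm{\Upsilon}}^a$, and $\bm{n}_{\bm{\Upsilon}}\times\bm{\tau}_{\bm{\Upsilon}}^b=\|\bm{n}_{\bm{\Upsilon}}\|^2\bm{\tau}_{\bm{\Upsilon}}^a$ (from the triple-product identity and $\bm{\tau}_{\bm{\Upsilon}}^a\perp\bm{n}_{\bm{\Upsilon}}$); collecting the coefficients of $\bm{H}$, $\bm{G}\times\bm{\tau}_{\bm{\Upsilon}}^b$, and $\bm{\tau}_{\bm{\Upsilon}}^a$ then reproduces (\ref{eq_H_OA}), after recognizing that the last term $\chi(1-\mu)\bm{n}_{\bm{\Upsilon}}\times\bm{F}$ of (\ref{eq_H_OA}) is $\chi(1-\mu)\bm{\tau}_{\bm{\Upsilon}}^a$ and merging it with $(1-\chi)(1-\mu)\bm{\tau}_{\bm{\Upsilon}}^a$. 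Once $\bm{H}_{OA}=\bm{G}_{OA}\times\bm{F}_{OA}$ is in hand, $\bm{F}_{OA}\cdot\bm{H}_{OA}=0$ and $\bm{G}_{OA}\cdot\bm{H}_{OA}=0$ are immediate from the repeated-argument rule for mixed products. In the collinear case, substituting $\bm{\tau}_{\bm{\Upsilon}}^a=\bm{H}$, $\bm{\tau}_{\bm{\Upsilon}}^b=\bm{G}$ collapses (\ref{eq_H_OA}) to a scalar multiple of $\bm{H}$, which is orthogonal to both $\bm{F}_{OA}$ and $\bm{G}_{OA}$ since each is a combination of $\bm{F}$ and $\bm{G}$.

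The main obstacle is the reconciliation in the generic case of Step 2: the cross-product expansion of $\bm{G}_{OA}\times\bm{F}_{OA}$ leaves a coefficient $\chi+(1-\chi)\|\bm{n}_{\bm{\Upsilon}}\|^2$ on $\bm{\tau}_{\bm{\Upsilon}}^a$ that must be matched with the coefficient obtained after merging the $\bm{\tau}_{\bm{\Upsilon}}^a$-terms in (\ref{eq_H_OA}), so one has to keep careful track of the magnitude of $\bm{n}_{\bm{\Upsilon}}$ (equivalently, use the unit surface normal when assembling the attitude matrix). Everything else is a bounded, mechanical unwinding of cross- and mixed-product identities together with the $\mu$-equation from Step 1.
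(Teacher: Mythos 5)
Your proposal is correct and follows essentially the same route as the paper: orthogonality of $\bm{F}_{OA}$ and $\bm{G}_{OA}$ by bilinear expansion with $\mu$ chosen as the root of the resulting linear equation, and orthogonality of $\bm{H}_{OA}$ to both by identifying $\bm{H}_{OA}=\bm{G}_{OA}\times\bm{F}_{OA}$ through the cross-product expansion using $\bm{G}\times\bm{F}=\bm{H}$ and $\bm{n}_{\bm{\Upsilon}}\times\bm{\tau}_{\bm{\Upsilon}}^b\propto\bm{\tau}_{\bm{\Upsilon}}^a$. Your extra care with the factor $\|\bm{n}_{\bm{\Upsilon}}\|^2$ (which the paper suppresses by writing $\bm{n}_{\bm{\Upsilon}}\times\bm{\tau}_{\bm{\Upsilon}}^b=\bm{\tau}_{\bm{\Upsilon}}^a$, i.e., implicitly taking a unit normal), with the well-definedness of $\mu$, and with the collinear case are refinements of, not departures from, the paper's argument.
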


\begin{proof}
  We compute the inner product of $\bm{F}_{OA}$ and $\bm{G}_{OA}$ as given below
  \begin{align*}
    \bm{F}_{OA}\cdot\bm{G}_{OA} &= \chi\mu\bm{F}\cdot\bm{G}+(1-\chi)(1-\mu)\bm{\tau}_{\bm{\Upsilon}}^b\cdot\bm{n}_{\bm{\Upsilon}} \nonumber \\
    &\quad +\chi(1-\mu)\bm{F}\cdot\bm{n}_{\bm{\Upsilon}}+\mu(1-\chi)\bm{\tau}_{\bm{\Upsilon}}^b\cdot\bm{G}
  \end{align*}
  Due to $\bm{F}\cdot\bm{G}=0$ and $\bm{\tau}_{\bm{\Upsilon}}^b\cdot\bm{n}_{\bm{\Upsilon}}=0$, the formulation of $\bm{F}_{OA}\cdot\bm{G}_{OA}$ can be reorganized as
  \begin{equation}\label{eq_F_OA cdot_G_OA}
    \bm{F}_{OA}\cdot\bm{G}_{OA}=-\mu(\chi\bm{F}\cdot\bm{n}_{\bm{\Upsilon}}+(\chi-1)\bm{\tau}_{\bm{\Upsilon}}^b\cdot\bm{G}) +\chi\bm{F}\cdot\bm{n}_{\bm{\Upsilon}}.
  \end{equation}
  By substituting (\ref{eq_mu}) into (\ref{eq_F_OA cdot_G_OA}), we have $\bm{F}_{OA}\cdot\bm{G}_{OA}=0$, indicating that the vector fields $\bm{F}_{OA}$ and $\bm{G}_{OA}$ are orthogonal.

  Next, we compute the cross product of $\bm{G}_{OA}$ and $\bm{F}_{OA}$.
  \begin{align*}
    \bm{G}_{OA}\times\bm{F}_{OA} &=\chi\mu\bm{G}\times\bm{F}+(1-\chi)(1-\mu)\bm{n}_{\bm{\Upsilon}}\times\bm{\tau}_{\bm{\Upsilon}}^b \nonumber \\
    &\quad +\mu(1-\chi)\bm{G}\times\bm{\tau}_{\bm{\Upsilon}}^b+\chi(1-\mu)\bm{n}_{\bm{\Upsilon}}\times\bm{F}.
  \end{align*}
  Owing to $\bm{G}\times\bm{F}=\bm{H}$ and $\bm{n}_{\bm{\Upsilon}}\times\bm{\tau}_{\bm{\Upsilon}}^b=\bm{\tau}_{\bm{\Upsilon}}^a$, the formulation of $\bm{G}_{OA}\times\bm{F}_{OA}$ can be further simplified to be (\ref{eq_H_OA}), that is, $\bm{H}_{OA}=\bm{G}_{OA}\times\bm{F}_{OA}$, implying that $\bm{H}_{OA}$ is orthogonal to $\bm{F}_{OA}$ and $\bm{G}_{OA}$, respectively.
\end{proof}

With the help of Lemma~\ref{lem_G_OA_H_OA}, we are able to construct an auxiliary attitude matrix $\bm{R}_a$ according to the VFs $\bm{F}_{OA},\bm{G}_{OA},\bm{H}_{OA}$. To be more specific, by replacing the components of $\bm{F},\bm{G},\bm{H}$ in (\ref{eq_auxi_atti_matrix}) with those of $\bm{F}_{OA}$, $\bm{G}_{OA}$, $\bm{H}_{OA}$, we can obtain an attitude matrix $\bm{R}_a$ for obstacle avoidance. Then, based on $\bm{R}_a$, the attitude tracking control input can be designed as (\ref{eq_Omega_design}).

Although the OA-VF $\bm{F}_{OA}$ (\ref{eq_F_ob}) is proposed for one obstacle, it can be easily extended to the situation of multiple obstacles. Assume there exist $M$ obstacles in the workspace and each one can be described by $\bm{\Upsilon}_i(\bm{p};\bm{p}_{oi})=1$, $i=1,\cdots,M$. Then, the obstacle avoidance vector field can be specified as
\begin{equation}\label{eq_F_OA_multi_obs}
  \bm{F}_{OA}=\left(\prod_{i=1}^{M}\chi_i\right)\bm{F}+\sum_{i=1}^{M}\left((1-\chi_i)\bm{\tau}_{\bm{\Upsilon}i}^b\right),
\end{equation}
where $\chi_i$ is the transition function of the $i$th obstacle, $\bm{F}$ is the N-VF of the nonholonomic robot, and $\bm{\tau}_{\bm{\Upsilon}i}^b$ is the tangential VF of the $i$th obstacle as designed in (\ref{tau_Upsilon_2_case1}) and (\ref{tau_Upsilon_2_case2}).

\section{Cooperative Motion Planning}\label{sec_CA_VF}

This section investigates the cooperative motion planning of multiple nonholonomic robots via VFs, where the main problem is how to achieve collision avoidance among nonholonomic robots. Motivated by the obstacle avoidance results in the previous section, we regard other nonholonomic robots to be avoided as moving obstacles, and introduce different priorities of the nonholonomic robots to deal with the challenge of motion coupling.

By comparing (\ref{eq_C_ai})-(\ref{eq_C_ri}) with (\ref{eq_A_o})-(\ref{eq_A_r}), we observe that the setting of collision avoidance can be regarded as avoiding moving obstacles with sphere boundaries. However, the motion couplings among nonholonomic robots make the collision avoidance essentially different from the obstacle avoidance. To be more specific, the movement of the obstacle is independent of the nonholonomic robot, while in the collision avoidance, the motion of nonholonomic robots relies on each other indeed. Let us take two nonholonomic robots labelled as $A$ and $B$ for example. Once $A$ enters the reactive area of $B$, it would intend to circumvent $B$ for non-collision. In the meantime, $B$ has also detected $A$ and would take actions to avoid $A$ as well. Such a motion coupling introduces an algebraic loop in mathematics. Thus, compared with obstacle avoidance, how to realize motion decoupling is the main challenge in collision avoidance of multiple nonholonomic robots.

In this paper, we specify a particular priority for each nonholonomic robot to deal with the motion coupling. Specifically, nonholonomic robots are labelled by $i\in\mathcal{I}_\mathcal{V}=\{1,\cdots,N\}$, and it is prescribed that the nonholonomic robot with a smaller index has higher priority. That is to say, the $1$-st nonholonomic robot possesses the highest priority, while the $N$-th is the lowest prioritized. Regarding the $i$-th nonholonomic robot, we define the neighboring label set
\begin{equation}
  \bm{\mathcal{N}}_i=\{j\in\mathcal{I}_\mathcal{V}:\bm{p}_i\in\bm{\mathcal{C}}_{rj},i\ne j\}.
\end{equation}
The elements in $\bm{\mathcal{N}}_i$ are the labels of the nonholonomic robots whose reactive area is entered by the $i$-th nonholonomic robot. In other words, the nonholonomic robots detected by $i$-th are collected in $\bm{\mathcal{N}}_i$. Next, we select certain labels in $\bm{\mathcal{N}}_i$ to define another set
\begin{equation}
  \bm{\mathcal{N}}_i^+=\{j\in\bm{\mathcal{N}}_i:j<i\},
\end{equation}
which collects the nonholonomic robots that are detected by the $i$-th robot and have higher priorities than the $i$-th robot. Then, in order to realize collision avoidance, we let the $i$-th nonholonomic robot avoid the nonholonomic robots in $\bm{\mathcal{N}}_i^+$, while ignore those in $\bm{\mathcal{N}}_i/\bm{\mathcal{N}}_i^+$. In this way, the motion of the high prioritized nonholonomic robots is independent from the low prioritized ones, so that the problem of collision avoidance among robots can be converted into the problem of moving obstacles avoidance.

\begin{figure}
  \centering
  \includegraphics[width=0.42\textwidth,trim=0 5 0 0,clip]{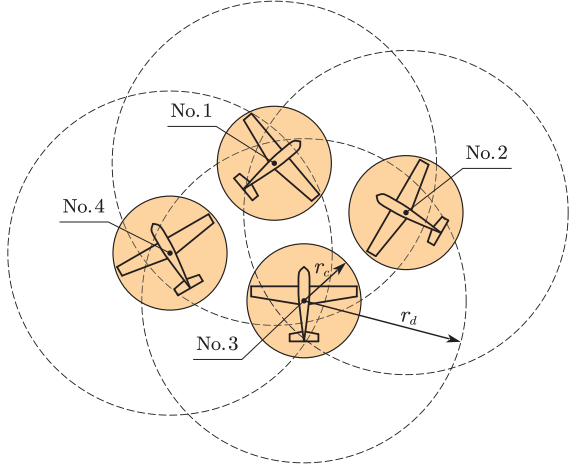}
  \caption{Collision avoidance of multiple nonholonomic robots (taking fixed-wing UAVs for example).}
  \label{fig_four_UAVs}
\end{figure}

For example, as shown in Figure~\ref{fig_four_UAVs}, there are four nonholonomic robots in the workspace, and the sets $\bm{\mathcal{N}}_i$ and $\bm{\mathcal{N}}_i^+$ ($i\in\mathcal{I}_\mathcal{V}=\{1,2,3,4\}$) can be obtained as
\begin{align*}
  &\bm{\mathcal{N}}_1=\{2,3,4\},\quad \bm{\mathcal{N}}_1^+=\varnothing, \\
  &\bm{\mathcal{N}}_2=\{1,3\},\qquad \bm{\mathcal{N}}_2^+=\{1\}, \\
  &\bm{\mathcal{N}}_3=\{1,2,4\},\quad \bm{\mathcal{N}}_3^+=\{1,2\}, \\
  &\bm{\mathcal{N}}_4=\{1,3\},\qquad \bm{\mathcal{N}}_4^+=\{1,3\}.
\end{align*}
Then, the labels in $\bm{\mathcal{N}}_i^+$ are the nonholonomic robots that the $i$-th nonholonomic robot should circumvent, by regarding them as independent moving obstacles.

Therefore, similar to (\ref{n_Upsilon}), we derive the normal vector of the sphere $\bm{\Psi}_j(\bm{p};\bm{p}_j)=\bar{k}$ $(r_c\le\bar{k}\le r_d)$ as follows
\begin{equation}\label{n_Psi}
  \bm{n}_{\bm{\Psi}j}=\frac{\partial \bm{\Psi}_j}{\partial x}\bm{e}_x+\frac{\partial \bm{\Psi}_j}{\partial y}\bm{e}_y+\frac{\partial \bm{\Psi}_j}{\partial z}\bm{e}_z,
\end{equation}
and the time derivative of $\bm{n}_{\bm{\Psi}j}$ is
\begin{equation*}
  \dot{\bm{n}}_{\bm{\Psi}j}=\frac{\partial \bm{n}_{\bm{\Psi}_j}}{\partial \bm{p}}\dot{\bm{p}}+\frac{\partial \bm{n}_{\bm{\Psi}_j}}{\partial \bm{p}_j}\dot{\bm{p}_j},
\end{equation*}
where $j\in\bm{\mathcal{N}}_i^+$. Then, the collision avoidance VF (CA-VF) for the $i$-th nonholonomic robot can be designed by
\begin{equation}\label{eq_F_CA_i}
  \bm{F}_{CAi}=\left(\prod_{j\in\bm{\mathcal{N}}_i^+}\chi_j\right)\bm{F}_i+\sum_{j\in\bm{\mathcal{N}}_i^+}\left((1-\chi_j)\bm{\tau}_{\bm{\Psi}j}^b\right),
\end{equation}
where $\bm{F}_i$ is the N-VF of the $i$-th nonholonomic robot in the free space, $\bm{\tau}_{\bm{\Psi}j}^b$ is the tangential VF given by
\begin{equation}\label{eq_tau_psi_b}
  \bm{\tau}_{\bm{\Psi}j}^b=\left\{
  \begin{aligned}
    (\bm{n}_{\bm{\Psi}j}\times\bm{F}_i)\times\bm{n}_{\bm{\Psi}j},\quad  &{\rm if}\ \bm{n}_{\bm{\Psi}j}\nparallel\bm{F}_i, \\
    \bm{G}_i,\qquad\qquad  &{\rm if}\ \bm{n}_{\bm{\Psi}j}\parallel\bm{F}_i,
  \end{aligned}
  \right.
\end{equation}
and $\chi_j$ is the transition function given by
\begin{equation}\label{eq_chi_j}
  \chi_j=\left\{
  \begin{aligned}
    0,\quad     \ &\bm{p}\in\{\bm{\Psi}_j(\bm{p};\bm{p}_j)=r_c\}, \\
    S(\bm{p}),\ \ &\bm{p}\in\{r_c<\bm{\Psi}_j(\bm{p};\bm{p}_j)<r_d\}, \\
    1,\quad     \ &\bm{p}\in\{\bm{\Psi}_j(\bm{p};\bm{p}_j)=r_d\}.
  \end{aligned}
  \right.
\end{equation}

Algorithm~\ref{alg} summarizes the main procedure to design the CA-VF $\bm{F}_{CAi}$ of the $i$-th nonholonomic robot.

\begin{algorithm}[tbp]
  \caption{Design of the CA-VF $\bm{F}_{CAi}$}
  \label{alg}
  \KwIn{$\bm{p}_i$ (the present position of the $i$-th robot)}
  \KwOut{$\bm{F}_{CAi}$ (the CA-VF of the $i$-th robot)}
  Determine the detection range $\bm{\mathcal{D}}_i$ of the $i$-th robot

  Detect other robots' position $\bm{p}_j$ within $\bm{\mathcal{D}}_i$

  \For{each $\bm{p}_j\in\bm{\mathcal{D}}_i$}{

  \If{$\bm{p}_i\in\bm{\mathcal{C}}_{rj}$}{

  Add $j$ to the set $\bm{\mathcal{N}}_i$

  \If{$j<i$}{

  Add $j$ to the set $\bm{\mathcal{N}}_i^+$
  }}}

  \For{each $j\in\bm{\mathcal{N}}_i^+$}{

  Compute the vector field $\bm{n}_{\bm{\Psi}j}$

  \eIf{$\bm{n}_{\bm{\Psi}j}\nparallel\bm{F}_i$}{

  $\bm{\tau}_{\bm{\Psi}j}^b=(\bm{n}_{\bm{\Psi}j}\times\bm{F}_i)\times\bm{n}_{\bm{\Psi}j}$}{
  $\bm{\tau}_{\bm{\Psi}j}^b=\bm{G}_j$
  }

  Compute the transition function $\chi_j$
  }

  Compute the CA-VF $\bm{F}_{CAi}$ as (\ref{eq_F_CA_i})
\end{algorithm}

Having designed $\bm{F}_{CAi}$, we can simply construct the auxiliary VFs $\bm{G}_{CAi}$ and $\bm{H}_{CAi}$ as in Lemma~\ref{lem_G_OA_H_OA}. Subsequently, based on $\bm{F}_{CAi}$, $\bm{G}_{CAi}$, $\bm{H}_{CAi}$, the auxiliary attitude matrix $\bm{R}_a$ can be constructed as (\ref{eq_auxi_atti_matrix}), and then the attitude tracking control input $\bm{\Omega}^{\wedge}$ can be designed accordingly by Lemma~\ref{lem_R_e_stabilize}.

\section{Numerical Simulation Results}\label{sec_sim}

In this section, we provide three numerical simulation examples based on fixed-wing UAVs to verify the effectiveness of the proposed motion planning algorithms.

The first example is to drive a fixed-wing UAV to the target position with the desired heading direction in an obstacle-free environment. The target position is chosen as the origin of $\bm{\mathcal{F}}_{xyz}$ and the desired heading direction is specified as the positive direction of the $x$-axis, that is, $\bm{p}_d=[0\ \ 0\ \ 0]^{\rm T}$ and $\bm{e}_d=\bm{e}_x=[1\ \ 0\ \ 0]^{\rm T}$. The simulation is carried out based on six initial conditions, including the position and orientation of the fixed-wing UAV. Figure~\ref{fig_one_UAV_tra} depicts the trajectories of the fixed-wing UAV from different initial conditions, demonstrating that the fixed-wing UAV reaches the goal point with the specified heading direction. The position and orientation of the fixed-wing UAV evolving with time are shown in Figure~\ref{fig_one_UAV_curve}, where the orientation is parameterized as the Euler angles for the sake of illustration. Particularly, as seen in Figure~\ref{fig_one_UAV_curve}, the pitch and yaw angles converge to zero, indicating that the heading direction of the fixed-wing UAV points to the $x$-axis of $\bm{\mathcal{F}}_{xyz}$.

\begin{figure}[t]
  \centering
  \includegraphics[width=0.42\textwidth,trim=90 10 90 10,clip]{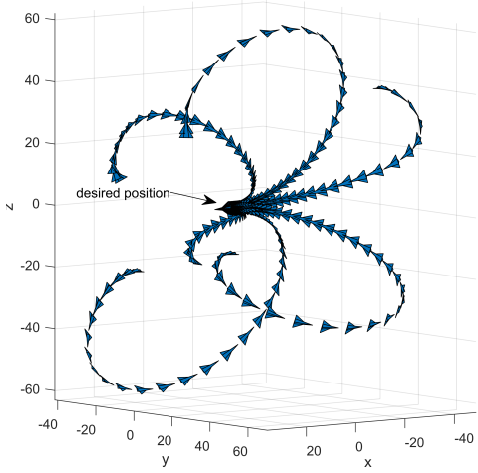}
  \caption{Trajectories of a fixed-wing UAV moving in an obstacle-free environment (with six different initial conditions).}
  \label{fig_one_UAV_tra}
\end{figure}

\begin{figure}[t]
  \centering
  \includegraphics[width=0.48\textwidth,trim=70 10 70 10,clip]{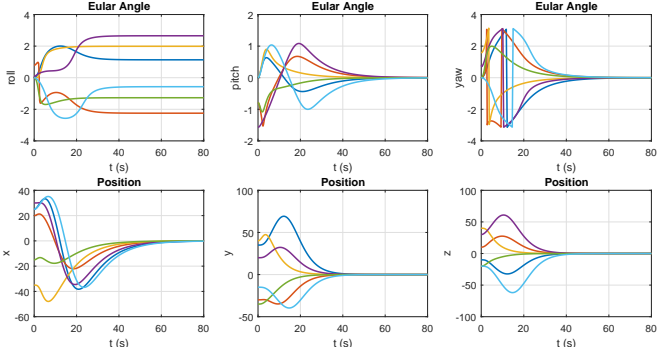}
  \caption{Position and orientation evolution of the fixed-wing UAV.}
  \label{fig_one_UAV_curve}
\end{figure}

In the second example, the fixed-wing UAV moves in an obstacle-cluttered environment. By following the obstacle description given in \cite{Khatib1986Khatib}, the obstacles in this simulation example are formulated by
\begin{equation*}
  \bm{\Upsilon}(\bm{p};\bm{p}_o)=\left(\frac{x-x_o}{a}\right)^{2p}+\left(\frac{y-y_o}{b}\right)^{2q}+\left(\frac{z-z_o}{c}\right)^{2r}=1,
\end{equation*}
where $p,q,r$ are used to describe the geometrical shape of the obstacle, and $a,b,c$ are used to control the size of the obstacle. By choosing different $p,q,r$, the equation $\bm{\Upsilon}(\bm{p};\bm{p}_o)=1$ represents the ellipsoid, cylinder, cone, etc. Simulation results based on three different initial conditions are given in Figure~\ref{fig_one_UAV_ob}, where the target position is chosen as $\bm{p}_d=[75\ \ 30\ \ 25]^{\rm T}$, and the desired heading direction is specified as $\bm{e}_d=\bm{e}_x=[1\ \ 0\ \ 0]^{\rm T}$.

\begin{figure*}[t]
  \centering
  \begin{minipage}[b]{0.62\textwidth}
    \subfigure[Trajectories in 3D space.]{
      \includegraphics[width=1\textwidth,trim=200 30 180 20,clip]{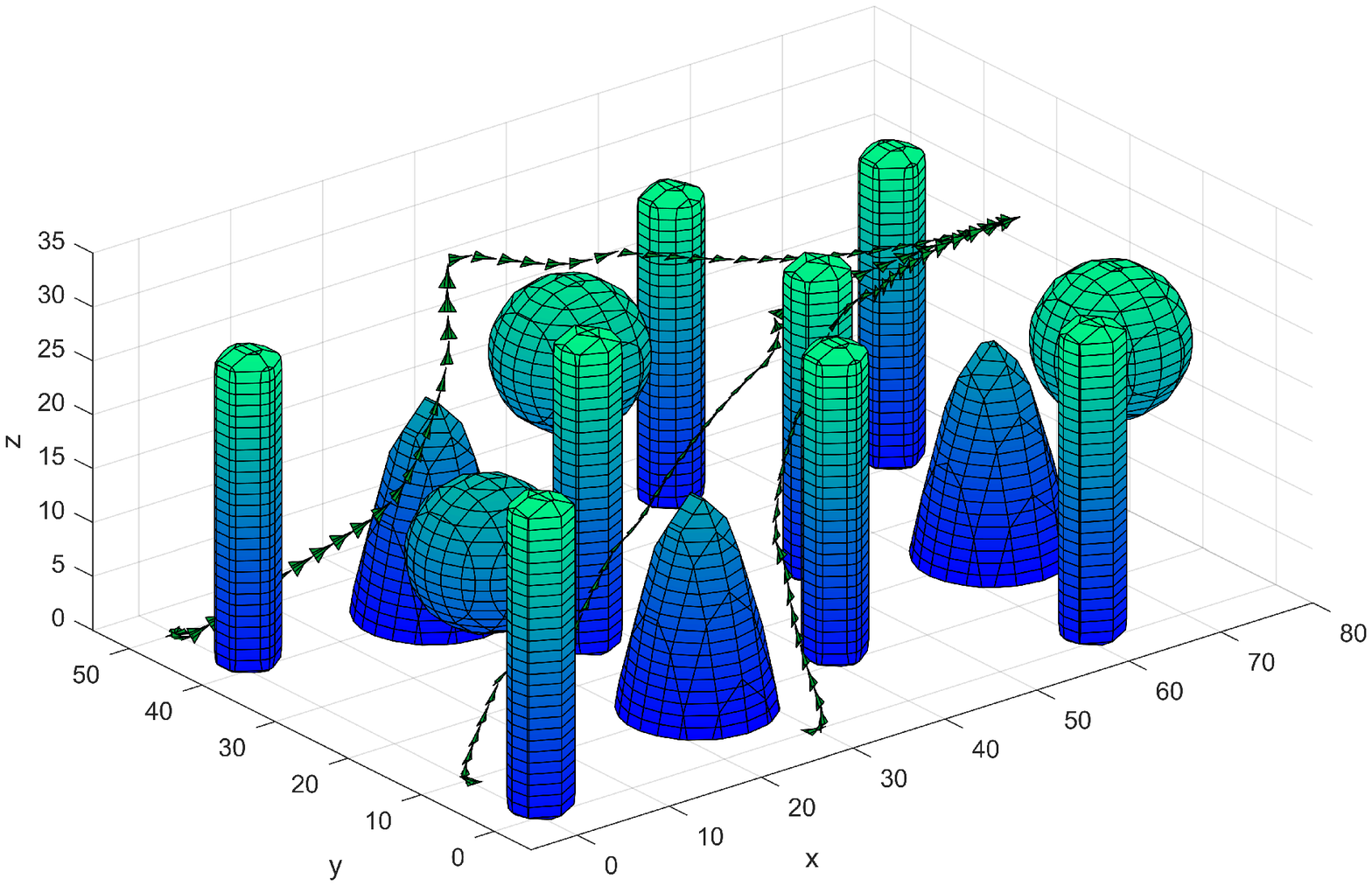}
      \label{fig_one_UAV_ob_0}}
  \end{minipage}
  \begin{minipage}[b]{0.35\textwidth}
    \subfigure[Trajectories in $x-z$ plane]{
      \includegraphics[width=0.95\textwidth,trim=80 0 80 0,clip]{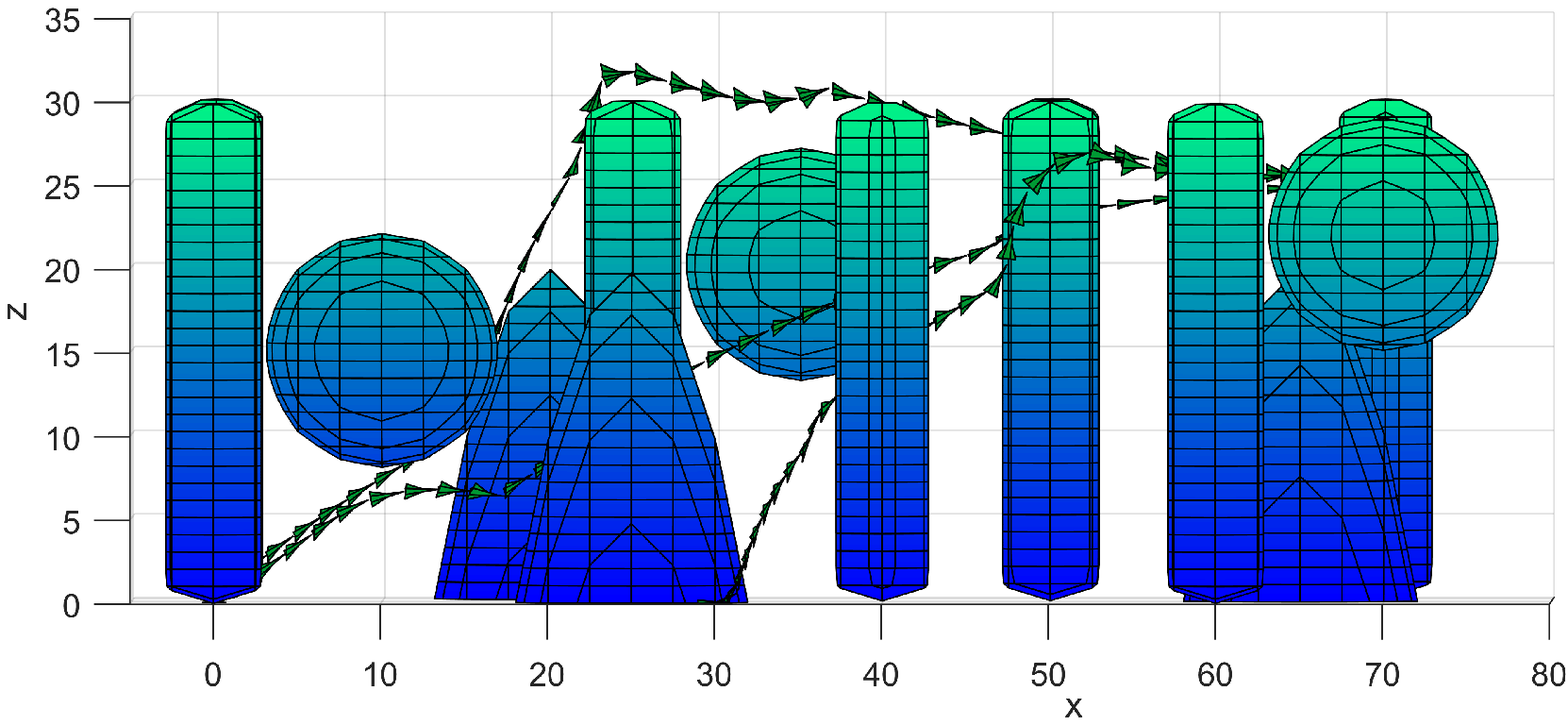}
      \label{fig_one_UAV_ob_1}} \\
    \subfigure[Trajectories in $y-z$ plane]{
      \includegraphics[width=0.95\textwidth,trim=180 0 180 0,clip]{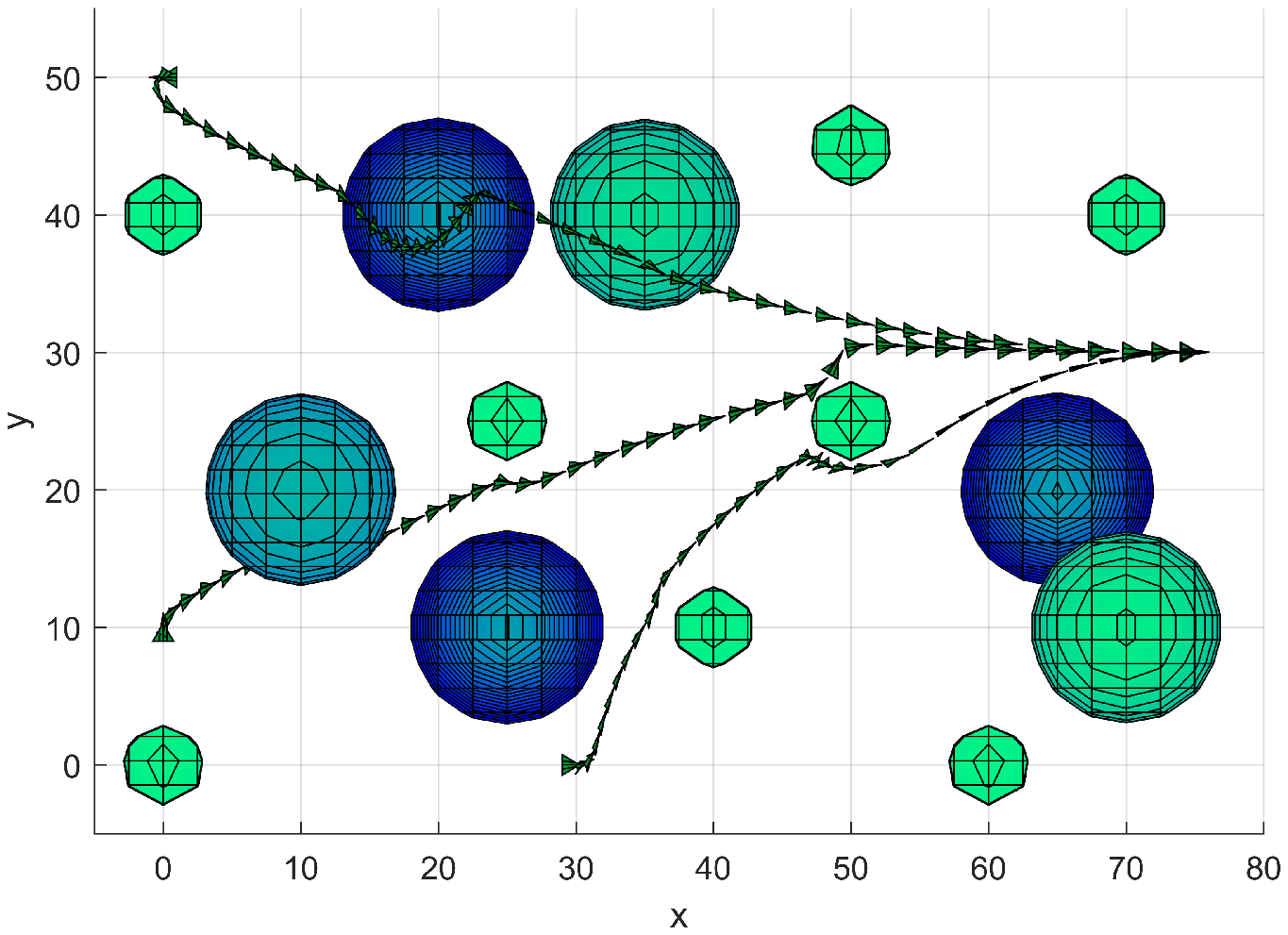}
      \label{fig_one_UAV_ob_2}}
  \end{minipage}
  \caption{Trajectories of a fixed-wing UAV moving in an obstacle-cluttered environment (three different initial conditions).}
  \label{fig_one_UAV_ob}
\end{figure*}

The third example provides the motion planning results of seven fixed-wing UAVs with collision avoidance. As shown in Figure~\ref{fig_seven_UAVs_CA}, each fixed-wing UAV is required to reach its desired position and keep the final heading direction as $\bm{e}_d=[\frac{\sqrt{2}}{2}\ \ \frac{\sqrt{2}}{2}\ \ 0]^{\rm T}$. It can be observed that the fixed-wing UAVs achieve the goal of motion planning and do not collide with each other. Besides, we note that the fixed-wing UAV in the middle of the line (which is marked in dark blue) is labelled by $i=1$, so that it has the highest priority and its movement is not influenced by others. In contrast, the fixed-wing UAV starting from the position $(0,0,0)$ (which is marked in dark magenta) is the lowest prioritized, and thus it has to avoid the rest of all fixed-wing UAVs.


\begin{figure*}[t]
  \centering
  \begin{minipage}[b]{0.58\textwidth}
    \subfigure[Trajectories in 3D space.]{
      \includegraphics[width=0.9\textwidth,trim=80 40 50 50,clip]{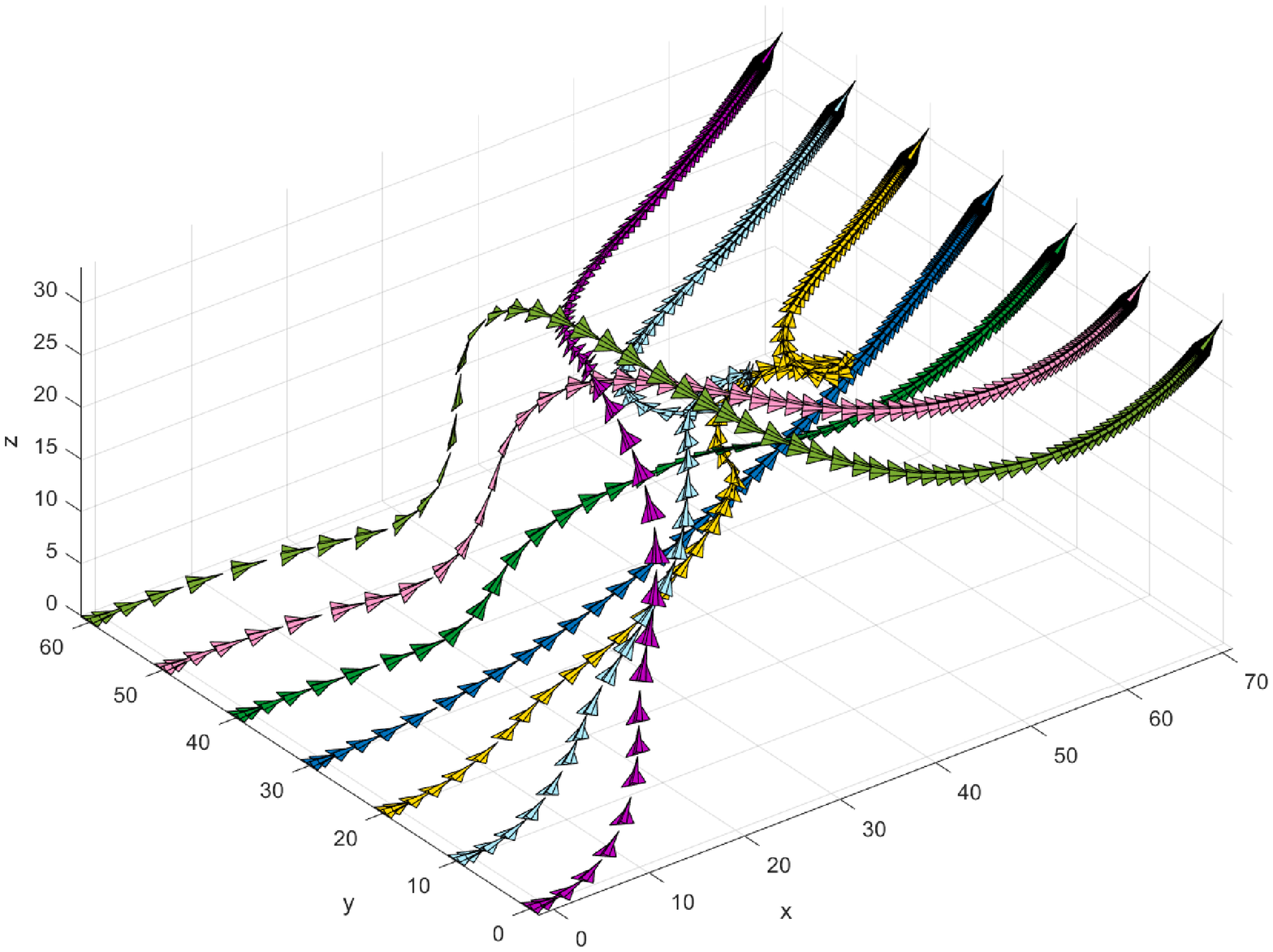}
      \label{fig_seven_UAVs_CA_0}}
  \end{minipage}
  \begin{minipage}[b]{0.4\textwidth}
    \subfigure[Trajectories in $x-z$ plane]{
      \includegraphics[width=0.7\textwidth,trim=50 80 50 80,clip]{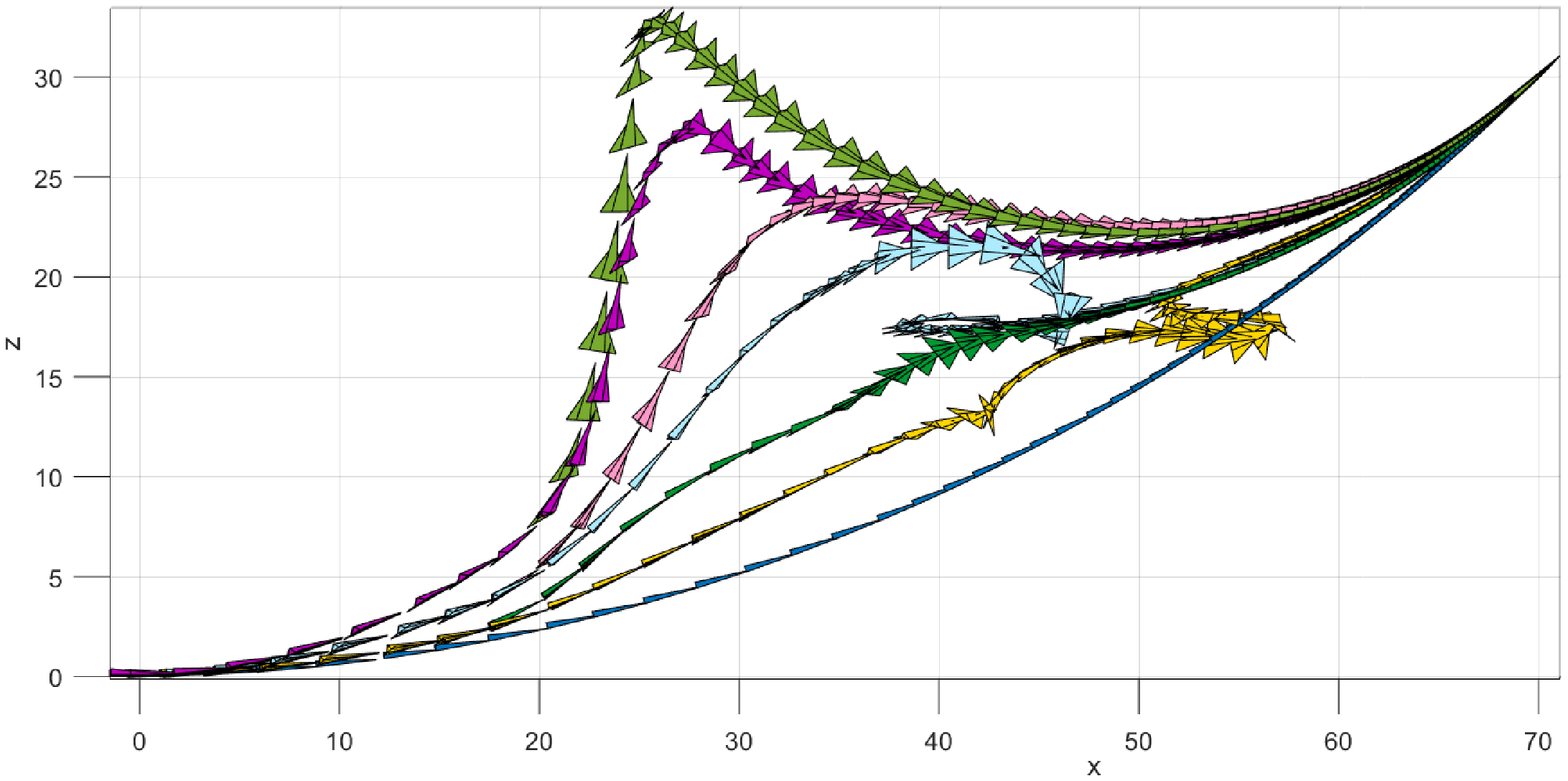}
      \label{fig_seven_UAVs_CA_1}} \\
    \subfigure[Trajectories in $y-z$ plane]{
      \includegraphics[width=0.7\textwidth,trim=50 80 50 80,clip]{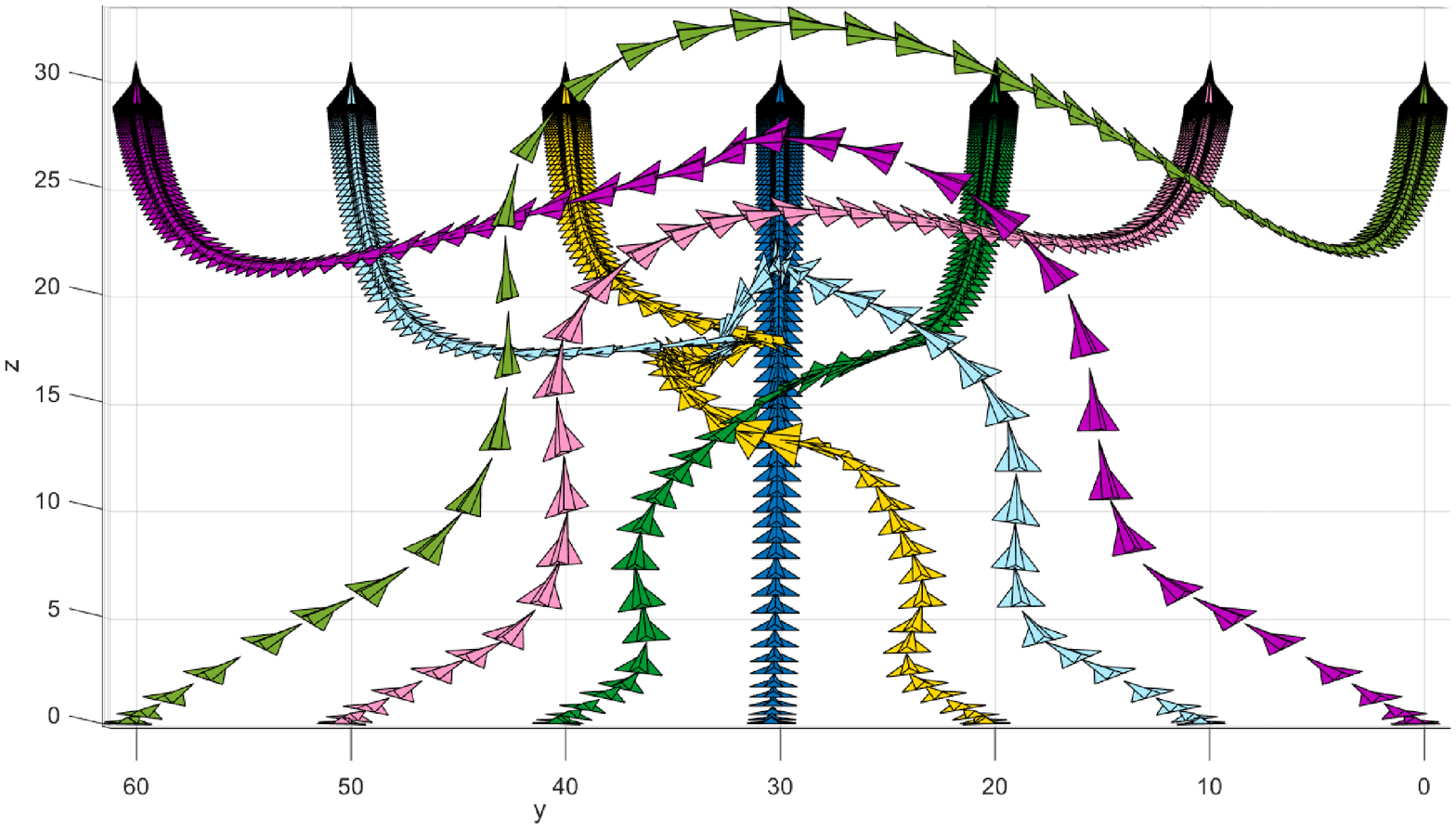}
      \label{fig_seven_UAVs_CA_2}}
  \end{minipage}
  \caption{Trajectories of seven fixed-wing UAVs with collision avoidance.}
  \label{fig_seven_UAVs_CA}
\end{figure*}

\section{Conclusion} \label{sec_conclu}

This paper has proposed a novel velocity vector field for nonholonomic robots in 3D to solve the motion planning problem where robots are required to not only reach the specified target positions but also align with the predefined heading directions. A composite vector field has been further presented to ensure collision avoidance with obstacles and other robots by guaranteeing no penetration of the dangerous area.  In addition, we have proposed a priority-based algorithm to achieve   motion decoupling among multiple nonholonomic robots. Future research will focus on the motion planning problem in more practical scenarios, such as input saturation constraints, path curvature constraints and robust motion with measurement noises.

\appendix

\section{Proof of Lemma~\ref{lem_kine_R_a}}\label{app_A}

By substituting (\ref{eq_auxi_atti_matrix}) and (\ref{eq_dot_auxi_atti_matrix}) into the definition of $\bm{\Omega}_a^\wedge$, we have
    \begin{equation*}
        \bm{\Omega}_a^\wedge=
        \begin{bmatrix}
            \frac{\bm{\zeta}_X^{\rm T}}{\|\bm{\zeta}_X\|} \frac{\rm d}{{\rm d}t}\left(\frac{\bm{\zeta}_X}{\|\bm{\zeta}_X\|}\right) & \frac{\bm{\zeta}_X^{\rm T}}{\|\bm{\zeta}_X\|} \frac{\rm d}{{\rm d}t}\left(\frac{\bm{\zeta}_Y}{\|\bm{\zeta}_Y\|}\right) & \frac{\bm{\zeta}_X^{\rm T}}{\|\bm{\zeta}_X\|} \frac{\rm d}{{\rm d}t}\left(\frac{\bm{\zeta}_Z}{\|\bm{\zeta}_Z\|}\right) \\
            \frac{\bm{\zeta}_Y^{\rm T}}{\|\bm{\zeta}_Y\|} \frac{\rm d}{{\rm d}t}\left(\frac{\bm{\zeta}_X}{\|\bm{\zeta}_X\|}\right) & \frac{\bm{\zeta}_Y^{\rm T}}{\|\bm{\zeta}_Y\|} \frac{\rm d}{{\rm d}t}\left(\frac{\bm{\zeta}_Y}{\|\bm{\zeta}_Y\|}\right) & \frac{\bm{\zeta}_Y^{\rm T}}{\|\bm{\zeta}_Y\|} \frac{\rm d}{{\rm d}t}\left(\frac{\bm{\zeta}_Z}{\|\bm{\zeta}_Z\|}\right) \\
            \frac{\bm{\zeta}_Z^{\rm T}}{\|\bm{\zeta}_Z\|} \frac{\rm d}{{\rm d}t}\left(\frac{\bm{\zeta}_X}{\|\bm{\zeta}_X\|}\right) & \frac{\bm{\zeta}_Z^{\rm T}}{\|\bm{\zeta}_Z\|} \frac{\rm d}{{\rm d}t}\left(\frac{\bm{\zeta}_Y}{\|\bm{\zeta}_Y\|}\right) & \frac{\bm{\zeta}_Z^{\rm T}}{\|\bm{\zeta}_Z\|} \frac{\rm d}{{\rm d}t}\left(\frac{\bm{\zeta}_Z}{\|\bm{\zeta}_Z\|}\right)
        \end{bmatrix}
    \end{equation*}
    In order to prove $\bm{\Omega}_a^\wedge\in\mathfrak{so}(3)$, we have to verify that $\bm{\Omega}_a^\wedge$ is an anti-asymmetric matrix. For the sake of illustration, let $\bm{\Omega}_a^\wedge(i,j)$ denote the entry in $i$-th row and $j$-th column ($i,j=1,2,3$). Regarding the diagonal entries, we take $\bm{\Omega}_a^\wedge(1,1)$ for instance and it can be computed that
    \begin{align*}
        \bm{\Omega}_a^\wedge(1,1)
        &=\frac{\bm{\zeta}_X^{\rm T}}{\|\bm{\zeta}_X\|} \left(\frac{\dot{\bm{\zeta}}_X}{\|\bm{\zeta}_X\|}-(\bm{\zeta}_X^{\rm T}\dot{\bm{\zeta}_X})\frac{\bm{\zeta}_X}{\|\bm{\zeta}_X\|^3}\right) \\
        &=\frac{\bm{\zeta}_X^{\rm T}\dot{\bm{\zeta}_X}-\bm{\zeta}_X^{\rm T}\dot{\bm{\zeta}_X}}{\|\bm{\zeta}_X\|^2} =0,
    \end{align*}
    where \eqref{eq_dot_auxi_unit_vec} is employed. Similarly, we can also obtain that $\bm{\Omega}_a^\wedge(2,2)=0$ and $\bm{\Omega}_a^\wedge(3,3)=0$. Regarding the off-diagonal entries, there holds
    \begin{align*}
        \bm{\Omega}_a^\wedge(1,2)&=\frac{\bm{\zeta}_X^{\rm T}\dot{\bm{\zeta}}_Y}{\|\bm{\zeta}_X\|\|\bm{\zeta}_Y\|}-(\bm{\zeta}_Y^{\rm T}\dot{\bm{\zeta}_Y})\frac{\bm{\zeta}_X^{\rm T}\bm{\zeta}_Y}{\|\bm{\zeta}_X\|\|\bm{\zeta}_Y\|^3} \\
        \bm{\Omega}_a^\wedge(2,1)&=\frac{\bm{\zeta}_Y^{\rm T}\dot{\bm{\zeta}}_X}{\|\bm{\zeta}_Y\|\|\bm{\zeta}_X\|}-(\bm{\zeta}_X^{\rm T}\dot{\bm{\zeta}_X})\frac{\bm{\zeta}_Y^{\rm T}\bm{\zeta}_X}{\|\bm{\zeta}_Y\|\|\bm{\zeta}_X\|^3}
    \end{align*}
    According to \eqref{eq_zeta_XYZ}, $\bm{\zeta}_X$ and $\bm{\zeta}_Y$ are the components of $\bm{F}$ and $\bm{H}$, which are orthogonal to each other based on \eqref{eq_VF_H_xyz}. Thus, we have $\bm{\zeta}_X^{\rm T}\bm{\zeta}_Y=\bm{\zeta}_Y^{\rm T}\bm{\zeta}_X=0$, and further obtain that
    \begin{equation*}
        \bm{\Omega}_a^\wedge(1,2)+\bm{\Omega}_a^\wedge(2,1)=\frac{\bm{\zeta}_X^{\rm T}\dot{\bm{\zeta}}_Y+\bm{\zeta}_Y^{\rm T}\dot{\bm{\zeta}}_X}{\|\bm{\zeta}_X\|\|\bm{\zeta}_Y\|}=\frac{\frac{\rm d}{{\rm d}t}(\bm{\zeta}_X^{\rm T}\bm{\zeta}_Y)}{\|\bm{\zeta}_X\|\|\bm{\zeta}_Y\|}.
    \end{equation*}
    Based on $\bm{\zeta}_X^{\rm T}\bm{\zeta}_Y=0$, there holds $\frac{\rm d}{{\rm d}t}(\bm{\zeta}_X^{\rm T}\bm{\zeta}_Y)=0$, and it follows that
    \begin{equation*}
        \bm{\Omega}_a^\wedge(1,2)+\bm{\Omega}_a^\wedge(2,1)=0.
    \end{equation*}
    Similarly, we can obtain that
    \begin{align*}
        \bm{\Omega}_a^\wedge(1,3)+\bm{\Omega}_a^\wedge(3,1)=0, \\
        \bm{\Omega}_a^\wedge(2,3)+\bm{\Omega}_a^\wedge(3,2)=0.
    \end{align*}
    Hence,  $\bm{\Omega}_a^\wedge$ is an anti-asymmetric matrix, i.e., $\bm{\Omega}_a^\wedge\in\mathfrak{so}(3)$.

\ifCLASSOPTIONcaptionsoff
  \newpage
\fi



\bibliographystyle{IEEEtran}
\bibliography{IEEEabrv,mybibfile}
\end{document}